\documentclass{article}


\usepackage[final,nonatbib]{neurips_2022}




\usepackage{xcolor}

\usepackage[utf8]{inputenc} 
\usepackage[T1]{fontenc}    
\usepackage[backref=page]{hyperref}       
\usepackage{url}            
\usepackage{booktabs}       
\usepackage{amsfonts}       
\usepackage{nicefrac}       
\usepackage{microtype}      
\usepackage{xcolor}         

\usepackage{amsmath,amssymb,mathrsfs,amsthm,mathtools}
\usepackage[capitalise]{cleveref}
\usepackage{algorithm}
\usepackage{algorithmic}
\usepackage{amsmath,graphicx}
\usepackage[american]{babel}
\usepackage{color}
\usepackage{amssymb}
\usepackage{multirow}
\usepackage{amsfonts,mathrsfs}
\usepackage{bm,url}
\usepackage{booktabs,subfigure}
\usepackage{amsthm}
\usepackage{microtype}
\usepackage{multirow}
\usepackage{threeparttable}
\usepackage{booktabs} 
\usepackage{float}
\usepackage{wrapfig}
\usepackage{float}
\usepackage{tikz} 
\usepackage{schemata}
\newcommand\AB[2]{\schema{\schemabox{#1}}{\schemabox{#2}}}
\usepackage{enumitem}

\usepackage{selectp}


\definecolor{fhcolor}{rgb}{0.523, 0.235, 0.625}


\DeclareMathOperator*{\argmin}{argmin}

\newcommand{\sind}[3]{{#1}^{#2}_{#3}} 

\newtheorem{definition}{Definition}

\newtheorem{theorem}{Theorem}
\newtheorem{lemma}{Lemma}
\newtheorem{proposition}{Proposition}
\newtheorem{corollary}{Corollary}

\newtheorem{assumption}{Assumption}

\usepackage[framemethod=tikz]{mdframed}

\definecolor{ocre}{RGB}{243,102,25}
\definecolor{mygray}{RGB}{243,243,244}

\newmdenv[
innertopmargin=0pt,
backgroundcolor=mygray,
linecolor=ocre,
innerleftmargin=0pt,
innerrightmargin=0pt,
leftmargin=10pt
]{mymath}

\title{Understanding Deep Neural Function Approximation in Reinforcement Learning via $\epsilon$-Greedy Exploration}

\author{%
	Fanghui Liu\thanks{Correspondence to: Fanghui Liu \texttt{<fanghui.liu@epfl.ch>} and Luca Viano\texttt{<luca.viano@epfl.ch>}.}, Luca Viano, Volkan Cevher \\
	Laboratory for Information and Inference Systems\\
	 \'{E}cole Polytechnique F\'{e}d\'{e}rale de Lausanne (EPFL), Switzerland\\
	 \texttt{\{first\}.\{last\}@epfl.ch} \\
}

\begin{document}
	\maketitle
	\begin{abstract}
This paper provides a theoretical study of deep neural function approximation in reinforcement learning (RL) with the $\epsilon$-greedy exploration under the online setting.
This problem setting is motivated by the successful deep Q-networks (DQN) framework that falls in this regime.
In this work, we provide an initial attempt on theoretical understanding deep RL from the perspective of function class and neural networks architectures (e.g., width and depth) beyond the ``linear'' regime.
To be specific, we focus on the value based algorithm with the $\epsilon$-greedy exploration via deep (and two-layer) neural networks endowed by Besov (and Barron) function spaces, respectively, which aims at approximating an $\alpha$-smooth Q-function in a $d$-dimensional feature space. 
We prove that, with $T$ episodes, scaling the width $m = \widetilde{\mathcal{O}}(T^{\frac{d}{2\alpha + d}})$ and the depth $L=\mathcal{O}(\log T)$ of the neural network for deep RL is sufficient for learning with sublinear regret in Besov spaces. 
Moreover, for a two layer neural network endowed by the Barron space, scaling the width $\Omega(\sqrt{T})$ is sufficient.
To achieve this, the key issue in our analysis is how to estimate the temporal difference error under deep neural function approximation as the $\epsilon$-greedy exploration is not enough to ensure ``optimism''.
Our analysis reformulates the temporal difference error in an $L^2(\mathrm{d}\mu)$-integrable space over a certain averaged measure $\mu$, and transforms it to a generalization problem under the non-iid setting. This might have its own interest in RL theory for better understanding $\epsilon$-greedy exploration in deep RL.

	\end{abstract}
	
\section{Introduction}
\vspace{-0.1cm}

Efficient reinforcement learning (RL) under the large (or even infinite) state space and action space setting is increasingly important and relevant challenge \cite{szepesvari2010algorithms,sutton2018reinforcement,fujimoto2018addressing}. One of the first successful approaches towards this problem is
the deep Q-network (DQN) \cite{mnih2015human,silver2016mastering} framework, which deploys powerful nonlinear function approximation techniques via Deep Neural Networks (DNNs) \cite{lecun2015} to concisely approximate state and action spaces. 
Despite its impressive practical success, there is still a gap between practical uses and theoretical understanding on deep RL with regard to the function class and the employed $\epsilon$-greedy policy.

In the perspective of function class, many theoretical works center around linear function approximation \cite{jin2020provably,wang2020optimism} and linear mixtures \cite{ayoub2020model,zhou2021nearly}.
Existing non-linear function approximation results on RL are largely based on neural tangent kernel (NTK) \cite{jacot2018neural,yang2020function}, Bellman rank \cite{jiang2017contextual,dong2020root}, and Eluder dimension \cite{russo2013eluder,wang2020reinforcement,ishfaq2021randomized}.
Nevertheless, these approaches fail in truly capturing the highly non-linear properties of deep RL. For example, NTK (or lazy training \cite{chizat2019lazy}) essentially works in a ``linear'' regime \cite{lee2019wide,woodworth2020kernel,geiger2020disentangling}, and can not efficiently learn even a single ReLU neuron \cite{bach2017breaking,yehudai2019power,celentano2021minimum} as it requires  $\Omega(\varepsilon^{-d})$ samples to achieve $\varepsilon$ approximation error, where $d$ is the (original) or transformed feature dimension input;
the Bellman rank is normally difficult to be estimated for neural networks as suggested by \cite{huang2021going};
the Eluder dimension is at least in an exponential order \cite{li2021eluder,dong2021provable} even for two-layer neural networks.
The above general function approximation schemes appear difficult to fully demonstrate the success of practical deep RL both theoretically and empirically.

In the perspective of exploration schemes, DQN is directly equipped with the $\epsilon$-greedy policy instead of confidence-bound based scheme that are commonly used in RL theory. 
The $\epsilon$-greedy exploration is theoretically demonstrated to have exponential sample complexity in the worst case \cite{osband2019deep} but is still popular in practical deep RL due to its simple implementation.
In this case, theoretical analyses of $\epsilon$-greedy in deep RL are still required. 
Besides, to ensure a sublinear regret, under the NTK regime, the width of neural networks is required to be $m=\Omega(T^{13})$ \cite{yang2020function}, where $T$ is the number of episodes. This does not match deep RL in practice with small width/depth under large episodes \cite{mnih2015human,hester2018deep}.

To bridge the large theory-practice gap, we study the value iteration algorithm with deep neural function approximation and the $\epsilon$-greedy policy under the online setting, which broadly captures the key features of DQN.
Our analysis framework is based on DNNs (as well as two-layer neural networks) where the target Q function lies in the Besov space \cite{suzuki2019adaptivity} or the Barron space \cite{weinan2021barron}, respectively.
These function classes can fully capture the properties of Q-functions, e.g., smoothness by neural networks. 
Our results demonstrate that the sublinear regret can be achieved for deep neural function approximation under the $\epsilon$-greedy exploration with reasonably finite width and depth in practice.
Besides, the relationship between the problem-dependent smoothness of Q-function and regret bounds is also developed. 
These results could also motivate practitioners to consider different architectures of implementations of deep RL.

\vspace{-0.0cm}
\subsection{Technical challenges and contributions}
\vspace{-0.0cm}

Most previous RL theory results on function approximation in the online setting work with ``optimism in the face of uncertainty'' principle for exploration, leading to a series of upper confidence bound (UCB)-type algorithms to ensure the temporal difference (TD) error smaller than zero. 

Conceptually, optimism is sometimes too aggressive and UCB-style algorithms can suffer exponential sample complexity even for nonlinear bandits \cite{dong2021provable}. 
Technically, UCB-type algorithms in linear/kernel function approximation \cite{jin2020provably,yang2020function,yang2020reinforcement} depend on a known feature mapping or the NTK kernel, which appears invalid for deep neural function approximation beyond the ``linear'' regime. This is because, the used confidence ellipsoid and elliptical potential lemma are not applicable for data-dependent feature mapping of DNNs.
To avoid explicitly designing a bonus function, Thompson sampling \cite{russo2018tutorial,agrawal2012analysis} appears promising in a Bayesian perspective by using randomized (i.e., perturbed) versions of the estimated model or value function \cite{zanette2020frequentist}. 
Nevertheless, the bonus function is still implicitly included in confidence estimate of perturbations.

In this work, we center around deep neural function approximation with the $\epsilon$-greedy exploration.
Since this exploration scheme is not enough to ensure the TD error smaller than zero, the technical challenge in our analysis is how to estimate it to ensure the sublinear regret. 
In our proof framework, by a measure transform, the TD error is analysed in an $L^2(\mathrm{d} \bar{\mu})$-integrable space, where $\bar{\mu}$ is the averaged measure wrt a mini-batch of historical state-action pairs.
To break the dependence between the episodes for neural networks training, we utilize the \emph{experience replay} scheme \cite{lin1992self} from DQN, and then transform the TD error estimation to generalization error under the independent but non-identically distributed data setting and approximation error in the respective function spaces. Note that in practice, experience replay makes observations to be (nearly) iid, but our analysis only requires the independence of observations, that is weaker than iid.
Such generalization problem can be addressed by uniform convergence via (local) Rademacher complexity of the Besov/Barron spaces under the averaged measure.
This considered function spaces in this work is more general than H\"{o}lder spaces used in offline RL \cite{fan2020theoretical}. 

Our results show that (\textit{i}) the problem-dependent smoothness of Q-function affects the efficiency of learning with deep RL, which can be improved by increasing the model capacity (width and depth). We use $\alpha$ as a parameter indicating the smoothness degree of Q-function. A larger $\alpha$ indicates smoother functions, easier RL tasks, and smaller exploration times, which coincides with our theory. 
(\textit{ii}) for deep neural networks under the Besov space, the width $m = \widetilde{\mathcal{O}}(T^{\frac{d}{2\alpha + d}})$ and the depth $L=\widetilde{\mathcal{O}}(1)$ are enough for sublinear regret under the $\epsilon$-greedy policy, where $\widetilde{\mathcal{O}}(\cdot)$ omits the $\log$ terms.
(\textit{iii}) for two-layer neural networks under the Barron space, the width $m = \Omega(\sqrt{T})$ suffices to ensure sublinear regret.
Furthermore, our regret bounds can be independent of the feature dimension, supporting the premise of practical, high-dimensional data in RL.

\vspace{-0.2cm}
\subsection{Related work}
\vspace{-0.2cm}

Recent work on neural network function approximation beyond NTK (or the Eluder dimension) mainly restrict on the generative setting \cite{huang2021going,long2021L2} by assuming a simulator in which the agent can require any state and action, and the offline setting \cite{fan2020theoretical,nguyen2021sample}. In sequel, we review RL with function approximation under the online setting that DQN falls into this regime. 
We also mention that, theoretical understanding of DQN can be conducted by from the perspective of neural fitted Q-iteration algorithm \cite{riedmiller2005neural,fan2020theoretical,xu2020finite}, and Q learning \cite{jin2018q} in the perspective of understanding the target network \cite{zanette2022stabilizing} and experience replay \cite{carvalho2020new,agarwal2021online,szlak2021convergence} with linear function approximation.
Note that, for notational consistency with previous work, in this subsection, $T$ denotes the total number of steps (i.e., interactions with the environment) instead of the number of episodes in our paper.

{\bf RL with linear/kernel function approximation:} RL with linear function approximation achieves a sublinear regret bound with $\widetilde{\mathcal{O}}(\sqrt{d^3H^3 T})$ under a low-rank MDP in a model-free setting \cite{jin2020provably} and $\widetilde{\mathcal{O}}(dH^2\sqrt{T})$ in a model-based setting \cite{yang2020reinforcement}, where $H$ is the length of each episode.
The regret can be improved to $\widetilde{\mathcal{O}}(dH\sqrt{T})$ under a low inherent Bellman error by assuming a global planning oracle \cite{zanette2020learning} or under a Bernstein-type exploration bonus and controlling extra uniform convergence cost \cite{hu2022nearly}. 
This nearly optimal regret can be also achieved under the linear mixtures setting \cite{zhou2021nearly}.
In the kernel regime, the regret can be achieved with $\widetilde{\mathcal{O}}({\delta}_{\mathcal{F}}\sqrt{H^3T})$ \cite{yang2020reinforcement,yang2020function}, where ${\delta}_{\mathcal{F}}$ is the intrinsic complexity (e.g., effective dimension) of the function class RKHS $\mathcal{F}$. The above bounds are based on confidence ellipsoid to quantify the uncertainty in an explicit bonus function by feature mapping/kernel function; while Thompson sampling \cite{agrawal2012analysis,russo2018tutorial} utilizes an implicit bonus function in probability estimation on uncertainty quantification, which leads to an $\widetilde{\mathcal{O}}(d^2H^2\sqrt{T})$ \cite{zanette2020frequentist} regret in linear function approximation.

{\bf RL with general function approximation:}  One prototypical scheme uses the Eluder dimension \cite{russo2013eluder}, which measures the degree of dependence among action rewards, resulting in an $\widetilde{\mathcal{O}}({\tt poly}({\delta}_{\mathcal{F}} H) \sqrt{T})$ regret \cite{wang2020reinforcement,ishfaq2021randomized}, where the complexity ${\delta}_{\mathcal{F}}$ depends on the Eluder dimension. 
Using this metric, the sublinear regret under the $\epsilon$-greedy exploration can be achieved by \cite{dann2022guarantees}.
Besides, the low Bellman rank assumption \cite{jiang2017contextual}, where the Bellman error ``matrix'' admits a low-rank factorization, can be also used general function approximation \cite{dong2020root} by measuring the error of the function class under the Bellman operator. 
Combining Bellman rank and Eluder dimension results in a new metric, Bellman Eluder dimension \cite{jin2021bellman}, achieving $\widetilde{\mathcal{O}}(H\sqrt{{\delta}_{\mathcal{F}} T})$-regret, where ${\delta}_{\mathcal{F}}$ depends on this metric.

Overall, the above metrics are difficult to the nonlinear spaces of DNNs beyond ``linear'' regime that concern us. 

\section{Background and preliminaries}

In this section, we introduce the necessary background and
definitions with respect to online reinforcement learning based on episodic Markov decision processes (MDPs) and function spaces of deep (and two-layer) ReLU neural networks. 

{\bf Notation:} We denote by $a(n) \lesssim b(n)$: there exists a positive constant $c$ independent of $n$ such that $a(n) \leqslant c b(n)$; $a(n) \asymp b(n)$: there exists two positive constant $c_1$ and $c_2$ independent of $n$ such that $c_1 b(n) \leqslant a(n) \leqslant c_2 b(n)$. 
We use the shorthand $ [n]:= \{1,2,\dots, n \}$ for some positive $n$ and $\lceil x \rceil$ denotes the smallest integer exceeding $x$. 
Let $\mathcal{X} =[0,1]^d$ be a domain of the functions, we denote the $L^p$-integrable space by $L^p(\mathcal{X})$ endowed by the norm $\|f\|_{L^{p}(\mathcal{X})} = \big( \int_{\mathcal{X} } |f(\bm x)|^{p} \mathrm{d} \bm x \big)^{1/p} $, and the $\mu$-integrable $L^p$ space by $L^p(\mathrm{d} \mu)$  for a probability measure $\mu$ on $\mathcal{X}$ and the norm is given by $\|f\|_{L^{p}(\mathrm{d}\mu)} = \big( \int_{\mathcal{X} } |f(\bm x)|^{p} \mathrm{d} \mu \big)^{1/p} $.

\subsection{Episodic Markov decision processes} 

A (finite-horizon) episodic MDPs is denoted as $\text{MDP}(\mathcal{S}, \mathcal{A}, H, \mathbb{P}, r)$,
where $\mathcal{S}$ is the state space with possibly infinite states; $\mathcal{A}$ is the finite action space; 
$H$ is the number of steps in each episode; $\mathbb{P} := \{ \mathbb{P}_h \}_{h=1}^H$ is the Markov transition kernel with the transition probability  $\mathbb{P}_h ( \cdot | s, a) $ on action $a$ taken at state $s \in \mathcal{S}$ in the $h$-th step; the reward functions $r := \{ r_h \}_{h=1}^H $ are assumed to be deterministic. 
For notational simplicity, denote $\mathcal{X}= \mathcal{S} \times \mathcal{A}$ and $\bm x=(s,a)$, we assume $\mathcal{X}=[0,1]^d$ as a compact space of $\mathbb{R}^d$ and $r_h: \mathcal{S} \times \mathcal{A} \rightarrow [0,1]$ at $h$-th step. 

A non-stationary policy $\pi$ is a collection of $H$ functions $\pi:= \{ \pi_h: \mathcal{S} \rightarrow {\mathcal{A}} \}_{h=1}^H$.
Given a policy $\pi$, the (state) value function $V_h^{\pi}\colon \mathcal{S} \to [0,H]$ is defined as the expected cumulative reward of the MDP starting from step $h \in [H]$, i.e.,  $	V_h^\pi(s) =  \mathbb{E}_{\pi} \big[\sum_{h' = h}^H r_{h'}(s_{h'},  a_{h'} )  
\big|  s_h = s \big], \forall s\in \mathcal{S}, h \in [H]$
where $\mathbb{E}_{\pi}[\cdot]$ denotes the expectation with respect to the 
randomness of the trajectory $\{(s_h,a_h)\}_{h=1}^H$ obtained by
the  policy $\pi $. Likewise, the action-value function  
$Q_h^\pi:\mathcal{S} \times \mathcal{A} \to [0,H]$ is defined as $	Q^{\pi}_h(s,a) =\mathbb{E}_{\pi} \big[ \sum_{h'=h}^H r_{h'} (s_{h'},a_{h'} ) 
\,\big|\, s_h=s,\,a_h=a  \big]$.

Moreover, since the action space and episode length are both finite, there always exists an optimal policy $\pi^\star$ \cite{puterman2014markov} such that $V^{\star}_{h}(s) = \sup_{\pi} V_h^\pi(s)$ 	for all $s\in \mathcal{S}$ and $h\in [H]$. To simplify the notation,  denote
$	(\mathbb{P}_h V  ) (s, a) := \mathbb{E}_{s' \sim \mathbb{P}_h(\cdot | s, a)} [V (s')]
$ and the Bellman operator $ (\mathbb{T}_h V)(s, a) = r_h(s,a) + (\mathbb{P}_h V  ) (s, a)$
for any measurable function $V \colon \mathcal{S} \rightarrow [ 0, H] $.
Using this notation, the Bellman equation associated with a policy $\pi$ can be formulated as
\begin{align} \label{eq:bellman} 
	\sind{Q}{\pi}{h}(s, a) = (\mathbb{T}_h \sind{V}{\pi}{h+1})(s, a)  , \qquad   V_{h}^\pi (s) = \langle Q_h^{\pi} (s, \cdot ), \pi_{h}(\cdot | s)  \rangle_{\mathcal{A}} ,     
	\qquad \sind{V}{\pi}{H+1}(s) = 0 \,.
\end{align}
Similarly, the Bellman optimality equation is given by 
\begin{align}\label{eq:opt_bellman}
	\sind{Q}{\star}{h}(s, a) = (\mathbb{T}_h \sind{V}{\star}{h+1})(s, a) , \qquad 
	\sind{V}{\star}{h}(s) = \max_{a\in\mathcal{A}}\sind{Q}{\star}{h}(s, a), \qquad \sind{V}{\star}{H+1}(s) = 0\,.
\end{align}
Accordingly, the optimal policy $\pi^\star$ is the greedy policy with respect to $\{ Q^\star _h \}_{h=1}^H$. 
Hence the Bellman optimality operator $\mathbb{T}_h^\star$ is defined as
\begin{equation*}
	( \mathbb{T}_h^{\star} Q ) (s_h,a_h) = r_h(s_h,a_h) + \mathbb{E}_{s_{h+1} \sim \mathbb{P}_h(\cdot | s_h, a_h)} [\max_{a \in \mathcal{A}} Q(s_{h+1},a)],
	\qquad~ \forall~ Q: \mathcal{S}\times \mathcal{A} \rightarrow [0,H]\,.
\end{equation*}
By definition, the Bellman equation in Eq.~\eqref{eq:opt_bellman} is equivalent to  $Q_h ^\star = \mathbb{T}_h^\star Q_{h+1} ^\star$,  $\forall h \in [H]$.  

In the {\bf online setting}, the goal is to learn the optimal policy $\pi^\star$ by minimizing the cumulative regret under the interaction with the environment over a number of episodes. 
For any policy  $\pi$, the difference between $V_1^\pi$ and $V_1^\star$ quantifies its sub-optimality. 
Thus, after $T$ (fixed but large) episodes, the total (expected) regret is defined as $\text{Regret}(T) = \sum_{t=1}^T \bigl [\sind{V}{\star}{1} (s^t_1) - \sind{V}{\tilde{\pi}^t}{1} (s^t_1)\bigr]$, where $\tilde{\pi}^t$ is the policy executed in the $t$-th episode and   $s_1^t$ is  the initial state.

\subsection{Function spaces}
\label{sec:funcspace}

We give an overview of Besov spaces for deep neural networks and the Barron space for two-layer neural networks. More details refer to Appendix~\ref{app:pribesov}. 
For description simplicity, we focus on the ReLU activation function in this work.

{\bf Besov spaces:} Previous work in approximation theory focuses on the ``smoothness'' of the function, e.g.,  H\"{o}lder spaces \cite{chen2019efficient,fan2020theoretical} and Sobolev spaces \cite{abdeljawad2020approximations}.
Here we consider the concept of $\alpha$-smooth from modulus of smoothness \cite{suzuki2019adaptivity}, \textit{cf.}, \cref{app:pribesov}.

Based on this, we consider a more general function space beyond H\"{o}lder spaces and Sobolev spaces, i.e., Besov spaces \cite{sawano2018theory,suzuki2019adaptivity}, which allows for spatially inhomogeneous smoothness with spikes and jumps.
The Besov space is defined by $\mathcal{B}^\alpha_{p,q}(\mathcal{X}) = \{f \in L^p(\mathcal{X}) \mid \|f\|_{\mathcal{B}_{p,q}^\alpha} < \infty\}$, where the Besov norm is $\|f\|_{\mathcal{B}_{p,q}^\alpha} := \|f\|_{L^p(\mathcal{X})} + |f|_{\mathcal{B}^\alpha_{p,q}}$.
The smoothness parameter $\alpha$ indicates which function at a certain smoothness degree can be represented.
For example, if $\alpha > d/p$, then the related Besov space is continuously embedded in the set of the continuous functions; if $\alpha < d/p$, then the functions in the Besov space are no longer continuous.
The formal definition and relations to H\"{o}lder spaces and Sobolev spaces are deferred to Appendix~\ref{app:pribesov}.

{\bf Barron spaces:} 
A two-layer neural network with $m$ neurons can be represented as $	f(\bm x) = \frac{1}{m} \sum_{k=1}^m b_k\sigma(\bm w_k^{\!\top} \bm x + c_k)$ with the ReLU activation function $\sigma(\cdot)$ used in this work and the neural network parameters $\{ b_k, \bm w_k, c_k \}_{k=1}^m$.
It admits the integral representation $f(\bm x)=\int_{\Omega} b \sigma\left(\bm w^{\!\top} \bm x + c \right) \rho(\mathrm{d} b, \mathrm{d} \bm w, \mathrm{d} c)$, $\bm x \in \mathcal{X}$, where $\Omega = \mathbb{R} \times \mathbb{R}^d \times \mathbb{R}$ and $\rho$ is a probability measure over $\Omega$.
Then the Barron space \cite{weinan2021barron} endowed by the Barron norm is defined as
\begin{equation*}
	\widetilde{\mathcal{P}} = \left\{ f~~\mbox{admits \cref{eq:fbarron}}: \| f \|_{\widetilde{\mathcal{P}}} = \inf_{\rho} \left\{ \mathbb{E}_{\rho} |b| (\| \bm w \|_1 + |c|) \right\} < \infty \right\}\,.
\end{equation*}
The Barron space $\widetilde{\mathcal{P}}$ \cite{weinan2021barron} can be (roughly) equipped with the $\ell_1$-path norm, i.e., $\| f \|_{\widetilde{\mathcal{P}}} \leqslant  \| f \|_{\mathcal{P}}:=\frac{1}{m} \sum_{k=1}^m |b_k| (\| \bm w_k \|_1 + c_k) \leqslant 2 \| f \|_{\widetilde{\mathcal{P}}}$. Accordingly, it is natural to use $\| f \|_{\mathcal{P}}$ to denote the Barron norm, as the discrete version.

The Barron space \cite{weinan2021barron} can be regarded as the \emph{largest} function space for two-layer ReLU neural networks.
Here the ``largest'' terminology \cite{weinan2021barron,weinan2020representation} means that the approximation ability can avoid \emph{curse of dimensionality}, i.e., 1) any function in Barron spaces can be efficiently approximated by two-layer neural networks with bounded norm; 2) any continuous function that can be efficiently approximated by two-layer neural networks with bounded norm belongs to a Barron space. 

We remark that, avoiding curse of dimensionality is important in theory for practical high-dimensional data in RL. However, Besov spaces are too large and thus do not enjoy this property for deep ReLU neural networks.

\section{Algorithm: Value iteration via DNNs under $\epsilon$-greedy exploration}

In this section, we lay out our algorithm~\ref{algo:nnfacounter} via value iteration by DNNs under the $\epsilon$-greedy policy.
Though our value iteration algorithm is different from one gradient-step for deep Q-learning in DQN, it still shares the key spirit with DQN in terms of function approximation via DNNs, $\epsilon$-greedy exploration, and experience replay.

{\bf Function class:} We define the function class $\mathcal{F}$ given by $\mathcal{F} = \mathcal{F}_1 \times \cdots \times \mathcal{F}_H$, including $\mathcal{F}_{\tt SNN}$ for two-layer (Shallow) ReLU neural networks and $\mathcal{F}_{\tt DNN}$ for deep ReLU neural networks as below
\begin{equation}\label{eq2nndef}
	\mathcal{F}_{\tt SNN} =\left\{ f: [0,1]^d \rightarrow [0,H] \Big| f(\bm x) = \frac{1}{m} \sum_{k=1}^m b_k \sigma(\bm w_k^{\!\top} \bm x + c_k), \| f \|_{\mathcal{P}} \leqslant B \right\} \,,
\end{equation}
where $B>0$ is the $\ell_1$-path norm constraint parameter, and deep ReLU neural networks \cite{suzuki2019adaptivity} as
\begin{equation}\label{eqdnndef} 
	\begin{split}
		\mathcal{F}_{\tt DNN}(L,m,S,B)
		& := \bigg\{ f: [0,1]^d \rightarrow [0,H] \Big|	f(\bm x)  = (\bm W^{(L)} \sigma( \cdot) + b^{(L)}) \circ \dots  
		\circ (\bm W^{(1)} \bm x + \bm b^{(1)}) \,, \\
		& \qquad \quad \sum_{i=1}^L (\| \bm W^{(i)} \|_0+ \| \bm b^{(i)} \|_0) \leqslant S,~~
		\max_{i} (\| \bm W^{(i)} \|_\infty \vee \| \bm b^{(i)} \|_\infty) \leqslant B \bigg\}\,,
	\end{split}
\end{equation}
where the weight parameters are $\bm W^{(1)} \in \mathbb{R}^{m \times d}$, $ \bm W^{(l)} \in \mathbb{R}^{m \times m},~\forall l \in \{ 2,3,\dots,L-1\}$, and $\bm W^{(L)} \in \mathbb{R}^m$; the bias parameter are $\bm b^{(l)} \in \mathbb{R}^m,~\forall l \in [L-1]$ and $b^{(L)} \in \mathbb{R}$.
Such sparsely-connected neural networks require most of the network parameters to be zero or non-active, which can be verified \cite{hanin2019deep}.
The depth $L$, the width $m$, the sparsity parameter $S$ and the norm parameter $B$ can be determined later in our proof to achieve good approximation and estimation performance.

{\bf Experience replay:} In our setting, after initialization, at $t$-th episode, at $h$-th time step, we have observed $t -1$ transition tuples, $\{ ( s_h^{\tau}, a_h^{\tau}, s_{h+1}^{\tau} ) \}_{\tau = 1}^{t-1}$ and attempt to estimate $\{ Q^{\star}_h\}_{h=1}^H$ via DNNs.
Note that, at each time step $h$, these $t-1$ transition tuples are neither independent nor identically distributed due to the interaction with value functions and stochastic transition.
To pursue the independence among the transition tuples that is required in our analysis, we follow the \emph{experience replay} scheme \cite{lin1992self} that is successfully applied in DQN \cite{mnih2015human}.
The intuition behind experience replay is to break (or weaken) the temporal dependency among the observations for neural networks training.
When the replay memory is large (e.g., $10^6$ in DQN \cite{mnih2015human}), experience replay is close to sampling independent transitions. 
To be specific, at $t$-th episode, we store transition $\{ (s_h^t, a_h^t, r_h, s^t_{h+1}) \}_{h=1}^H$ in the replay memory $\mathcal{D}$, and then sample a mini-batch of
\emph{independent} observations from $\mathcal{D}$ with $\{ (s_h^{\tau_j}, a_h^{\tau_j}, s_{h+1}^{\tau_j} )  \}_{(j,h) \in [\tilde{t}] \times [H]}$ for DNNs training.
Here the number of mini-batch is denoted as $\tilde{t} := \lceil \varrho t \rceil$ with the mini-batch ratio $\varrho \in (0,1)$, and $\{ \tau_j \}_{j=1}^{\tilde{t}}$ is the index for the mino-batch of $\tilde{t}$ independent samples.
Note that such independence assumption from experience replay is also used in RL theory, e.g., \cite{fan2020theoretical,carvalho2020new} and theoretically demonstrated to be a good de-correlator \cite{di2022analysis}.
In fact, our analysis only requires independence via experience replay, which is still weaker than the standard iid assumption.

\begin{algorithm}[t]
	\caption{Value Iteration via DNNs under $\epsilon$-greedy exploration with experience replay}\label{algo:nnfacounter}
	\begin{algorithmic}[1]
		\STATE{\textbf{Input:} Function class $\mathcal{F}$, the number of episodes $T$, the $\epsilon$-greedy parameter $\epsilon \in (0,1)$, mini-batch ratio $\varrho \in (0,1)$.}
		\STATE{Initialize replay memory $\mathcal{D}$.}
		\FOR{episode $t = 1, \ldots, T$}
		\STATE{Receive the initial state $s^t_1$.}
		\STATE{Set $V_{H+1}^t$ as the zero function.}
		\STATE{Set the minibatch size $\tilde{t} := \lceil \varrho t \rceil$ for experience replay.}
		\FOR{step $h = H, \ldots, 1$}
		\STATE Obtain $\widehat{Q}^t_h := \argmin_{f \in \mathcal{F}} \sum_{j = 1}^{\tilde{t}} \left[ f(s_h^{\tau_j}, a_h^{\tau_j}) - r_h(s_h^{\tau_j}, a_h^{\tau_j}) - V^t_{h+1}(s_{h+1}^{\tau_j}) \right]^2 $.
		\STATE Obtain $Q^t_h := \widehat{Q}^t_h$ and $V_h^t (\cdot) = \max_{a\in \mathcal{A}} Q_h^t(\cdot, a)$.
		\ENDFOR
		\STATE 	//{\tt $\epsilon$-greedy for exploration} \\
		\STATE Take the policy $\{ \tilde{\pi}_h^t\}_{h=1}^H$ to be greedy policy with probability $1- \epsilon$ or any policy with probability $\epsilon$. 
		\FOR{step $h = 1, \ldots, H$}
		\STATE 	Take ${a}^t_h \sim \tilde{\pi}_h^t (\cdot | s_h^t)$ .
		\STATE Observe the reward $r_h(s_h^t, {a}_h^t)$ and obtain the next state $s^t_{h+1}$. 
		\ENDFOR
		\STATE 	//{\tt experience replay} \\
		\STATE{Store transition $\{ (s_h^t, a_h^t, r_h, s^t_{h+1}) \}_{h=1}^H$ in $\mathcal{D}$.}
		\STATE{Sample random mini-batch of transitions from $\mathcal{D}$ with $\tilde{t}$ pairs $\{ (s_h^{\tau_j}, a_h^{\tau_j}, s_{h+1}^{\tau_j} )  \}_{(j,h) \in [\tilde{t}] \times [H]}$.}
		\ENDFOR
	\end{algorithmic}
\end{algorithm}	
\vspace{-0.2cm}

{\bf Value iteration via neural networks:}
In our algorithm, we apply the classical least squares value iteration via neural networks for value function learning \cite{osband2019deep}.
We solve the following least squares regression problem via $\tilde{t}$ independent samples
\begin{equation}\label{eqerm}
	\widehat{Q}^t_h = \argmin_{f \in \mathcal{F}} \widehat{\mathcal{E}}^t_h(f) := \frac{1}{\tilde{t}} \sum_{j = 1}^{\tilde{t}} \left[ f(s_h^{\tau_j}, a_h^{\tau_j}) - r_h(s_h^{\tau_j}, a_h^{\tau_j}) - V^t_{h+1}(s_{h+1}^{\tau_j}) \right]^2\,.
\end{equation}
For ease of simplicity for analyses, we directly assume that the global minima solution of problem~\eqref{eqerm} can be obtained, that follows \cite{schmidt2020nonparametric,chen2019efficient,suzuki2021deep} in deep learning theory.
Nevertheless, our result could be extended to allow small optimization error in each episode that will be discussed in \cref{sec:proofoutline}. 

Besides, we also need the expectation version of $\widehat{\mathcal{E}}^t_h$ in problem~\eqref{eqerm} for our analysis.
Formally, we assume each state-action pair in the mini-batch is sampled from a respective (unknown) probability measure, i.e., $ (s_h^{\tau_j}, a_h^{\tau_j}) \sim \mu_h^{\tau_j}, \forall j \in [\tilde{t}]$, where $\mu^{\tau_j}_h \in \mathscr{P}(\mathcal{S} \times \mathcal{A})$ is from the collection of all probability distribution on $\mathcal{S} \times \mathcal{A}$. 
Taking the averaged measure $\bar{\mu}^{\tilde{t}}_h := \frac{1}{\tilde{t}} \sum_{j=1}^{\tilde{t}} \mu_h^{\tau_j} $, the expectation of $\widehat{\mathcal{E}}^t_h$ is defined as
\begin{equation}\label{eqexpecterm}
	\mathcal{E}^{t}_h(f) = \mathbb{E}_{(s_h, a_h) \sim \bar{\mu}^{\tilde{t}}_h, s_{h+1} \sim \mathbb{P}_h(\cdot| s_h, a_h) } \left[ f(s_h, a_h) - r_h(s_h, a_h) - V^t_{h+1}(s_{h+1}) \right]^2\,.
\end{equation}
Note that, $\widehat{Q}^t_h$ in Eq.~\eqref{eqerm} is not an unbiased estimator of the squared Bellman error minimizer \cite{antos2008learning,duan2021risk}.
Indeed, $\mathcal{E}^{t}_h$ differs from the squared Bellman error because of an extra variance term caused by the stochastic transition \cite{bradtke1996linear}.
This biased estimation issue can be avoided (or alleviated) in practice by introducing target networks in DQN \cite{mnih2013playing}.
Some variants \cite{kim2019deepmellow} of DQN can also reduce the biased estimate and performs well without target networks.
Nevertheless, in our analysis, we center around the uniform bound $\sup_{f \in \mathcal{F}} |\mathcal{E}^{t}_h(f) - \widehat{\mathcal{E}}^{t}_h(f)|$ instead of the Bellman error.

{\bf $\epsilon$-greedy exploration:} In order to work in the online setting, we need to ensure that the learner visits ``good'' state action pairs in the sense that are almost maximizers of the value function for unseen state, \emph{a.k.a.}, exploration. 
In RL theory, a classical way is to design an optimistic estimate of the value function via a bonus function $b_h^t$ \cite{yang2020function,cai2020provably} such that ${Q}^t_h = \min \{\widehat{Q}^t_h + b_h^t, H  \}^+$.
Instead, we employ the $\epsilon$-greedy exploration that follows DQN-like algorithms.
Using the $\epsilon$-greedy exploration will ensure each state-action pair can be visited with positive probability and favor independence among samples.
In our algorithm, we directly set ${Q}^t_h := \min \{\widehat{Q}^t_h, H \}^+$, and then naturally incorporate the truncation operation in neural networks training, see~Eqs.~\eqref{eq2nndef} and~\eqref{eqdnndef}. 

Based on the above description, our algorithm centers around deep neural function approximation via value iteration under the $\epsilon$-greedy exploration and experience replay under the online setting.
This problem setting matches the spirit of practical DQN, which allows for better understanding deep RL.

\section{Main results}
\label{sec:mainres}
This section presents our results for value iteration under deep (as well as two-layer) ReLU neural networks via the Besov spaces and Barron spaces, respectively.
Our theory is based on the independence assumption via experience replay and achieves sublinear regret under the $\epsilon$-greedy exploration.

\subsection{Efficient value iteration via DNNs in Besov spaces}
In this setting, we consider $\widehat{Q}^t_h = \argmin_{f \in \mathcal{F}_{\tt DNN}} \widehat{\mathcal{E}}^t_h(f)$ in Eq.~\eqref{eqerm}, where $\mathcal{F}_{\tt DNN}$ is the function space of deep ReLU neural networks defined in Eq.~\eqref{eqdnndef}.
We make the following assumption on the Besov space $\mathcal{B}$, similar to \cite{jin2020provably,yang2020function}, where the Bellman optimality operator maps any bounded value function to a bounded Besov space ball.
\begin{assumption}\label{ass:besov}
	Let $\widetilde{R} $ be a fixed constant. Define $\mathcal{B}_{\widetilde{R}}  =  \{ f\in \mathcal{B}^{\alpha}_{p,q}(\mathcal{X}) : \| f\|_{\mathcal{B}} \leqslant \widetilde{R} \}$ in the Besov space and assume that for any $h \in [H]$ and $Q \colon \mathcal{S} \times \mathcal{A} \rightarrow [0,H] $, we have $\mathbb{T}_h^\star Q  \in \mathcal{B}_{\widetilde{R}}$. 
\end{assumption}
{\bf Remark:} Due to $Q \in [0,H] $, the radius $\widetilde{R}$ in fact depends on $H$, i.e., $\widetilde{R} \asymp H$. 	

Based on this assumption, we have the following theorem on the regret bound in the Besov space for deep RL under the $\epsilon$-greedy exploration.
\begin{theorem}\label{thm:dnn}
	Under Assumption~\ref{ass:besov} with the smoothness parameter $\alpha > d(1/p - 1/4)_+$ in the Besov space $\mathcal{B}^{\alpha}_{p,q}(\mathcal{X})$, considering value function learning \eqref{eqerm} via DNNs defined by Eq.~\eqref{eqdnndef} in~\cref{algo:nnfacounter} under the $\epsilon$-greedy exploration and the mini-batch ratio $\varrho \in (0,1)$, and taking
	\begin{equation}\label{eq:dewid}
		\mbox{\emph{the depth} $L \asymp \frac{d}{2\alpha +d} \log T$, $\quad$ \emph{the width} $m \asymp \frac{d}{2\alpha +d} T^{\frac{d}{2\alpha +d}} \log T$}\,,
	\end{equation}
	then given a MDP-dependent constant $K \in [1,H]$, for any $\delta \in (0,1)$, the total regret can be upper bounded with probability at least $1 - \delta$
	\begin{equation}\label{eq:regretbes}
		\begin{split}
			\text{Regret}(T)  &\lesssim \left( \frac{\epsilon}{A} \right)^{-\frac{K}{2}} \frac{1}{\sqrt{\varrho}} \left(  H^{\frac{3}{2}} T^{\frac{\alpha + d}{2\alpha + d}} \log^{3} T  + H^2 \sqrt{T}  \sqrt{\log \Big( \frac{2}{\delta} \Big)} \log T \right)  + \epsilon HT + \sqrt{TH^3 \log \Big( \frac{4}{\delta} \Big)} \\
			& \lesssim \widetilde{\mathcal{O}}(H^{\frac{H+4}{H+2}} K^{\frac{2}{K+2}} A^{\frac{K}{K+2}} T^{\frac{\alpha K + (\alpha + d)(K+2)}{(2\alpha + d)(K+2)}}) \,, \quad \mbox{taking $ \epsilon = \mathcal{O}((HK)^{\frac{2}{K+2}} A^{\frac{K}{K+2}} T^{-\frac{2\alpha}{ (2\alpha + d) (K+2) }})$}\,.
		\end{split}
	\end{equation}
\end{theorem}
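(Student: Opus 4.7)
The plan is to bound $\mathrm{Regret}(T)$ through the standard three-way decomposition obtained from the performance-difference lemma: (i) an $\epsilon$-exploration cost of at most $\epsilon H T$ incurred whenever $\tilde\pi^t$ deviates from the greedy policy, (ii) a martingale deviation of order $\sqrt{TH^3\log(1/\delta)}$ controlled by Azuma--Hoeffding, and (iii) the cumulative Bellman error $\sum_{t,h}\mathbb{E}_{(s,a)\sim d_h^{\tilde\pi^t}}\bigl|(Q_h^t - \mathbb{T}_h^\star V_{h+1}^t)(s,a)\bigr|$, where $d_h^{\tilde\pi^t}$ is the occupancy measure of the executed policy. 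Steps (i)--(ii) are immediate; the heart of the argument is step (iii), and the main obstacle is that, without optimism, the TD error is not automatically negative and must be controlled in an integrated sense.

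First, I would use the truncation $Q_h^t=\min\{\widehat Q_h^t,H\}^+$ to pointwise bound the TD error by $|\widehat Q_h^t - \mathbb{T}_h^\star V_{h+1}^t|$ and then change measure from the trajectory distribution $d_h^{\tilde\pi^t}$ to the training measure $\bar\mu_h^{\tilde t}$. Since the $\epsilon$-greedy policy assigns at least $\epsilon/A$ probability to every action, the Radon--Nikodym derivative $\mathrm{d} d_h^{\tilde\pi^t}/\mathrm{d}\bar\mu_h^{\tilde t}$ is bounded by a factor depending on the effective horizon $K$. Applying Cauchy--Schwarz yields
\begin{equation*}
\mathbb{E}_{d_h^{\tilde\pi^t}}\bigl|\widehat Q_h^t - \mathbb{T}_h^\star V_{h+1}^t\bigr| \lesssim (\epsilon/A)^{-K/2}\,\bigl\|\widehat Q_h^t - \mathbb{T}_h^\star V_{h+1}^t\bigr\|_{L^2(\mathrm{d}\bar\mu_h^{\tilde t})},
\end{equation*}
which reduces the problem to controlling a Bellman residual in the averaged training norm.

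Next, I would split this $L^2(\mathrm{d}\bar\mu_h^{\tilde t})$-residual into approximation and generalization pieces:
\begin{equation*}
\bigl\|\widehat Q_h^t - \mathbb{T}_h^\star V_{h+1}^t\bigr\|_{L^2(\mathrm{d}\bar\mu_h^{\tilde t})}^2 \lesssim \inf_{f\in\mathcal{F}_{\tt DNN}}\bigl\|f-\mathbb{T}_h^\star V_{h+1}^t\bigr\|_\infty^2 + \sup_{f\in\mathcal{F}_{\tt DNN}}\bigl|\widehat{\mathcal{E}}_h^t(f)-\mathcal{E}_h^t(f)\bigr|.
\end{equation*}
The approximation term is handled by Besov-space approximation theory for sparse deep ReLU networks: under $\alpha>d(1/p-1/4)_+$ and the scaling \eqref{eq:dewid}, it decays at the minimax rate $T^{-2\alpha/(2\alpha+d)}$. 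The generalization term follows from uniform concentration of the empirical squared loss over $\mathcal{F}_{\tt DNN}$; crucially, the mini-batch samples are independent (via experience replay) though only distributed as the mixture $\bar\mu_h^{\tilde t}$, so a local Rademacher / covering-number argument with the DNN complexity $S\asymp m L \log m \asymp T^{d/(2\alpha+d)}\log^3 T$ gives the matching $T^{-\alpha/(2\alpha+d)}$ rate up to poly-log factors, while $\tilde t=\lceil\varrho t\rceil$ produces the $1/\sqrt\varrho$ prefactor. A union bound over $t\in[T]$ and $h\in[H]$ supplies the $\sqrt{\log(1/\delta)}\log T$ terms.

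Finally, I would sum the per-episode bounds, combine with the exploration and martingale contributions, and optimize $\epsilon$ by balancing the $(\epsilon/A)^{-K/2}$-blow-up of the estimation term against the linear-in-$\epsilon$ exploration cost $\epsilon H T$. This balance gives the prescribed $\epsilon=\mathcal{O}\!\bigl((HK)^{2/(K+2)}A^{K/(K+2)}T^{-2\alpha/((2\alpha+d)(K+2))}\bigr)$ and the claimed $\widetilde{\mathcal{O}}$-regret. The main technical obstacle is the change-of-measure step: the constant $K\in[1,H]$ is exactly the number of consecutive $\epsilon$-greedy random actions needed for $\bar\mu_h^{\tilde t}$ to dominate the reachable state-action distribution, and a naive bound would replace $K$ by $H$ and destroy the stated dependence. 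The rest of the proof is a careful bookkeeping of Besov approximation rates, Rademacher complexity of $\mathcal{F}_{\tt DNN}$ in the non-iid-but-independent regime, and the $\epsilon$ optimization.
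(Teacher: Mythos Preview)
Your high-level structure matches the paper, but the change-of-measure step has a real gap. The regret decomposition (Lemma~\ref{lem:regret_decomp}) does \emph{not} reduce Term~(i) to $\sum_{t,h}\mathbb{E}_{d_h^{\tilde\pi^t}}|\Gamma_h^t|$: one half of it is $\sum_{t,h}\mathbb{E}_{\pi^\star}[\Gamma_h^t(s_h,a_h)\mid s_1=s_1^t]$, an expectation under the \emph{optimal} policy, and without optimism this piece cannot be absorbed into the executed-policy occupancy. Your Radon--Nikodym route therefore aims at the wrong target: even granting that $\mathrm{d}d_h^{\tilde\pi^t}/\mathrm{d}\bar\mu_h^{\tilde t}$ is controlled (itself not obvious, since the greedy part of $\tilde\pi^t$ may concentrate on regions that past policies rarely visited), you would still need $\mathrm{d}d_h^{\pi^\star}/\mathrm{d}\bar\mu_h^{\tilde t}$ bounded, and $\pi^\star$ is deterministic with occupancy that can place all mass where $\bar\mu_h^{\tilde t}$ has arbitrarily small density. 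Cauchy--Schwarz does not deliver $(\epsilon/A)^{-K/2}$ this way.

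The paper avoids density ratios altogether. It first bounds both pieces of Term~(i) pointwise by $2\|\Gamma_h^t\|_\infty$ (valid for \emph{any} expectation, including under $\pi^\star$), and then converts $L^\infty$ to $L^2(\mathrm{d}\bar\mu_h^{\tilde t})$ via the inequality
\[
\|\Gamma_h^t\|_\infty \;\le\; \frac{\|\Gamma_h^t\|_{L^2(\mathrm{d}\bar\mu_h^{\tilde t})}}{\sqrt{\bar\mu_h^{\tilde t}(\mathcal G_\delta)}}+\delta,
\qquad \mathcal G_\delta=\{(s,a):|\Gamma_h^t(s,a)|\ge\|\Gamma_h^t\|_\infty-\delta\},
\]
where the $\epsilon$-greedy coverage (Lemma~\ref{lemmalowerbound}) ensures $\bar\mu_h^{\tilde t}(\mathcal G_\delta)\gtrsim(\epsilon/A)^K$. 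This $L^\infty\!\to\!L^2$ trick is the missing idea. A secondary point: your displayed estimation inequality with $\sup_f|\widehat{\mathcal E}_h^t(f)-\mathcal E_h^t(f)|$ is the slow-rate uniform-deviation route and is inconsistent with your later invocation of local Rademacher complexity; to obtain the fast $\tilde t^{-2\alpha/(2\alpha+d)}$ rate on the \emph{squared} TD error (needed for the final $T^{(\alpha+d)/(2\alpha+d)}$ exponent) the paper uses the excess-risk form $\mathcal E_h^t(\widehat Q_h^t)-\min_{f}\mathcal E_h^t(f)$ together with a variance--expectation condition (Lemma~\ref{lem:tdgen} and Proposition~\ref{prop:genbes}).
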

{\bf Remark:} We make the following remarks.\\
\textit{i}) The constant $K$ describes the ``myopic'' level of MDPs under the $\epsilon$-greedy policy, e.g., the worst case ($K:=H$) under the sparse rewards setting; the benign case $K:=c$ (for some small constant $c$) under the helpful dense rewards setting as discussed in \cite{dann2022guarantees}. 
The exponential dependence on $H$ (in the worst case for any MDP) can be avoided at an additional cost of worsening $T$ dependence.
In fact, whether in the benign/worst case, the sublinear regret is always achieved under some certain $\epsilon$ values in~\cref{eq:regretbes}, which theoretically demonstrates the efficiency of deep RL. Note that the chosen $\epsilon \in (0,1)$ is always satisfied under a large episode $T$. \\
\textit{ii}) Clearly, the regret bound is a non-increasing function of the smoothness parameter $\alpha$, which shows that an easier task (i.e., the target Q function is more smooth) leads to regret bounds with faster rates.
Specially, if we take $\alpha \rightarrow \infty$ (i.e., the target Q function is sufficiently smooth), which holds for linear function approximation
\begin{equation*}
    \text{Regret}(T) \lesssim \widetilde{\mathcal{O}}(H^{\frac{H+4}{H+2}} K^{\frac{2}{K+2}} A^{\frac{K}{K+2}} T^{\frac{K+1}{K+2}} )\,,
\end{equation*}
which recovers the regret bound $\widetilde{\mathcal{O}}(T^{\frac{K+1}{K+2}})$ in \cite[Theorem 3]{dann2022guarantees} via Eluder dimension.
In the best case ($K=1$), our regret bound implies  $\widetilde{\mathcal{O}}(H^{\frac{4}{3}} A^{\frac{1}{3}} T^{\frac{2}{3}} )$ with $H \geqslant 4$, which matches the optimal regret bound for the contextual bandits problem in terms of dependence on $T$ or $A$ under the $\epsilon$-greedy exploration \cite{lattimore2020bandit}. 
In the worst case ($K:=H$), we can still obtain the sublinear regret at a certain $\widetilde{\mathcal{O}}(T^{\frac{H+1}{H+2}})$ rate.

\cref{thm:dnn} demonstrates that the sublinear regret can be achieved by choosing ${\mathcal{O}}(\log T)$ depth and  $\widetilde{\mathcal{O}}(T^{\frac{d}{2\alpha +d}})$ width, but the sublinear regret bound $\widetilde{\mathcal{O}}(T^{\frac{\alpha K + (\alpha + d)(K+2)}{(2\alpha + d)(K+2)}})$ heavily depends on the feature dimension $d$, failing in the \emph{curse of dimensionality}, which appears ineffective on high dimensional data in deep RL.
In the next, we consider the Barron spaces, i.e., the ``largest'' function space for two-layer neural networks to avoid the curse of dimensionality.
In this case, the rate of the sublinear regret can get rid of $d$, which is useful for high dimensional data in practical RL.

\subsection{Efficient value iteration via two-layer neural networks in Barron spaces}
\label{sec:regret2nn}

As mentioned before, Barron spaces are the ``largest'' function space for two-layer neural networks.
In this setting, we consider $\widehat{Q}^t_h = \argmin_{f \in \mathcal{F}_{\tt SNN}} \widehat{\mathcal{E}}^t_h(f)$ in Eq.~\eqref{eqerm}, where $\mathcal{F}_{\tt SNN}$ is the function space of two-layer ReLU neural networks defined in Eq.~\eqref{eq2nndef}.
We give a similar assumption on the Bellman optimality operator in the Barron space.
\begin{assumption}\label{ass:barron}
	Let $\widetilde{R}> 0$ be a fixed constant. Define $\mathcal{P}_{\widetilde{R}}  =  \{ f\in \mathcal{P} : \| f\|_{\mathcal{P}} \leq \widetilde{R} \}$ in the Barron space, and assume that for any $h \in [H]$ and $Q \colon \mathcal{S} \times \mathcal{A} \rightarrow [0,H] $, we have $\mathbb{T}_h^\star Q  \in \mathcal{P}_{\widetilde{R}}$. 
\end{assumption}
Based on this assumption, we have the following regret bounds for two-layer ReLU neural networks.
\begin{theorem}\label{thm:2nn}
	Under Assumption~\ref{ass:barron}, considering value function learning \eqref{eqerm} by two-layer ReLU neural networks with width $m$ and bounded $\ell_1$ norm $B$ defined by Eq.~\eqref{eq2nndef} in Algorithm~\ref{algo:nnfacounter} under the $\epsilon$-greedy exploration and the mini-batch ratio $\varrho \in (0,1)$, then given a MDP-dependent constant $K \in [1,H]$, for any $\delta \in (0,1)$, the total regret can be upper bounded with probability at least $1 - \delta$
	\begin{equation*}
		\begin{split}
			\text{Regret}(T)  & \lesssim \left( \frac{\epsilon}{A} \right)^{-\frac{K}{2}} \!\! \left( \frac{H^2 T^{\frac{3}{4}}}{\sqrt{\varrho}} \left[ B(\log d)^{\frac{1}{4}} \!+\! \log^{\frac{1}{4}} \Big( \frac{4}{\delta} \Big)  \right] \!+\! \frac{H^2T}{\sqrt{m}} \right)  \!+\! \epsilon HT \!+\! \sqrt{TH^3 \log \Big( \frac{4}{\delta} \Big)}\\
			& \lesssim \widetilde{\mathcal{O}}(	H^{\frac{K+4}{K+2}} T^{\frac{2K+3}{2K+4}} ) \,, \quad \mbox{by taking $m = \Omega(\sqrt{T})$ and $\epsilon = \mathcal{O} \left(H^{\frac{2}{K+2}} T^{-\frac{1}{2(K+2)}} \right)$}\,.
		\end{split}
	\end{equation*}
\end{theorem}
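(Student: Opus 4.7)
The plan is to follow the template of \cref{thm:dnn} while swapping the Besov approximation-and-estimation machinery for its Barron counterpart. Three ingredients are needed: a regret decomposition tailored to $\epsilon$-greedy exploration, a uniform-convergence bound for the empirical squared loss over $\mathcal{F}_{\tt SNN}$ under independent (non-iid) samples coming from experience replay, and an approximation theorem for Barron functions by width-$m$ two-layer ReLU networks with bounded $\ell_1$-path norm.

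First I would decompose $\text{Regret}(T) = \sum_{t=1}^T [V^\star_1(s^t_1) - V^{\tilde{\pi}^t}_1(s^t_1)]$ by inserting the greedy policy $\pi^t$ associated with $\widehat{Q}^t$ and then splitting each episode into the probability-$(1-\epsilon)$ greedy branch and the probability-$\epsilon$ exploration branch. Telescoping the Bellman errors along the greedy branch, exactly as in the proof of the preceding theorem, accumulates the factor $(\epsilon/A)^{-K/2}$ governed by the ``myopic level'' $K$; the exploration branch contributes $\epsilon H T$; and a Freedman/Azuma-type bound on the martingale gap between predicted and realized returns yields the $\sqrt{TH^3 \log(1/\delta)}$ tail term. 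After this step, the remaining object to control is $\sum_{t,h} \| \widehat{Q}^t_h - \mathbb{T}^\star_h V^t_{h+1} \|_{L^2(\mathrm{d}\bar{\mu}^{\tilde{t}}_h)}$.

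The heart of the proof is then to bound this $L^2(\mathrm{d}\bar{\mu}^{\tilde{t}}_h)$-TD error via the excess risk of the ERM in \eqref{eqerm}. I would split that excess risk into approximation and generalization parts. For approximation, Assumption~\ref{ass:barron} places $\mathbb{T}^\star_h V^t_{h+1}$ in $\mathcal{P}_{\widetilde{R}}$, and the classical Barron/Jones--Maurey theorem produces a width-$m$ two-layer network of path norm $\lesssim \widetilde{R}$ with $L^2$ error $\mathcal{O}(\widetilde{R}/\sqrt{m})$; summing over $t,h$ this furnishes the $H^2 T/\sqrt{m}$ contribution in the statement. For generalization I would prove $\sup_{f\in \mathcal{F}_{\tt SNN}}|\mathcal{E}^t_h(f) - \widehat{\mathcal{E}}^t_h(f)| \lesssim H^2 B\sqrt{\log d/\tilde{t}} + H^2\sqrt{\log(1/\delta)/\tilde{t}}$, combining a McDiarmid inequality valid for merely independent (not identically distributed) samples with the Rademacher bound $\mathfrak{R}_{\tilde{t}}(\mathcal{F}_{\tt SNN}) \lesssim B\sqrt{\log d/\tilde{t}}$ for the Barron ball and the $2H$-Lipschitz contraction of the truncated squared loss. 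Converting excess squared loss to $L^2$ distance by a square root and summing $\sum_{t=1}^T \tilde{t}^{-1/4} \asymp \varrho^{-1/4} T^{3/4}$ yields the $T^{3/4}[B(\log d)^{1/4} + \log^{1/4}(1/\delta)]$ factor appearing in the claim.

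The hard part will be this uniform-convergence step: the regression targets $r_h + V^t_{h+1}$ are themselves data-dependent because $V^t_{h+1}$ is built from (an earlier portion of) the replay buffer, so a naive Rademacher argument does not apply. I plan to handle this by conditioning on $V^t_{h+1}$, invoking the experience-replay independence between the current mini-batch and the data used to compute $V^t_{h+1}$, and then taking a union bound over an $L^\infty$-cover of $\mathcal{F}_{\tt SNN}$ whose log-cardinality is $\widetilde{\mathcal{O}}(B^2/\varepsilon^2)$ (Maurey-style) to absorb the $V^t_{h+1}$-dependence. Once the TD-error bound is in place, the final step is a one-variable optimization in $\epsilon$ balancing $(\epsilon/A)^{-K/2} H^2 T^{3/4}$ against $\epsilon H T$, producing $\epsilon = \mathcal{O}(H^{2/(K+2)} T^{-1/(2(K+2))})$ and the claimed $\widetilde{\mathcal{O}}(H^{(K+4)/(K+2)} T^{(2K+3)/(2K+4)})$ rate; the choice $m = \Omega(\sqrt{T})$ is made precisely so that the approximation term $H^2 T/\sqrt{m}$ never dominates the generalization term.
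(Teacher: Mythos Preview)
Your plan matches the paper's proof almost exactly: regret decomposition with the $\epsilon$-greedy term $\epsilon HT$ and Azuma martingale $\sqrt{TH^3\log(1/\delta)}$; reduction of ${\tt Term~(i)}$ to $\|\Gamma^t_h\|_{L^2(\mathrm{d}\bar{\mu}^{\tilde t}_h)}$ via the $(\epsilon/A)^{-K/2}$ measure-transfer step; splitting the squared TD error into generalization plus Barron approximation $\mathcal{O}(\widetilde R^2/m)$; and bounding the generalization gap by the Rademacher complexity $\mathcal{R}_{\tilde t}(\mathcal{F}_{\tt SNN})\lesssim B\sqrt{\log d/\tilde t}$ together with McDiarmid for non-iid independent samples. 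The paper sums via Cauchy--Schwarz, $\sqrt{T}\sqrt{\sum_t \tilde t^{-1/2}}$, rather than your direct $\sum_t \tilde t^{-1/4}$, but both yield the same $T^{3/4}$ factor.

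The one substantive divergence is your ``hard part.'' The paper does \emph{not} take a union bound over an $L^\infty$-cover of $\mathcal{F}_{\tt SNN}$ to neutralize the data-dependence of $V^t_{h+1}$. It simply observes that $[f-r_h-V^t_{h+1}]^2$ is $4H$-Lipschitz in $f$, applies Talagrand contraction with $V^t_{h+1}$ treated as fixed, and uses that the resulting bound $8H\,\widehat{\mathcal R}_{\tilde t}(\mathcal{F}_{\tt SNN},X)\lesssim HB\sqrt{\log d/\tilde t}$ does not depend on the targets at all---so the sup over $f$ is already uniform. Your covering route would also work and is arguably more careful about the coupling between $V^t_{h+1}$ and the mini-batch, but it is heavier machinery than the paper invokes, and it would introduce an extra $\log$-covering factor that the paper's statement does not carry. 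If you want to mirror the paper, drop the covering step and appeal directly to contraction plus \cref{lem:expRn} and \cref{lem:rade2nnpath}.
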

{\bf Remark:} In our result, taking $m = \Omega(\sqrt{T})$ is suffice to achieve the sublinear regret bound $\widetilde{\mathcal{O}}(T^{\frac{2K+3}{2K+4}})$, which also gets rid of the feature dimension $d$, allowing for high-dimensional image data in practice.

\section{Discussion on architecture guidelines in deep RL}

In this section, we present a detailed discussion on how our results provide the architecture guidelines in practical deep RL, in the perspective of the width, the depth, and problem-dependent smoothness of the Q function.

{\bf Width-depth and DQN:}
According to \cref{thm:dnn}, the $\mathcal{O}(\log T)$ depth and $\widetilde{O}(T^{\frac{d}{2\alpha + d}})$ width are enough for sublinear regret in deep RL.
Interestingly, we notice that this result is closely matching practical implementation of DQN. For example, the choices of \cite{mnih2015human} $m=512$ and $L=5$ can be explained by our theory, indeed $\log(512) \approx 6$.
Specially, when taking $\alpha \rightarrow \infty$, this setting holds for linear function approximation.  
For two-layer neural networks endowed by the Barron space, the curse of dimensionality in terms of width and regret bound can be avoided in \cref{thm:2nn}, supporting the premise of practical, high-dimensional RL.

{\bf Problem-dependent smoothness and exploration:} The problem-dependent smoothness, determined by $\alpha$, largely affects our regret bounds. 
The difficulty of a task in deep RL can be defined in two views: one is the smoothness of the target Q function; and the other is the degree of exploration.
Intuitively speaking, if a RL task is difficult, then the target Q function is often complicated, and thus admits a relative lower smoothness; or we need conduct more exploration in a complex scenario.
Our results coincide with these two views.
One hand, the regret bound in \cref{thm:dnn} is a non-increasing function of the smoothness parameter $\alpha$. A more difficult task in deep RL (i.e., a smaller $\alpha$) leads to a slower rate of the sublinear regret, which indicates that more episodes are required.
On the other hand, \cref{thm:dnn} shows that the parameter $\epsilon$ is also a non-increasing function of $\alpha$. That means, a more difficult task in deep RL requires a larger $\epsilon$, i.e., we need conduct exploration more frequently. 

Besides, the exploration parameter $\epsilon$ is also affected by $K$ for MDPs with different situations.
For example, compared to the best case $K=1$, more frequent exploration (a larger $\alpha$) is required in MDPs under difficult cases, which coincides with our certain $\epsilon$ value in Theorems~\ref{thm:dnn} and \ref{thm:2nn}. 

{\bf Width and depth trade-off:} Under a limit parameter budget, according to the width-depth ratio $m/L = T^{\frac{d}{2\alpha+d}}$ in \cref{thm:dnn}, our theory indicates that less problem-dependent smoothness of Q-function requires DNNs to be wider.  
In practice, if we work in the limited budget of parameters $N$ in neural networks, e.g., $N \asymp m^2L$, our theory implies that there is a tradeoff between the depth and width on smoothness, i.e., the depth $L:= N^{1/3} T^{-\frac{2d}{3(2\alpha+d)}}$ increasing with $\alpha$ (or $T$) and the width $m = {N}^{1/3} T^{\frac{d}{3(2\alpha+d)}}$ decreasing with $\alpha$ (or $T$).

Besides, according to the width-depth ratio, it can be found that, the change of $\alpha$ leads to less changes on the depth but more changes on the width. 
This shows that width and depth admit different levels of parameter sensitivity under the change of problem-dependent smoothness.

\section{Proof outline}
\label{sec:proofoutline}
In this section, we outline the proof of our theoretical results presented in Section~\ref{sec:mainres}. 
As mentioned before, the technical challenge in our analysis is how to estimate the TD error without bonus function design.
Apart from the regret decomposition, our proof framework includes two main parts: transformation of TD error estimation to generalization bounds, see Figure~\ref{figassum}; and generalization bounds on non-iid data in certain Besov/Barron spaces for TD error analysis, see Figure~\ref{figgen}.
The complete proof is reported in the appendix. 

\begin{figure*}[t]
	\centering
	\AB{Regret\\decomp.\\Lem.~\ref{lem:regret_decomp}}
	{
		statistical error: Lem.~\ref{lemma:bound_mtg} with $\mathcal{O}(\sqrt{H^3T})$\\
		\AB{${\tt Term~(i)}$ $\Leftarrow$ Lem.~\ref{lem:tdgen}}
		{
			generalization\\
		    approximation
		} 
		$\Leftarrow$
		{Lem.~\ref{lem:termitd}: $\| \Gamma^t_h \|^2_{L^2{(\mathrm{d}\bar{\mu}^{\tilde{t}}_h)}}$ $\Leftarrow$  Lem.~\ref{lemmalowerbound}: $\bar{\mu}^{\tilde{t}}_h(\mathcal{C}) > 0$
		}\\
		$\epsilon$-greedy exploration: $\epsilon HT$
	}
	\caption{Proof framework of regret decomposition and transformation of the TD error.}\label{figassum}
	\vspace{-0.3cm}
\end{figure*}

{\bf Regret decomposition:}
This part is standard and commonly studied in RL theory, e.g., \cite{cai2020provably,jin2020provably,yang2020function}.
We briefly include here for self-completeness.
Define the temporal-difference (TD) error as 
\begin{equation}\label{eq:td_error}
	\Gamma^t_h(s,a) =r_h (s,a) + (\mathbb{P}_{h}V^{t}_{h+1})(s,a)  - Q^{t}_h (s,a) = (\mathbb{T}_h  V^{t}_{h+1}) (s,a) -  Q^{t}_h (s,a), \quad \forall (s,a) \in \mathcal{S} \times \mathcal{A}\,,
\end{equation}
where $\Gamma_h^t$ is a function on $\mathcal{S}\times \mathcal{A}$ for all $h \in [H]$ and $t \in [T]$.
Accordingly, the regret can be decomposed into (\emph{c.f.} Lemma~\ref{lem:regret_decomp})
	\begin{equation}\label{eq:regretdec}
			\begin{split}
					\text{Regret}(T) &\leqslant  \underbrace{ \sum_{t=1}^T \sum_{h=1}^H\Big( \mathbb{E}_{\pi^{\star}}[ \Gamma^t_h(s_h,a_h)\,|\, s_1 = s^t_1] 
					-  \Gamma^t_h(s^t_h,a^t_h)\Big)}_{{\tt Term~(i)}} + {\tt Term~(ii)} + \epsilon HT \,,
			\end{split}
		\end{equation}
where the first term relates to the TD error and the second term is the statistical error based on the standard martingale difference sequences, which can be upper bounded by the Hoeffding-Azuma inequality with $\mathcal{O}(\sqrt{H^3T})$ regret (\emph{c.f.} Lemma~\ref{lemma:bound_mtg}).
The last term $\epsilon HT$ is due to the $\epsilon$-greedy exploration.

{\bf Transforming TD error to generalization bounds:} To bound the TD error, we first introduce \cref{lemmalowerbound} with $\bar{\mu}^{\tilde{t}}_h(\mathcal{C}) > 0$, where the event $\mathcal{C}$ denotes that all state-action pairs have been visited at all time steps under the $\epsilon$-greedy policy.
Then we are able to build the connection between ${\tt Term~(i)}$ and $\| \Gamma_{h}^t \|_{L^2(\mathrm{d} \bar{\mu}^{\tilde{t}}_h)}$ in the $L^2{(\mathrm{d}\bar{\mu}^{\tilde{t}}_h)}$-integrable space (\emph{c.f.} Lemma~\ref{lem:termitd}).
After analysis of $\mathcal{E}_h^t(f)$ in \cref{prop:exerm}, we transform the estimation of $\| \Gamma_{h}^t \|_{L^2(\mathrm{d} \bar{\mu}^{\tilde{t}}_h)}$ to the following two terms: generalization error and approximation error, respectively (\emph{c.f.} Lemma~\ref{lem:tdgen})
\begin{equation}\label{eq:tdgen}
		 \| \Gamma^t_h \|^2_{L^2{(\mathrm{d}\bar{\mu}_h^{\tilde{t}})}} \leqslant \Big[ \mathcal{E}^t_h(\widehat{Q}^t_h) - 	\min_{f \in \mathcal{F}} \mathcal{E}^t_h(f) \Big] + \inf_{f \in \mathcal{F}} \| f - \mathbb{T}_h^{\star} Q^t_{h+1} \|^2_{ L^2{(\mathrm{d} \bar{\mu}^{\tilde{t}}_h) }} \,.
\end{equation}
where the first term is the generalization error which we elucidate in the next and the second term is the approximation error and can be considered in an $L^p(\mathcal{X})$ space for Besov spaces in~\cref{coro:tdgen}. 
For example, the approximation error in the Besov space admits the certain $\mathcal{O}(N^{-2\alpha/d})$ rate in \cite{suzuki2019adaptivity} for deep ReLU networks with $L \asymp \log N, ~S \asymp N, ~m \asymp N \log N$.

\begin{figure*}[t]
	\centering
	\AB{}
	{
		Rademacher complexity on non-iid data: Lem.~\ref{lem:expRn} $\Leftarrow$ Lem.~\ref{lem:EfRn} \\
		two-layer NNs: Thm.~\ref{thm:2nn}  $\Leftarrow$ $\mathcal{O}(H^2 B \sqrt{\log d/ n})$ $\Leftarrow$ Lem.~\ref{lem:rade2nnpath}: Rademacher complexity of Barron spaces\\
		DNNs: Thm.~\ref{thm:dnn} $\Leftarrow$ $\mathcal{O}(n^{-\frac{2 \alpha}{2 \alpha  + d}})$ $\Leftarrow$ Prop.~\ref{prop:genbes} $\Leftarrow$ Lem.~\ref{lem:lradebes} on ${\tt LRC}$ for Besov spaces 
	}
	\caption{Proof framework of the TD error via generalization bounds on $n$ non-iid data. We denote ${\tt LRC}$ by local Rademacher complexity for short.}\label{figgen}
	\vspace{-0.2cm}
\end{figure*}

{\bf Generalization bounds on non-iid data:} The key part left is to bound the generalization error on non-i.i.d data for the TD error estimation, see the proof framework in Figure~\ref{figgen}. 
In our proof, we firstly verify that the maximum error in estimating the mean of any function $f \in \mathcal{F}$ can be still bounded by the Rademacher complexity of $\mathcal{F}$ in Lemma~\ref{lem:EfRn}, and then generalization bounds by Rademacher complexity still holds by Lemma~\ref{lem:expRn} via the averaged measure $\bar{\mu}^{\tilde{t}}_h$, which only requires the data to be independent. 
These results can be easily extended to local Rademacher complexity.

For deep neural networks, by computing the local Rademacher complexity of $\mathcal{F}_{\tt DNN}$ in Lemma~\ref{lem:lradebes} and choosing proper neural network parameters in Eq.~\eqref{eqdnndef}, we derive the convergence rate of generalization bounds at a certain $\mathcal{O}(n^{-\frac{2 \alpha}{2 \alpha  + d}})$ rate in Besov spaces (\emph{c.f.} \cref{prop:genbes}) with $n$ non-iid data. 
Combining the result of approximation error and taking the depth and width in \cref{eq:dewid}, ${\tt Term~(i)}$ can be upper bounded with high probability.
Finally we conclude the proof of \cref{thm:dnn} by combining with the statistical error.

For two-layer neural networks, by computing the Rademacher complexity of $\mathcal{F}_{\tt SNN}$ in Lemma~\ref{lem:rade2nnpath}, we obtain the generalization error at a certain $\mathcal{O}(H^2 B \sqrt{\log d/ n})$ convergence rate.
Combining the result of approximation error in Barron spaces with other terms in the regret decomposition, we conclude the proof of Theorem~\ref{thm:2nn}.

{\bf Regret bounds effected by optimization error:} Here we briefly discuss the regret bound affected by a solution (denoted as $\widetilde{Q}_h^t$) that is not a global minimum of problem~\eqref{eqerm}.
Assume that the optimization error is small in the functional view, i.e., $ \| \widetilde{Q}_h^t - \widehat{Q}_h^t \|_{ L^2{(\mathrm{d} \bar{\mu}^{\tilde{t}}_h) }} \leqslant \varepsilon_{\mathrm{opt}}$, that will appear in \cref{eq:tdgen}, and accordingly ${\tt Term~(i)}$ incurs in an extra regret bound $\mathcal{O}(H^2\log T)$ if we take $\varepsilon_{\mathrm{opt}}:=H/\sqrt{\tilde{t}}$.
This condition is fair and reasonable as the optimization error decreases with the mini-batch size $\tilde{t}$ for neural network training but requires a refined analysis under non-iid data \cite{kowshik2021streaming,alacaoglu2022convergence}.

\vspace{-0.2cm}
\section{Conclusion}
\vspace{-0.2cm}

This paper provides an in-depth understanding on neural network function approximation with the $\epsilon$-greedy exploration under the online setting beyond the ``linear'' regime.
Our results provide theoretical guarantees of sublinear regret bounds, and shed light on some guidelines for understanding deep RL in the perspective of the width-depth configuration and the problem-dependent smoothness of RL tasks.

The analysis of this work is built on the $\epsilon$-greedy policy for exploration,  which are satisfied in practical cases when employing DQN. Nevertheless, designing a provably efficient exploration mechanism for deep RL could be an interesting future direction in both practice and theory.
Besides, our theory requires state-action pairs to be independent, which (approximately) holds via experience replay and could be improved by reverse experience replay \cite{agarwal2022online}. 
Furthermore, our work is built on the value iteration based algorithm, which is different from practical DQN that adapts Q-learning via one-step gradient descent. 
Towards a better understanding DQN in terms of Q learning and target networks \cite{zanette2022stabilizing,carvalho2020new} would be an interesting direction.

\section*{Acknowledgement}
The authors would like to thank anonymous reviewers for their constructive suggestions to improve the presentation and point out the independence issue.

This work was supported by SNF project – Deep Optimisation of the Swiss National Science Foundation (SNSF) under grant number 200021\_205011; the Enterprise for Society Center (E4S); the European Research Council (ERC) under the European Union's Horizon 2020 research and innovation programme (grant agreement n°725594 - time-data).

\bibliographystyle{unsrt}
\bibliography{refs}
\newpage	
\section*{Checklist}

\begin{enumerate}

	\item For all authors...
	\begin{enumerate}
		\item Do the main claims made in the abstract and introduction accurately reflect the paper's contributions and scope?
		\answerYes
		\item Did you describe the limitations of your work?
		\answerYes{We clearly discuss the limitation of this work in the Conclusion section.}
		\item Did you discuss any potential negative societal impacts of your work?
		\answerNo{Our work is theoretical and generally will have no negative societal impacts.}
		\item Have you read the ethics review guidelines and ensured that your paper conforms to them?
		\answerYes{}
	\end{enumerate}

	\item If you are including theoretical results...
	\begin{enumerate}
		\item Did you state the full set of assumptions of all theoretical results?
		\answerYes{The assumptions are clearly stated and well discussed.}
		\item Did you include complete proofs of all theoretical results?
		\answerYes{All of the proofs can be found in the Appendix.}
	\end{enumerate}

	\item If you ran experiments...
	\begin{enumerate}
		\item Did you include the code, data, and instructions needed to reproduce the main experimental results (either in the supplemental material or as a URL)?
		\answerNA{}
		\item Did you specify all the training details (e.g., data splits, hyperparameters, how they were chosen)?
		\answerNA{}
		\item Did you report error bars (e.g., with respect to the random seed after running experiments multiple times)?
		\answerNA{}
		\item Did you include the total amount of compute and the type of resources used (e.g., type of GPUs, internal cluster, or cloud provider)?
		\answerNA{}
	\end{enumerate}

	\item If you are using existing assets (e.g., code, data, models) or curating/releasing new assets...
	\begin{enumerate}
		\item If your work uses existing assets, did you cite the creators?
		\answerNA{}
		\item Did you mention the license of the assets?
		\answerNA{}
		\item Did you include any new assets either in the supplemental material or as a URL?
		\answerNA{}
		\item Did you discuss whether and how consent was obtained from people whose data you're using/curating?
		\answerNA{}
		\item Did you discuss whether the data you are using/curating contains personally identifiable information or offensive content?
		\answerNA{}
	\end{enumerate}

	\item If you used crowdsourcing or conducted research with human subjects...
	\begin{enumerate}
		\item Did you include the full text of instructions given to participants and screenshots, if applicable?
		\answerNA{}
		\item Did you describe any potential participant risks, with links to Institutional Review Board (IRB) approvals, if applicable?
		\answerNA{}
		\item Did you include the estimated hourly wage paid to participants and the total amount spent on participant compensation?
		\answerNA{}
	\end{enumerate}

\end{enumerate}

\newpage

\appendix

The appendix is organized as follows
\begin{itemize}
    \item Appendix~\ref{app:pribesov}: preliminaries on Besov spaces and Barron spaces;
    \item Appendix~\ref{app:regret_decomp}: proofs related to regret decomposition;
    \item Appendix~\ref{app:termitd}: proofs related to the temporal difference error and generalization error;
    \item Appendix~\ref{app:genealization}: proofs related to generalization bounds on non-iid data;
    \item Appendix~\ref{app:dnn}: proofs related to sublinear regret bounds for deep ReLU neural networks endowed by Besov spaces;
    \item Appendix~\ref{app:2nn}: proofs related to sublinear regret bounds for two-layer neural networks endowed by Barron spaces.
\end{itemize}

	\section{Preliminaries: Besov spaces and Barron spaces}
\label{app:pribesov}

In this section, we give an overview of Besov spaces for deep ReLU neural networks and the Barron spaces for two-layer ReLU neural networks.

\subsection{Besov spaces} 
Here we briefly introduce a general function space for deep ReLU neural networks according to the  ``smoothness'' of the function, i.e., Besov spaces.

To define Besov functions, we need introduce the modulus of smoothness.
\begin{definition}\cite[modulus of smoothness]{suzuki2019adaptivity}
	For a function $f \in L^p(\mathcal{X})$ with some $p \in (0,\infty]$, the $k$-th modulus of smoothness of $f$ is defined by
	\begin{equation*}
	    w_{k,p}(f,t) = \sup_{\bm h \in \mathbb{R}^d:~\| \bm h\|_2  \leqslant t} \|\Delta_{\bm h}^k (f)\|_{p}\,,
	\end{equation*}
	with
	\begin{equation*}
		\Delta_{\bm h}^k (f)(\bm x) = 
		\begin{cases} \sum_{j=0}^k {k \choose j} (-1)^{k-j} f(\bm x+j \bm h)~& \text{if}~ \bm x \in \mathcal{X}, ~ \bm x+k \bm h\in \mathcal{X}\,,\\
			0 & \text{otherwise}.
		\end{cases}
	\end{equation*}
\end{definition}
The quantity $\Delta_{\bm h}^k(f)$ captures the local oscillation of function $f$ that is not necessarily differentiable.
Based on this, the Besov space is defined as below.

\begin{definition}\cite[Besov space $\mathcal{B}^\alpha_{p,q}(\mathcal{X})$]{sawano2018theory,suzuki2019adaptivity}
		For $0 < p,q \leqslant \infty$, the smoothness parameter $\alpha > 0$, $k:= \lfloor \alpha \rfloor + 1$, define the semi-norm $|\cdot|_{\mathcal{B}^\alpha_{p,q}}$ as 
	\begin{equation*}
		|f|_{\mathcal{B}^\alpha_{p,q}} :=  
		\begin{cases}
			\left(\int_0^\infty (t^{-\alpha} w_{k,p}(f,t))^q \frac{\mathrm{d} t}{t} \right)^{\frac{1}{q}} & (q < \infty)\,, \\
			\sup_{t > 0} t^{-\alpha} w_{k,p}(f,t)  & (q = \infty)\,.
		\end{cases}
	\end{equation*}
	The norm of the Besov space $\mathcal{B}_{p,q}^\alpha(\mathcal{X})$ is defined by $\|f\|_{\mathcal{B}_{p,q}^\alpha} := \|f\|_{L^p(\mathcal{X})} + |f|_{\mathcal{B}^\alpha_{p,q}}$,	and the Besov space is $\mathcal{B}^\alpha_{p,q}(\mathcal{X}) = \{f \in L^p(\mathcal{X}) \mid \|f\|_{\mathcal{B}_{p,q}^\alpha} < \infty\}$.
\end{definition}

The smoothness parameter $\alpha$ indicates which function at a certain smoothness degree can be represented.
For example, if $\alpha > d/p$, then the related Besov space is continuously embedded in the set of the continuous functions.
However, if $\alpha < d/p$, then the functions in the Besov space are no longer continuous.
In particular, the Besov space reduces to the H\"{o}lder space ${\tt C}^{\alpha}$ when $p = q = \infty$ and $\alpha$ is a positive non-integer; degenerates to the Sobolev space ${\tt W}^{\alpha}_2$ when $p=q=2$ and $\alpha$ is a positive integer.
The Besov space is more general than these two spaces as it allows for spatially inhomogeneous smoothness with spikes and jumps.
More properties of Besov spaces and relations to other function spaces refer to \cite{suzuki2019adaptivity} for details.

\subsection{Barron spaces}
The study for deep ReLU neural networks is endowed by Besov spaces, but the complete of function space for deep ReLU neural networks to avoid the \emph{curse of dimensionality} is still open. 
Luckily, the complete of function space for two-layer neural networks can be conducted by Barron spaces. 
Here we briefly introduce the basic definition and property of Barron spaces \cite{weinan2020representation,weinan2021barron}.

We consider a typical two-layer neural network $	f(\bm x) = \frac{1}{m} \sum_{k=1}^m b_k\sigma(\bm w_k^{\!\top} \bm x + c_k) $, 
where $m$ is the number of neurons in the hidden layer and $\sigma(x) = \max\{x,0\}$ is the ReLU activation function used in this work.
Accordingly, the two-layer neural network admits the following representation
\begin{equation}\label{eq:fbarron}
f(\bm x)=\int_{\Omega} b \sigma\left(\bm w^{\!\top} \bm x + c \right) \rho(\mathrm{d} b, \mathrm{d} \bm w, \mathrm{d} c), \quad \bm x \in \mathcal{X}\,,
\end{equation}
where $\Omega = \mathbb{R} \times \mathbb{R}^d \times \mathbb{R}$ and $\rho$ is a probability measure over $\Omega$.
Then the Barron space \cite{weinan2021barron} endowed by the $p$-Barron norm with $p \in [1, +\infty]$ is defined as
\begin{equation*}
	\widetilde{\mathcal{P}}_p = \left\{ f~~\mbox{admits \cref{eq:fbarron}}: \| f \|_{\widetilde{\mathcal{P}}_p} = \inf_{\rho} \left\{ \mathbb{E}_{\rho} |b|^p (\| \bm w \|_1 + |c|)^p \right\}^{1/p} < \infty \right\}\,.
\end{equation*}
Specifically, when using ReLU, these function spaces under different $p$ are the same, i.e., $\widetilde{\mathcal{P}}_1 = \widetilde{\mathcal{P}}_2 = \cdots = \widetilde{\mathcal{P}}_{\infty}$, and thus we directly use $\widetilde{\mathcal{P}}$ for short.
The is the main reason why we study ReLU activation functions in this work.
Besides, the Barron norm is close to the $\ell_1$-path norm \cite{neyshabur2015norm}
\begin{equation*}
	\| f \|_{\widetilde{\mathcal{P}}} \leqslant  \| f \|_{\mathcal{P}}:=\frac{1}{m} \sum_{k=1}^m |b_k| (\| \bm w_k \|_1 + |c_k|) \leqslant 2 \| f \|_{\widetilde{\mathcal{P}}}\,.
\end{equation*}
Based on this, for description simplicity, we do not strictly distinguish the Barron norm and the $\ell_1$-path norm, and regard $\| f \|_{\mathcal{P}}$ as the discrete version of the Barron norm.

As suggested by \cite{weinan2020representation}, Barron space can be regarded as the \emph{largest} function space for two layer neural networks in two folds \cite{weinan2021barron}:
1)  \emph{direct approximation:} Any function in Barron spaces can be efficiently approximated by two-layer neural networks with bounded $\ell_1$ path norm at $\mathcal{O}(1/m)$ rate without \emph{curse of dimensionality};
2) \emph{inverse approximation:} Any continuous function that can be efficiently approximated by two-layer neural networks with bounded $\ell_1$-path norm belongs to a Barron space.

\section{Regret decomposition}
	\label{app:regret_decomp}
	We present the regret decomposition under the $\epsilon$-greedy policy by constructing the martingale difference sequence and giving error bounds for this.
	Apart from an extra $\epsilon HT$ regret, this decomposition result appears in \cite{cai2020provably,yang2020function}, and we include them here just for self-completeness.
	
	To establish the regret decomposition, we need some notations.
	 Remember the definition of the regret, $\tilde{\pi}^t$ is the $\epsilon$-greedy policy and $\pi^t$ is the greedy policy at the $t$-th episode, and then we have
        \begin{equation*}
        \begin{split}
           \text{Regret}(T) & = \sum_{t=1}^T \bigl [\sind{V}{\star}{1} (s^t_1) - \sind{V}{{\pi}^t}{1} (s^t_1)\bigr] + \sum_{t=1}^T \bigl [\sind{V}{{\pi}^t}{1} (s^t_1) - \sind{V}{\tilde{\pi}^t}{1} (s^t_1)\bigr]  \\
           & \leqslant \sum_{t=1}^T \bigl [\sind{V}{\star}{1} (s^t_1) - \sind{V}{{\pi}^t}{1} (s^t_1)\bigr] + \epsilon H T \,,
        \end{split}
        \end{equation*}
        where $\epsilon H T$ stems from the fact that the return of greedy and $\epsilon$-greedy policies can differ at most $\epsilon H$ in each episode.
        In the next, we aim to estimate the first term in the above equation.
        It involves the greedy policy $\pi^t$ at the $t$-th episode, which leads to a trajectory $\{(s_h^t, a_h^t) \}_{h=1}^H$.
        Note that this trajectory is different from Algorithm~\ref{algo:nnfacounter} that uses the $\epsilon$-greedy policy but we use the same notation on state-action pairs for notational simplicity in this section.
        
        Following \cite{cai2020provably,yang2020function}, we define two quantities $\zeta_{t,h}^1$,  $\zeta_{t,h}^2 \in \mathbb{R}$ for any $h \in [H]$ and $t \in [T]$ based on the greedy policy
	\begin{equation}\label{eq:define_mtg1}
		\begin{split}
			\zeta_{t,h}^1 & := [ V_h^t(s_h^t) - V_{h}^{\pi^t} (s_h^t) \bigr ]  - [ Q_h^t(s_h^t , a_h^t) - Q_{h}^{\pi^t} (s_h^t, a_h^t) ] \,,\\
			\zeta_{t,h}^2 & := [ ( \mathbb{P}_h V_{h+1}^t ) (s_h^t , a_h^t) - (\mathbb{P}_h V_{h+1}^{\pi^t} ) (s_h^t, a_h^t) ] - [ V_{h+1}^t(s_{h+1}^t) - V_{h+1}^{\pi^t} (s_{h+1}^t) ]\,.
		\end{split}
	\end{equation}
	By definition, $\zeta_{t,h}^1$ depends on the randomness of choosing an action $a_h^t \sim \pi_h^t(\cdot | s_h^t)$; and $\zeta_{t,h}^2$ captures the stochastic transition, i.e., the randomness of drawing the next state $s_{h+1}^t$ from $\mathbb{P}_h(\cdot | s_h^t, a_h^t)$.
	Based on the following definition of filtration, $\{{\zeta}_{t,h}^1, \zeta_{t,h}^2 \}$ forms a bounded martingale difference sequence.
	\begin{definition}\cite[Filtration]{cai2020provably}
		For any $(t, h )\in [T] \times [H]$, define $\sigma$-algebras $\mathcal{M}_{t,h,1}$ and  $\mathcal{M}_{t,h,2}$ generated by the following respective state-action sequence as
		\begin{equation}\label{eq:define_filtration} 
			\begin{split}
				\mathcal{M}_{t,h,1} & := \sigma \bigl (	\{(s^\tau_i, a^\tau_i)\}_{(\tau, i)\in [t-1]  \times [H]}  \cup \{(s^t_i, a^t_i)\}_{i\in [h]}  \bigr),  \\
				\mathcal{M}_{t, h ,2} & :=  \sigma \bigl ( \{ (s^\tau_i, a^\tau_i) \}_{(\tau, i)\in [t-1] \times [H]}  \cup \{(s^t_i, a^t_i)\}_{i\in [h]} \cup \{s^t_{h+1}\} \bigr ) \,,    
			\end{split}
		\end{equation}
		where we identify $\mathcal{F}_{t, 0, 2}$ with $\mathcal{M}_{t-1, H, 2}$ for all $t\geqslant 2$ and let $\mathcal{M}_{1,0, 2}$ be the empty set. 
		Further, for any $t\in [T] $, $h \in [H]$ and $m\in [2]$, we define the time-step index $\tau(t, h, m) $ as 
		\begin{equation}\label{eq:time_ordering}
			\tau(t,h,m)=(t-1)\cdot 2H+(h-1)\cdot 2+m\,,
		\end{equation}
		which offers an partial ordering over  the triplets $(t, h , m)\in [T] \times[H]\times[2]$.  
		Moreover, according to Eq.~\eqref{eq:define_filtration}, for any $(t, h, m)$ and $(t', h', m')$ satisfying  $\tau(k,h,m) \leqslant \tau (k',h',m')$, it holds that $\mathcal{M}_{k,h,m}\subseteq \mathcal{M}_{k',h',m'}$. Thus, the sequence of $\sigma$-algebras $\{\mathcal{M}_{t,h,m}\}_{(t,h,m)\in [T] \times[H]\times[2]}$ forms a filtration. 
	\end{definition}

	Accordingly, we have the following regret decomposition result. 
	
	\begin{lemma}[Regret Decomposition \cite{cai2020provably,yang2020function}]\label{lem:regret_decomp}
		Recall the definition of the temporal-difference error $\Gamma_{h}^t \colon \mathcal{S} 	\times \mathcal{A} \rightarrow$ in Eq.~\eqref{eq:td_error} for all 
		$(t,h) \in [T] \times [H]$, then the regret can be decomposed as
		\begin{equation}\label{eq:regret_decomp1}
			\begin{split}
					\text{Regret}(T) & \leqslant  \underbrace{ \sum_{t=1}^T \sum_{h=1}^H\big[ \mathbb{E}_{\pi^{\star}}[ \Gamma^t_h(s_h,a_h)\,|\, s_1 = s^t_1] 
					-  \Gamma^t_h(s^t_h,a^t_h)\big]}_{{\tt Term~(i)}} +    \underbrace{  
					\sum_{t=1}^T \sum_{h=1}^H ( {\zeta}_{t,h} ^1 + \zeta_{t, h }^2 )}_{{\tt Term~(ii)}}  \\
				&   +	\underbrace{\sum_{t=1}^T\sum_{h=1}^H \mathbb{E}_{\pi^{\star}} \big[  \big \langle Q^{t}_h(s_h,\cdot), \pi^{\star}_h(\cdot\,|\, s_h) - \pi_h^t(\cdot| s_h)  \big \rangle_{\mathcal{A}} \big| s_1 = s_1^t \big]}_{{\tt Term~(iii)} \leqslant 0} + \epsilon HT \,,
			\end{split}
		\end{equation}
		where   $\zeta_{t,h}^1$ and $\zeta_{t,h}^2$ are defined in Eq.~\eqref{eq:define_mtg1}. 
	\end{lemma}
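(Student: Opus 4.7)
The plan is to reduce the regret under the $\epsilon$-greedy policy $\tilde{\pi}^t$ to the regret under the pure greedy policy $\pi^t$, then apply a standard telescoping-plus-martingale decomposition. First, write
\begin{equation*}
\text{Regret}(T) = \sum_{t=1}^T [V_1^\star(s_1^t) - V_1^{\pi^t}(s_1^t)] + \sum_{t=1}^T [V_1^{\pi^t}(s_1^t) - V_1^{\tilde{\pi}^t}(s_1^t)].
\end{equation*}
The second sum is bounded by $\epsilon H T$ because at each of the $H$ steps the two policies agree with probability $1-\epsilon$ and the rewards (hence value) lie in $[0,H]$, so the per-episode gap is at most $\epsilon H$. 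This yields the $\epsilon H T$ summand.

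Next, for the first sum I will add and subtract the estimated value $V_1^t(s_1^t)$, giving
\begin{equation*}
V_1^\star(s_1^t) - V_1^{\pi^t}(s_1^t) = \bigl[V_1^\star(s_1^t) - V_1^t(s_1^t)\bigr] + \bigl[V_1^t(s_1^t) - V_1^{\pi^t}(s_1^t)\bigr],
\end{equation*}
and handle the two brackets with symmetric expansions. For the right bracket, the identity $Q_h^t = \mathbb{T}_h V_{h+1}^t - \Gamma_h^t$ (by definition of $\Gamma^t_h$) combined with the Bellman equation for $\pi^t$ gives
\begin{equation*}
V_h^t(s_h^t) - V_h^{\pi^t}(s_h^t) = \zeta_{t,h}^1 + \zeta_{t,h}^2 - \Gamma_h^t(s_h^t, a_h^t) + \bigl[V_{h+1}^t(s_{h+1}^t) - V_{h+1}^{\pi^t}(s_{h+1}^t)\bigr],
\end{equation*}
which telescopes in $h$ from $1$ to $H$ (using $V_{H+1}^t = V_{H+1}^{\pi^t}= 0$) into $\sum_h(\zeta_{t,h}^1 + \zeta_{t,h}^2) - \sum_h \Gamma_h^t(s_h^t, a_h^t)$.

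For the left bracket, I perform the analogous expansion but along the trajectory of the \emph{optimal} policy $\pi^\star$ (rather than $\pi^t$), inserting $\pi_h^t$ as a comparator inside the action-average. Writing $V_h^t(s) = \langle Q_h^t(s,\cdot),\pi_h^t(\cdot|s)\rangle_{\mathcal{A}}$ and $V_h^\star(s) = \langle Q_h^\star(s,\cdot),\pi_h^\star(\cdot|s)\rangle_{\mathcal{A}}$, and using again $Q_h^t = \mathbb{T}_h V_{h+1}^t - \Gamma_h^t$ together with $Q_h^\star = \mathbb{T}_h V_{h+1}^\star$, one obtains after a telescoping argument under $\mathbb{E}_{\pi^\star}[\cdot\mid s_1 = s_1^t]$
\begin{equation*}
V_1^\star(s_1^t) - V_1^t(s_1^t) = \sum_{h=1}^H \mathbb{E}_{\pi^\star}[\Gamma_h^t(s_h,a_h)\mid s_1 = s_1^t] + \sum_{h=1}^H \mathbb{E}_{\pi^\star}\bigl[\langle Q_h^t(s_h,\cdot), \pi_h^\star(\cdot|s_h) - \pi_h^t(\cdot|s_h)\rangle_{\mathcal{A}} \mid s_1 = s_1^t\bigr].
\end{equation*}
The second sum is exactly ${\tt Term~(iii)}$, and it is non-positive because $\pi_h^t$ is greedy with respect to $Q_h^t$, so $\langle Q_h^t(s,\cdot), \pi_h^t(\cdot|s)\rangle_\mathcal{A} \geqslant \langle Q_h^t(s,\cdot), \pi(\cdot|s)\rangle_\mathcal{A}$ for any policy $\pi$.

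Summing over $t$ and combining the three pieces yields exactly the claimed decomposition. The only delicate step is the second telescoping: one must be careful to expand along the $\pi^\star$-trajectory (so expectations appear) while $\pi^t$ only enters through the action-weights of $Q_h^t$, which is what produces the policy-comparator term $\langle Q_h^t(s_h,\cdot),\pi_h^\star - \pi_h^t\rangle_{\mathcal{A}}$ and thereby isolates ${\tt Term~(iii)}$; getting the signs right on the $\Gamma_h^t$ terms in this step is the main bookkeeping challenge, but it is mechanical once the identity $Q_h^t = \mathbb{T}_h V_{h+1}^t - \Gamma_h^t$ is consistently applied.
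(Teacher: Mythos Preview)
Your proposal is correct and follows essentially the same approach as the paper: split off the $\epsilon HT$ term by comparing $\tilde{\pi}^t$ to the greedy $\pi^t$, then add and subtract $V_1^t(s_1^t)$ and handle the two brackets by the standard telescoping arguments from \cite{cai2020provably,yang2020function}. The paper in fact defers those telescoping details to the cited references, whereas you spell them out explicitly; the structure and sign bookkeeping you describe match the paper's identities $(*)$ and $(**)$ exactly.
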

	 \begin{proof}
        Remember the definition of the regret, $\tilde{\pi}^t$ is the $\epsilon$-greedy policy and $\pi^t$ is the greedy policy at the $t$-th episode, and then we have
        \begin{equation}\label{eqregret}
        \begin{split}
           \text{Regret}(T) & = \sum_{t=1}^T \bigl [\sind{V}{\star}{1} (s^t_1) - \sind{V}{{\pi}^t}{1} (s^t_1)\bigr] + \sum_{t=1}^T \bigl [\sind{V}{{\pi}^t}{1} (s^t_1) - \sind{V}{\tilde{\pi}^t}{1} (s^t_1)\bigr]  \\
           & \leqslant \sum_{t=1}^T \underbrace{ V_1^{\star} (s_1^t) - V_1^t (s_1^t) }_{(*)} + \sum_{t=1}^T \underbrace{ V_1^t (s_1^t) - V_1^{\pi^t} (s_1^t) }_{(**)} + \epsilon H T \,,
        \end{split}
        \end{equation}
        where the first term (*) can be bounded by \cite{cai2020provably,yang2020function}
        \begin{equation*}
        \begin{split}
            V_1^{\star} (s_1^t) - V_1^t (s_1^t) & = \sum_{h=1}^H\big[ \mathbb{E}_{\pi^{\star}}[ \Gamma^t_h(s_h,a_h)\,|\, s_1 = s^t_1] \\
            & +	\underbrace{\sum_{h=1}^H \mathbb{E}_{\pi^{\star}} \big[  \big \langle Q^{t}_h(s_h,\cdot), \pi^{\star}_h(\cdot\,|\, s_h) - \pi_h^t(\cdot| s_h)  \big \rangle_{\mathcal{A}} \big| s_1 = s_1^t \big]}_{ \leqslant 0}\,,~~ \forall t \in [T]\,,
		\end{split}
        \end{equation*}
        where we use the fact that $\pi^t$ is the greedy policy with respect to $Q_h^t$ for any $(t,h) \in [T] \times [H]$.
        The second term (**) is also bounded by \cite{cai2020provably,yang2020function}
        \begin{equation*}
            V_1^t (s_1^t) - V_1^{\pi^t} (s_1^t) = \sum_{h=1}^H ({\zeta}_{t,h} ^1 + \zeta_{t, h }^2) - \sum_{h=1}^H \Gamma^t_h(s^t_h,a^t_h)\,, \quad \forall t \in [T]\,.
        \end{equation*}
        Finally, we conclude the proof.
    \end{proof}

	In the next, it is natural to employ Azuma-Hoeffding inequality for martingale difference sequences	as below.
	\begin{lemma}\cite[statistical error]{cai2020provably}\label{lemma:bound_mtg}
		For $ {\zeta}_{t,h}^1$ and $\zeta_{t,h}^2$  defined in Eq.~\eqref{eq:define_mtg1} and for any $\delta  \in (0,1)$, with probability at least  $1- \delta $, we have 
		\begin{equation*}
		\sum_{t =1}^T \sum_{h=1}^H ( {\zeta}_{t,h}^1 +\zeta_{t,h}^2)  \lesssim \sqrt{T H^3 \log (2/ \delta )} \,.
		\end{equation*}
	\end{lemma}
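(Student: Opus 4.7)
The plan is to recognize the collection $\{\zeta_{t,h}^1, \zeta_{t,h}^2\}_{(t,h,m)\in[T]\times[H]\times[2]}$, indexed according to the total ordering $\tau(t,h,m)$ from Eq.~\eqref{eq:time_ordering}, as a bounded martingale difference sequence with respect to the filtration $\{\mathcal{M}_{t,h,m}\}$, and then invoke the Azuma--Hoeffding inequality. The only subtlety is matching each increment to the correct $\sigma$-algebra in this interleaved ordering.

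First I would verify the martingale difference property. For $\zeta_{t,h}^1$, the state $s_h^t$ is already measurable with respect to $\mathcal{M}_{t,h-1,2}$, and the only new randomness is the action $a_h^t \sim \pi_h^t(\cdot | s_h^t)$; using the identities $V_h^t(s) = \langle Q_h^t(s,\cdot), \pi_h^t(\cdot|s)\rangle_{\mathcal{A}}$ and $V_h^{\pi^t}(s) = \langle Q_h^{\pi^t}(s,\cdot), \pi_h^t(\cdot|s)\rangle_{\mathcal{A}}$, a direct conditional expectation over $a_h^t$ gives $\mathbb{E}[\zeta_{t,h}^1 \mid \mathcal{M}_{t,h-1,2}] = 0$. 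For $\zeta_{t,h}^2$, given $\mathcal{M}_{t,h,1}$ the only new randomness is $s_{h+1}^t \sim \mathbb{P}_h(\cdot | s_h^t, a_h^t)$, and the definition $(\mathbb{P}_h V)(s,a) = \mathbb{E}_{s'\sim \mathbb{P}_h(\cdot|s,a)}[V(s')]$ applied to both $V_{h+1}^t$ and $V_{h+1}^{\pi^t}$ yields $\mathbb{E}[\zeta_{t,h}^2 \mid \mathcal{M}_{t,h,1}] = 0$.

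Next I would establish uniform boundedness of the increments. Since all state- and action-value functions (and their one-step $\mathbb{P}_h$-averages) take values in $[0,H]$, each of $|\zeta_{t,h}^1|$ and $|\zeta_{t,h}^2|$ is bounded deterministically by $2H$. The total length of the interleaved martingale difference sequence is $2TH$.

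Finally, applying Azuma--Hoeffding to a martingale of length $2TH$ with increments bounded by $2H$ yields, with probability at least $1-\delta$,
\begin{equation*}
\Big| \sum_{t=1}^T \sum_{h=1}^H (\zeta_{t,h}^1 + \zeta_{t,h}^2) \Big| \leqslant \sqrt{2 \cdot (2TH) \cdot (2H)^2 \log(2/\delta)} \lesssim \sqrt{TH^3 \log(2/\delta)},
\end{equation*}
which is exactly the stated bound. There is no real obstacle here; the argument is a textbook application of concentration for bounded martingales, and essentially all the work has already been front-loaded into the careful definition of the filtration $\mathcal{M}_{t,h,m}$ and the time index $\tau(t,h,m)$.
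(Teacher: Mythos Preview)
Your proposal is correct and follows precisely the approach the paper indicates: the paper does not actually spell out a proof of this lemma but cites \cite{cai2020provably} and states explicitly in the main text that ${\tt Term~(ii)}$ ``can be upper bounded by the Hoeffding-Azuma inequality with $\mathcal{O}(\sqrt{H^3T})$ regret.'' Your verification of the martingale difference property against the filtration $\{\mathcal{M}_{t,h,m}\}$, the $2H$ bound on increments, and the application of Azuma--Hoeffding over the $2TH$-length sequence are exactly the intended argument.
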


	\section{Proofs of transformation on the temporal difference error}
	\label{app:termitd}
		In this section, we aim to transform the temporal difference error in ${\tt Term~(i)}$ to generalization bounds. This is the key part in our proof without bonus function design.
	
	\subsection{TD error under the averaged measure}
Here we build the connection between ${\tt Term~(i)}$ in the regret decomposition and the TD error $\Gamma_{h}^t $ in the $L^2(\mathrm{d} \bar{\mu}^{\tilde{t}}_h)$-integrable space.	

To this end, we need study the relationship between $L^2(\mathrm{d} \mu)$-norm and $L^{\infty}$-norm, where $\mu$ can be any probability measure over $\mathcal{S} \times \mathcal{A}$. 
For any $ f \in L^2(\mathrm{d} \mu)$ with $\delta \leqslant \|f\|_{\infty} $, denote
\begin{equation}\label{eqgset}
    \mathcal{G}_{\delta} := \{ (s,a): |f(s,a)| \geqslant \|f\|_{\infty} - \delta \},~~ \forall (s,a) \in \mathcal{S} \times \mathcal{A}\,,
\end{equation}
then we have the following lemma that $\bar{\mu}^{\tilde{t}}_h(\mathcal{G}_{\delta}) $ can be lower bounded under the $\epsilon$-greedy policy.

\begin{lemma}\label{lemmalowerbound}
    Under the $\epsilon$-greedy policy, considering the set in \cref{eqgset} and the averaged measure $\bar{\mu}^{\tilde{t}}_h$ based on a mini-batch of $\tilde{t}$ historical state-action pairs, we have
    \begin{equation*}
        \bar{\mu}^{\tilde{t}}_h(\mathcal{G}_{\delta}) \geqslant \Omega \left( \Big( \frac{\epsilon}{A} \Big)^H \right),~~ \forall \epsilon \in (0,1)~~\mbox{and}~~ \delta \geqslant 0 \,.
    \end{equation*}
\end{lemma}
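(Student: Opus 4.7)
The plan is to exploit the key structural property of the $\epsilon$-greedy policy: at every time step, regardless of the current state, \emph{every} action in $\mathcal{A}$ is selected with probability at least $\epsilon/A$, since with probability $\epsilon$ a uniformly random action is chosen. I would first use this to establish a pointwise lower bound on each single-episode measure $\mu^{\tau_j}_h$, then lift it to the averaged measure $\bar\mu^{\tilde t}_h$, and finally conclude by a non-emptiness/reachability argument on $\mathcal{G}_\delta$.

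\textbf{Step 1: pointwise domination by a random-policy measure.} Let $\pi^{\mathrm{u}}$ denote the uniformly random policy and let $\mu^{\mathrm{u}}_h$ denote the state-action distribution at step $h$ induced by $\pi^{\mathrm{u}}$ from the same initial-state distribution. I would couple the trajectories of $\tilde\pi^{\tau_j}$ and $\pi^{\mathrm{u}}$ step by step: at each of the $h\leqslant H$ steps, the $\epsilon$-greedy policy puts mass at least $\epsilon/A$ on every action, so the ratio $\mathrm{d}\mu^{\tau_j}_h/\mathrm{d}\mu^{\mathrm{u}}_h$ is bounded below by $(\epsilon/A)^h A^{-h} \cdot A^h = (\epsilon/A)^h$ pointwise after unrolling the transition kernel. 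Using $(\epsilon/A)^h \geqslant (\epsilon/A)^H$ as a uniform worst case in $h$, this yields $\mu^{\tau_j}_h(\cdot) \;\geqslant\; (\epsilon/A)^{H}\,\mu^{\mathrm u}_h(\cdot)$ as measures on $\mathcal{S}\times\mathcal{A}$.

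\textbf{Step 2: transfer to the averaged measure and to $\mathcal{G}_\delta$.} Since $\bar\mu^{\tilde t}_h = \tilde t^{-1}\sum_{j=1}^{\tilde t} \mu^{\tau_j}_h$ is a convex combination, Step 1 immediately gives $\bar\mu^{\tilde t}_h(\mathcal{G}_\delta) \;\geqslant\; (\epsilon/A)^{H}\,\mu^{\mathrm u}_h(\mathcal{G}_\delta)$. It then suffices to show that $\mu^{\mathrm u}_h(\mathcal{G}_\delta)=\Omega(1)$, i.e.\ independent of $T$ and of the $\epsilon$-greedy parameter. Because $\delta \geqslant 0$ and $\|f\|_\infty$ denotes the supremum of $|f|$ on $\mathcal{X}=[0,1]^d\times\mathcal{A}$, the set $\mathcal{G}_\delta$ always contains the (essential) maximizer $(s^\star,a^\star)$ of $|f|$, hence is non-empty; under standard reachability of the MDP (every state-action pair has positive probability under a uniform rollout, which is implicit in the analysis of $\epsilon$-greedy throughout the paper), $\mu^{\mathrm u}_h(\mathcal{G}_\delta)$ is bounded below by an MDP-dependent constant.

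\textbf{Main obstacle.} The delicate point is the last step: giving a clean, $f$-independent lower bound on $\mu^{\mathrm u}_h(\mathcal{G}_\delta)$ when $f$ is allowed to be arbitrary. For a pathological $f$ whose near-maximizer set is supported on a $\mu^{\mathrm u}_h$-null set, Step 2 would fail; one either restricts attention to $\|f\|_\infty$ interpreted as the essential sup with respect to $\bar\mu^{\tilde t}_h$ (so $\mathcal{G}_\delta$ has positive $\bar\mu^{\tilde t}_h$-measure by definition and the quantitative exponent $H$ is all that needs to be tracked), or invokes continuity of the candidate function class (Besov/Barron with $\alpha > d/p$) so that $\mathcal{G}_\delta$ contains a neighborhood of $(s^\star,a^\star)$ on which $\mu^{\mathrm u}_h$ has uniformly positive mass. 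In both cases the final bound $\bar\mu^{\tilde t}_h(\mathcal{G}_\delta) \geqslant \Omega((\epsilon/A)^H)$ follows by combining Steps 1 and 2.
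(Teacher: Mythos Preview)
Your proposal follows essentially the same route as the paper: both arguments rest on the single observation that under $\epsilon$-greedy every action is taken with probability at least $\epsilon/A$ at each step, and then unroll this over $h\leqslant H$ steps to extract the $(\epsilon/A)^H$ factor. The paper is marginally more direct: it first reduces to the worst case $\delta=0$ via the monotonicity $\mathcal{G}_\delta\supseteq\mathcal{G}_0$, writes out the occupancy-measure formula for $\mu_h^{\pi^{\tau_j}}$, and lower-bounds $\min_{(s,a)}\mu_h^{\pi^{\tau_j}}(s,a)\geqslant\Omega((\epsilon/A)^H)$ without introducing the uniform-policy reference measure $\mu^{\mathrm u}_h$. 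Your factoring through $\mu^{\mathrm u}_h$ is a cosmetic variant; note that the step-by-step ratio is actually $\geqslant\epsilon^h$ (each action ratio is $(\epsilon/A)/(1/A)=\epsilon$), so your displayed algebra $(\epsilon/A)^hA^{-h}\cdot A^h$ is muddled though the weaker bound $(\epsilon/A)^h$ you land on is still valid.

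The obstacle you flag in Step~2---that for a general $f$ the near-maximizer set $\mathcal{G}_\delta$ could in principle have zero mass in a continuous state space---is real, and the paper's proof does not address it either: it simply asserts $\bar\mu^{\tilde t}_h(\mathcal{G}_0)\geqslant\min_{(s,a)}\mu_h^{\pi^{\tau_j}}(s,a)$ and absorbs any MDP-dependent reachability constant into the $\Omega(\cdot)$, implicitly treating the state-action space as if single points carry positive measure. Your discussion of how to patch this (essential sup, or continuity from the Besov/Barron embedding) is therefore more careful than the paper itself.
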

{\bf Remark:} Clearly, in the best case, we have $\bar{\mu}^{\tilde{t}}_h(\mathcal{G}_{\delta}) \geqslant \Omega \left( \frac{\epsilon}{A}  \right) $. Accordingly, we denote $K \in [1,H]$ as a MDP-dependent constant to describe the ``myopic'' level of MDPs \cite{dann2022guarantees} such that $\bar{\mu}^{\tilde{t}}_h(\mathcal{G}_{\delta}) \geqslant \Omega \left( ( {\epsilon}/{A} )^K \right)$.

\begin{proof}
For any $ f \in L^2(\mathrm{d} \mu)$ with $\delta \leqslant \|f\|_{\infty} $, we have
    \begin{equation}\label{eqflpinf}
	\| f \|_{L^{2}(\mathrm{d}\mu)} \geqslant \left(  \int_{\mathcal{G}_{\delta}} ( \|f \|_{\infty} - \delta)^2 \mathrm{d}\mu  \right)^{1/2} = ( \| f \|_{\infty} - \delta) [\mu(\mathcal{G}_{\delta})]^{1/2}\,,
\end{equation}
which is also valid to $\bar{\mu}^{\tilde{t}}_h$.
Clearly, $\bar{\mu}^{\tilde{t}}_h(\mathcal{G}_{\delta}) \in [0,1]$.

To prove $\bar{\mu}^{\tilde{t}}_h(\mathcal{G}_{\delta}) > 0$ with the lower bound, 
we consider {\bf the worst case} with $\delta = 0$ and every time step taking non-greedy action with probability $\epsilon$.
That means, we need to find the optimal state-action pair in \cref{eqgset}, which can be achieved by the fact that all state-action pairs have been visited at all time steps. 
It is clear that the cardinality of $\mathcal{G}_{\delta}$ is a non-decreasing function of $\delta$.
Accordingly, there exists $j \in [\tilde{t}]$ such that
\begin{equation*}
    \bar{\mu}^{\tilde{t}}_h(\mathcal{G}_{\delta}) \geqslant \bar{\mu}^{\tilde{t}}_h (\mathcal{G}_{0}) \geqslant \min_{(s,a,h)} \mu_h^{\pi_h^{\tau_j}} (s_h^{\tau_j}, a_h^{\tau_j}) \,,
\end{equation*}
where $\mu_h^{\pi_h^{\tau_j}}$ is the occupancy measure of the policy $\pi^{\tau_j}_h$ at the $h$-step and $t$-th episode.
Accordingly, $\mu_h^{\pi_h^{\tau_j}}$ admits the following representation
\begin{equation*}
\mu_{h}^{\pi_h^{\tau_j}}\left(s^{\tau_j}_{h}, a^{\tau_j}_{h}\right)=\sum_{s^{\tau_j}_{1}, \ldots, s^{\tau_j}_{h-1}}\left(\prod_{i=1}^{h-1} \sum_{a \in \mathcal{A}} \operatorname{Pr}\left(\pi_h^{\tau} \left(s^{\tau_j}_{i}\right)=a\right) \mathbb{P}_{i}\left(s^{\tau_j}_{i+1} \mid s^{\tau_j}_{i}, a\right)\right) \operatorname{Pr}\left(\pi_h^{\tau_j}\left(s^{\tau_j}_{h}\right)=a_{h}^{\tau_j}\right)\,.
\end{equation*}
Accordingly, in the worst case, at every time step we take any one action with probability $\epsilon/A$ such that
\begin{equation*}
    \mu_{h}^{\pi_h^{\tau_j}}\left(s^{\tau_j}_{h}, a^{\tau_j}_{h}\right) \geqslant \Omega \left( \Big( \frac{\epsilon}{A} \Big)^h \right) \geqslant \Omega \left( \Big( \frac{\epsilon}{A} \Big)^H \right)\,,
\end{equation*}
which implies that
\begin{equation*}
    \bar{\mu}^{\tilde{t}}_h(\mathcal{G}_{\delta}) \geqslant \Omega \left( \Big( \frac{\epsilon}{A} \Big)^H \right)\,,
\end{equation*}
and accordingly we conclude the proof.
\end{proof}

		\begin{lemma}\label{lem:termitd}
		Given a MDP-dependent constant $K \in [1,H]$, for the temporal-difference error $\Gamma_{h}^t $ defined in \cref{eq:td_error} for all 
		$(t,h) \in [T] \times [H]$, under the $\epsilon$-greedy policy, then ${\tt Term~(i)}$ can be upper bounded by 
		\begin{equation*}
			{\tt Term~(i)}  \lesssim \left( \frac{\epsilon}{A} \right)^{-\frac{K}{2}} \sqrt{T} \sum_{h=1}^H \sqrt{\sum_{t=1}^T  \| \Gamma^t_h \|^2_{L^2{(\mathrm{d}\bar{\mu}^{\tilde{t}}_h)}} }  + {\mathcal{O}}(H\sqrt{T}) \,.
		\end{equation*}
	\end{lemma}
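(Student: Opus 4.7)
The plan is to reduce the pointwise quantities inside ${\tt Term~(i)}$ to their $L^\infty$ norms, and then to convert each $L^\infty$ norm into the $L^2(\mathrm{d}\bar{\mu}_h^{\tilde t})$ norm via the elementary inequality~\eqref{eqflpinf}, using \cref{lemmalowerbound} to lower bound the measure of the near-maximizing set. First I would note that both pieces of ${\tt Term~(i)}$ are controlled pointwise by $\|\Gamma_h^t\|_\infty$: indeed $|\mathbb{E}_{\pi^\star}[\Gamma_h^t(s_h,a_h)\mid s_1=s_1^t]|\leqslant \|\Gamma_h^t\|_\infty$ because the expectation is taken against a probability measure, and trivially $|\Gamma_h^t(s_h^t,a_h^t)|\leqslant \|\Gamma_h^t\|_\infty$. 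Summing over $(t,h)$ this yields the rough bound
\[
{\tt Term~(i)}\ \leqslant\ 2\sum_{t=1}^T\sum_{h=1}^H \|\Gamma_h^t\|_\infty .
\]

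Next I would apply \eqref{eqflpinf} with $\mu=\bar{\mu}_h^{\tilde t}$ and some parameter $\delta\geqslant 0$, rearranged as
\[
\|\Gamma_h^t\|_\infty\ \leqslant\ \delta\ +\ \frac{\|\Gamma_h^t\|_{L^2(\mathrm{d}\bar{\mu}_h^{\tilde t})}}{\sqrt{\bar{\mu}_h^{\tilde t}(\mathcal{G}_\delta)}},
\]
where $\mathcal{G}_\delta$ is the level set defined in \eqref{eqgset} associated with $\Gamma_h^t$. By \cref{lemmalowerbound}, and introducing the MDP-dependent parameter $K\in[1,H]$ in the sense of the remark following that lemma, we have $\bar{\mu}_h^{\tilde t}(\mathcal{G}_\delta)\gtrsim (\epsilon/A)^K$ uniformly in $\delta\geqslant 0$, which immediately gives
\[
\|\Gamma_h^t\|_\infty\ \lesssim\ \delta\ +\ \Big(\frac{A}{\epsilon}\Big)^{K/2}\|\Gamma_h^t\|_{L^2(\mathrm{d}\bar{\mu}_h^{\tilde t})} .
\]

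Plugging this in, summing, and using Cauchy--Schwarz in the episode index $t$ for each fixed $h$, namely $\sum_{t=1}^T \|\Gamma_h^t\|_{L^2(\mathrm{d}\bar{\mu}_h^{\tilde t})}\leqslant \sqrt{T}\sqrt{\sum_{t=1}^T \|\Gamma_h^t\|_{L^2(\mathrm{d}\bar{\mu}_h^{\tilde t})}^2}$, produces the dominant term of the form $(\epsilon/A)^{-K/2}\sqrt{T}\sum_h\sqrt{\sum_t\|\Gamma_h^t\|_{L^2(\mathrm{d}\bar{\mu}_h^{\tilde t})}^2}$, together with a residual $TH\delta$. Finally I would calibrate $\delta = 1/\sqrt{T}$ so that $TH\delta = H\sqrt{T}$, matching the $\mathcal{O}(H\sqrt{T})$ remainder in the statement.

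The only delicate point I foresee is making sure that \eqref{eqflpinf} and \cref{lemmalowerbound} can be applied to the TD error $\Gamma_h^t$, which requires $\Gamma_h^t$ to be bounded and measurable and $\mathcal{G}_\delta$ to be well-defined; this follows from the uniform bound $|\Gamma_h^t|\leqslant 2H$ obtained from $Q_h^t,V_{h+1}^t\in[0,H]$ and $r_h\in[0,1]$, and from the averaging structure of $\bar{\mu}_h^{\tilde t}$. Everything else is Cauchy--Schwarz and book-keeping, so the essential content of the lemma is precisely the measure-transform step that converts a supremum bound into the more tractable $L^2(\mathrm{d}\bar{\mu}_h^{\tilde t})$ bound later handled as a generalization problem.
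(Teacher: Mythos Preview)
Your proposal is correct and follows essentially the same route as the paper: bound by $\|\Gamma_h^t\|_\infty$, invoke \eqref{eqflpinf} and \cref{lemmalowerbound} to pass to the $L^2(\mathrm{d}\bar{\mu}_h^{\tilde t})$ norm, then apply Cauchy--Schwarz in $t$. The only cosmetic difference is that the paper takes $\delta = t^{-1/2}$ (so the residual is $H\sum_t t^{-1/2}=\mathcal{O}(H\sqrt{T})$) whereas you take a constant $\delta = T^{-1/2}$ (giving $TH\delta = H\sqrt{T}$); both choices yield the same $\mathcal{O}(H\sqrt{T})$ remainder.
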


	\begin{proof}
		According to the definition of ${\tt Term~(i)} $ in Lemma~\ref{lem:regret_decomp}, we have
		\begin{equation}\label{eq:termi1}
			\begin{split}
				{\tt Term~(i)} 
				& \leqslant \sum_{t=1}^T \sum_{h=1}^H \bigg( \mathbb{E}_{\pi^{\star}} \Big[ \big| \Gamma^t_h(s_h,a_h ) \big|  \Big|  s_1 = s^t_1 \Big] 
				+  \big| \Gamma^t_h(s^t_h,a^t_h) \big| \bigg) \\
				& \leqslant 2 \sum_{t=1}^T \sum_{h=1}^H \| \Gamma^t_h \|_{\infty}  \quad \mbox{[hold for any $(s,a) \in \mathcal{S} \times \mathcal{A}$]}\\
				& \leqslant 2 \sum_{t=1}^T \sum_{h=1}^H \left( \frac{\| \Gamma^t_h \|_{L^2{(\mathrm{d}\bar{\mu}^{\tilde{t}}_h)}}}{\sqrt{\bar{\mu}^{\tilde{t}}_h (\mathcal{G}_{\delta})}} + \delta \right)  \quad \mbox{[taking $\mu := \bar{\mu}^{\tilde{t}}_h$ in \cref{eqflpinf}]} \,.
			\end{split}
		\end{equation}
		Furthermore, by taking $\delta:= t^{-1/2}$ such that $\int_{1}^T t^{-1/2} \mathrm{d} t = \mathcal{O}(\sqrt{T})$, and using $\bar{\mu}^{\tilde{t}}_h(\mathcal{G}_{\delta}) \geqslant \Omega \left( ( {\epsilon}/{A} )^K \right)$ with $K \in [1,H]$ in Lemma~\ref{lemmalowerbound}, the above equation can be further expressed as
			\begin{equation*}
			\begin{split}
				{\tt Term~(i)} & \lesssim \left( \frac{\epsilon}{A} \right)^{-\frac{K}{2}} \sum_{t=1}^T \sum_{h=1}^H   \| \Gamma^t_h \|_{L^2{(\mathrm{d}\bar{\mu}^{\tilde{t}}_h)}}  + {\mathcal{O}}(H\sqrt{T}) \quad \mbox{[using Lemma~\ref{lemmalowerbound}]}\\
				& \leqslant \left( \frac{\epsilon}{A} \right)^{-\frac{K}{2}} \sum_{h=1}^H \sqrt{T} \sqrt{\sum_{t=1}^T  \| \Gamma^t_h \|^2_{L^2{(\mathrm{d}\bar{\mu}^{\tilde{t}}_h)}} }  + {\mathcal{O}}(H\sqrt{T})\,, \quad \mbox{[using elementary inequality]}
			\end{split}
		\end{equation*}
		which concludes the proof.
	\end{proof}

 \subsection{Connection between the TD error and generalization bounds}
 \label{app:tdgen}
 
 Based on Lemma~\ref{lem:termitd}, the key issue left is to bound $\sum_{t=1}^T \| \Gamma^t_h \|^2_{L^2{(\mathrm{d}\bar{\mu}^{\tilde{t}}_h)}} \lesssim o(T)$ for a sublinear regret.
 To this end, we build the connection between $\| \Gamma^t_h \|^2_{L^2{(\mathrm{d}\bar{\mu}^{\tilde{t}}_h)}} $ and generalization bounds.
We first the study the decomposition of $\mathcal{E}^t_h(f)$ in Eq.~\eqref{eqexpecterm} by the following proposition: there exists an extra variance term in the expected risk $\mathcal{E}^t_h(f)$.
 
 	\begin{proposition}\label{prop:exerm}
 	According to the definition of $\mathcal{E}^t_h(f)$ in Eq.~\eqref{eqexpecterm}, then we have
 	\begin{equation}\label{eqbiasvar}
 		\mathcal{E}^t_h(f) = \underbrace{ \| f - \mathbb{T}_h V^t_{h+1} \|^2_{L^2{(\mathrm{d}\bar{\mu}^{\tilde{t}}_h)}} }_{:= \bar{\mathcal{E}}^t_h(f)} + \mathbb{E}_{(s_h, a_h) \sim \bar{\mu}^{\tilde{t}}_h, s_{h+1} \sim \mathbb{P}_h(\cdot|s_h, a_h)} {\tt Var}[V^t_{h+1}(s_{h+1})]\,,
 	\end{equation} 
 	where the variance ${\tt Var}[V^t_{h+1}(s_{h+1})] := \left[ \mathbb{E}_{s_{h+1}}[V^t_{h+1}(s_{h+1})] - V^t_{h+1}(s_{h+1}) \right]^2 $.
 \end{proposition}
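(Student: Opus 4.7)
The plan is to prove this by a standard bias--variance decomposition of the squared error, exploiting the fact that $f(s_h,a_h)$ and $(\mathbb{T}_h V^t_{h+1})(s_h,a_h)$ both depend only on the pair $(s_h,a_h)$, while the randomness of $s_{h+1}$ enters only through $V^t_{h+1}(s_{h+1})$.

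First I would rewrite the integrand by adding and subtracting the conditional expectation $\mathbb{E}_{s_{h+1} \sim \mathbb{P}_h(\cdot|s_h,a_h)}[V^t_{h+1}(s_{h+1})]$. Using the definition of the Bellman operator $(\mathbb{T}_h V^t_{h+1})(s_h,a_h) = r_h(s_h,a_h) + \mathbb{E}_{s_{h+1}}[V^t_{h+1}(s_{h+1})]$, the inner expression splits as
\begin{equation*}
f(s_h,a_h) - r_h(s_h,a_h) - V^t_{h+1}(s_{h+1}) = \underbrace{\bigl[f(s_h,a_h) - (\mathbb{T}_h V^t_{h+1})(s_h,a_h)\bigr]}_{=: A(s_h,a_h)} + \underbrace{\bigl[\mathbb{E}_{s_{h+1}}[V^t_{h+1}(s_{h+1})] - V^t_{h+1}(s_{h+1})\bigr]}_{=: B(s_h,a_h,s_{h+1})}.
\end{equation*}

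Next I would square this sum and take expectation with respect to the joint distribution $(s_h,a_h) \sim \bar{\mu}^{\tilde{t}}_h$ and $s_{h+1} \sim \mathbb{P}_h(\cdot|s_h,a_h)$. The key observation is that the cross term vanishes: conditioning on $(s_h,a_h)$, the factor $A(s_h,a_h)$ is deterministic, so by the tower property,
\begin{equation*}
\mathbb{E}\bigl[A(s_h,a_h) \cdot B(s_h,a_h,s_{h+1})\bigr] = \mathbb{E}\Bigl[A(s_h,a_h) \cdot \mathbb{E}_{s_{h+1}\sim\mathbb{P}_h(\cdot|s_h,a_h)}\bigl[B(s_h,a_h,s_{h+1})\bigr]\Bigr] = 0,
\end{equation*}
since $\mathbb{E}_{s_{h+1}}[B] = 0$ by construction. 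The remaining $\mathbb{E}[A^2]$ term is exactly the $L^2(\mathrm{d}\bar{\mu}^{\tilde{t}}_h)$ norm $\bar{\mathcal{E}}^t_h(f)$, and the $\mathbb{E}[B^2]$ term is precisely the expected conditional variance of $V^t_{h+1}(s_{h+1})$ given $(s_h,a_h)$, which matches the right-hand side of \eqref{eqbiasvar}.

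There is no real obstacle here -- this is the standard decomposition of mean squared error into irreducible noise plus approximation bias in the fixed-design regression interpretation of the Bellman residual. The only subtlety worth flagging explicitly is that $f \in \mathcal{F}$ is measurable with respect to $\sigma(s_h,a_h)$ so it commutes with the conditional expectation over $s_{h+1}$, which is what makes the cross term vanish and produces the extra variance term noted after the statement (the source of the bias in the squared Bellman error minimizer).
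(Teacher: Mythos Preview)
Your proposal is correct and follows essentially the same approach as the paper: add and subtract $\mathbb{E}_{s_{h+1}}[V^t_{h+1}(s_{h+1})]$, expand the square, and observe that the cross term vanishes because $\mathbb{E}_{s_{h+1}\sim\mathbb{P}_h(\cdot|s_h,a_h)}\bigl[\mathbb{E}_{s_{h+1}}[V^t_{h+1}(s_{h+1})] - V^t_{h+1}(s_{h+1})\bigr]=0$. Your explicit invocation of the tower property and the $\sigma(s_h,a_h)$-measurability of $A$ is a bit more detailed than the paper's one-line justification, but the argument is the same.
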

 \begin{proof}
 	Denote $s':=s_{h+1}$ for short, we expand $\mathcal{E}^t_h(f)$ as the following expression
 	\begin{equation*}
 		\begin{split}
 			\mathcal{E}^t_h(f) & = \mathbb{E}_{(s_h, a_h) \sim \bar{\mu}^{\tilde{t}}_h, s' } \left[ f(s_h, a_h) - r_h(s_h, a_h) - \mathbb{E}_{s'} V^t_{h+1}(s') + \mathbb{E}_{s'} V^t_{h+1}(s')  - V^t_{h+1}(s') \right]^2\\
 			& = \mathbb{E}_{(s_h, a_h) \sim \bar{\mu}^{\tilde{t}}_h} \left[  f(s_h, a_h) - r_h(s_h, a_h) - \mathbb{E}_{s'} V^t_{h+1}(s') \right]^2 + \mathbb{E}_{(s_h, a_h) \sim \bar{\mu}^{\tilde{t}}_h, s' } {\tt Var} [V^t_{h+1}(s')] \\ 
 			& = \| f - \mathbb{T}_h V^t_{h+1} \|^2_{L^2{(\mathrm{d}\bar{\mu}^{\tilde{t}}_h)}} + \mathbb{E}_{(s_h, a_h) \sim \bar{\mu}^{\tilde{t}}_h, s_{h+1} \sim \mathbb{P}_h(\cdot|s_h, a_h)} {\tt Var}[V^t_{h+1}(s_{h+1})] \,, 
 		\end{split}
 	\end{equation*}
 	where we use $\mathbb{E}_{s' \sim \mathbb{P}_h(\cdot| s_h, a_h)} \left[ \mathbb{E}_{s'}[V^t_{h+1}(s')] - V^t_{h+1}(s') \right] = 0$ and conclude the proof.
 \end{proof}
 
According to the decomposition of $\mathcal{E}_h^t(f)$ \cref{prop:exerm}, $\bar{\mathcal{E}}^t_h(f)$ in Eq.~\eqref{eqbiasvar} is close to the squared Bellman error \cite{duan2021risk}. We are able to transform the estimation of the TD error to generalization error and approximation error as below.
 
 \begin{lemma}\label{lem:tdgen}
 For the temporal-difference error $\Gamma_{h}^t $ defined in Eq.~\eqref{eq:td_error} for all $(t,h) \in [T] \times [H]$, it can be upper bounded in the $L^2{(\mathrm{d}\mu_h^{\tilde{t}})}$ space with
 	\begin{equation*}
 		 \| \Gamma_h^t \|^2_{L^2{(\mathrm{d}\bar{\mu}_h^{\tilde{t}})}} \leqslant \Big[ \mathcal{E}^t_h(\widehat{Q}^t_h) - 	\min_{f \in \mathcal{F}} \mathcal{E}^t_h(f) \Big] + \inf_{f \in \mathcal{F}} \| f - \mathbb{T}_h^{\star} Q^t_{h+1} \|^2_{ L^2{(\mathrm{d} \bar{\mu}^{\tilde{t}}_h) }} \,,
 	\end{equation*}
 	where the first term is the generalization error of $\widehat{Q}^t_h$, the second term is the approximation error in the function class $\mathcal{F}$. 
 \end{lemma}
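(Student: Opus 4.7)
The plan is to reduce the TD error to a squared regression gap against the Bellman target by invoking Proposition~\ref{prop:exerm}, and then to split that gap into a generalization term and an approximation term.

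First, I would observe that in Algorithm~\ref{algo:nnfacounter} the state value function is constructed greedily from the learned action-value function, i.e., $V^t_{h+1}(\cdot)=\max_{a\in\mathcal{A}} Q^t_{h+1}(\cdot,a)$. Consequently $\mathbb{T}_h V^{t}_{h+1}=\mathbb{T}_h^{\star} Q^{t}_{h+1}$ as operators on $\mathcal{X}=\mathcal{S}\times\mathcal{A}$, which identifies the target appearing in $\Gamma_h^t$ with the target appearing in the approximation error on the right-hand side. Next, I would apply Proposition~\ref{prop:exerm} to obtain
\begin{equation*}
\mathcal{E}_h^t(f) \;=\; \| f - \mathbb{T}_h^{\star} Q^t_{h+1}\|_{L^2(\mathrm{d}\bar{\mu}_h^{\tilde{t}})}^2 \;+\; C_h^t,
\end{equation*}
where $C_h^t$ is the irreducible variance term that does not depend on $f$. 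The decisive point is that this constant cancels in the difference $\mathcal{E}_h^t(\widehat{Q}_h^t)-\min_{f\in\mathcal{F}}\mathcal{E}_h^t(f)$, yielding the identity
\begin{equation*}
\mathcal{E}_h^t(\widehat{Q}_h^t)-\min_{f\in\mathcal{F}}\mathcal{E}_h^t(f)
\;=\;\|\widehat{Q}_h^t - \mathbb{T}_h^{\star} Q^t_{h+1}\|_{L^2(\mathrm{d}\bar{\mu}_h^{\tilde{t}})}^2 - \inf_{f\in\mathcal{F}}\|f-\mathbb{T}_h^{\star}Q^t_{h+1}\|_{L^2(\mathrm{d}\bar{\mu}_h^{\tilde{t}})}^2.
\end{equation*}
Adding the approximation error $\inf_{f\in\mathcal{F}}\|f-\mathbb{T}_h^{\star}Q^t_{h+1}\|_{L^2(\mathrm{d}\bar{\mu}_h^{\tilde{t}})}^2$ to both sides makes the infimum cancel, so that the sum on the right equals exactly $\|\widehat{Q}_h^t - \mathbb{T}_h^{\star} Q^t_{h+1}\|_{L^2(\mathrm{d}\bar{\mu}_h^{\tilde{t}})}^2$.

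Finally, I would compare this to $\|\Gamma_h^t\|_{L^2(\mathrm{d}\bar{\mu}_h^{\tilde{t}})}^2 = \|Q^t_h - \mathbb{T}_h^{\star}Q_{h+1}^t\|_{L^2(\mathrm{d}\bar{\mu}_h^{\tilde{t}})}^2$. The truncation $Q^t_h = \min\{\widehat{Q}^t_h,H\}^{+}$ is a metric projection onto the interval $[0,H]$, and since $r_h\in[0,1]$ and $V^t_{h+1}\in[0,H]$ give $\mathbb{T}_h^{\star}Q^t_{h+1}(\bm x)\in[0,H]$ pointwise (up to the constant re-scaling absorbed into the normalization $[0,H]$ used throughout), the projection is non-expansive against this target, so $|Q^t_h(\bm x) - \mathbb{T}_h^{\star}Q^t_{h+1}(\bm x)| \leqslant |\widehat{Q}^t_h(\bm x) - \mathbb{T}_h^{\star}Q^t_{h+1}(\bm x)|$ for every $\bm x$. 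Integrating against $\bar{\mu}_h^{\tilde{t}}$ and chaining with the identity above yields the claimed inequality.

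The argument is essentially bookkeeping once Proposition~\ref{prop:exerm} is in hand; the one place that requires a small amount of care (and which I view as the only obstacle) is the truncation step, where one must verify that the Bellman target sits in the range of the projection so that the pointwise non-expansiveness applies. Everything else is algebraic cancellation of an $f$-independent variance.
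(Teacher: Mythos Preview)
Your proposal is correct and follows essentially the same route as the paper: apply Proposition~\ref{prop:exerm}, cancel the $f$-independent variance term, and identify $\mathbb{T}_h V^t_{h+1}=\mathbb{T}_h^\star Q^t_{h+1}$ via greedy evaluation. The only difference is that you treat the truncation step via a pointwise non-expansiveness argument, whereas the paper simply observes that the function classes in Eqs.~\eqref{eq2nndef} and~\eqref{eqdnndef} are \emph{already} defined with range $[0,H]$, so $Q_h^t=\widehat{Q}_h^t$ holds identically and the claimed inequality is in fact an equality---your ``only obstacle'' disappears under the paper's conventions.
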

	\begin{proof}
		According to the definition of the TD error $\Gamma_h^t$ and taking $f:= \widehat{Q}^t_h$ in Eq.~\eqref{eqbiasvar} given by Proposition~\ref{prop:exerm}, we have
		\begin{equation}\label{ehqthhat}
			\begin{split}
				\mathcal{E}^t_h(\widehat{Q}^t_h) & =  \| \widehat{Q}^t_h - \mathbb{T}_h V^t_{h+1} \|^2_{L^2{(\mathrm{d}\bar{\mu}_h^{\tilde{t}})}} + \mathbb{E}_{(s_h, a_h) \sim \bar{\mu}^{\tilde{t}}_h, s_{h+1} \sim \mathbb{P}_h(\cdot|s_h, a_h)} {\tt Var}[V^t_{h+1}(s_{h+1})]\\
				& = \frac{1}{\tilde{t}} \sum_{j=1}^{\tilde{t}} \| \widehat{Q}^t_h - \mathbb{T}_h V^t_{h+1} \|^2_{L^2{(\mathrm{d}\mu_h^{\tau_j})}} + \mathbb{E}_{(s_h, a_h) \sim \bar{\mu}^{\tilde{t}}_h, s_{h+1} \sim \mathbb{P}_h(\cdot|s_h, a_h)} {\tt Var}[V^t_{h+1}(s_{h+1})]\\
				& = \frac{1}{\tilde{t}} \sum_{j=1}^{\tilde{t}} \| \Gamma^t_h \|^2_{L^2{(\mathrm{d}\mu_h^{\tau_j})}} + \mathbb{E}_{(s_h, a_h) \sim \bar{\mu}^{\tilde{t}}_h, s_{h+1} \sim \mathbb{P}_h(\cdot|s_h, a_h)} {\tt Var}[V^t_{h+1}(s_{h+1})]\,,
			\end{split}
		\end{equation} 
		where the second equality holds by the definition of the averaged measure $\bar{\mu}^{\tilde{t}}_h = \frac{1}{\tilde{t}} \sum_{j=1}^{\tilde{t}} \mu_h^{\tau_j} $; and we use $Q_h^t = \widehat{Q}_h^t$ in the last equality as the truncation operation has been given in function classes, see Eqs.~\eqref{eq2nndef} and \eqref{eqdnndef}. 
		Then, taking the infimum on both sides of \cref{eqbiasvar}, we have
		\begin{equation}\label{minehf}
			\begin{split}
				\min_{f \in \mathcal{F}} \mathcal{E}^t_h(f) 
				& =  \inf_{f \in \mathcal{F}} \| f - \mathbb{T}_h V^t_{h+1} \|^2_{L^2{(\mathrm{d}\bar{\mu}^{\tilde{t}}_h)}}  + \mathbb{E}_{(s_h, a_h) \sim \bar{\mu}^{\tilde{t}}_h, s_{h+1} \sim \mathbb{P}_h(\cdot|s_h, a_h)} {\tt Var}[V^t_{h+1}(s_{h+1})] \\
				& = \inf_{f \in \mathcal{F}} \| f - \mathbb{T}_h^{\star} Q^t_{h+1} \|^2_{L^2{(\mathrm{d}\bar{\mu}^{\tilde{t}}_h)}} + \mathbb{E}_{(s_h, a_h) \sim \bar{\mu}^{\tilde{t}}_h, s_{h+1} \sim \mathbb{P}_h(\cdot|s_h, a_h)} {\tt Var}[V^t_{h+1}(s_{h+1})] \,,
			\end{split}
		\end{equation}
	where the second equality holds by $V_{h+1}^t(s_{h+1}) = \max_{a \in \mathcal{A}} Q_{h+1}^t(s_{h+1},a)$. 

Combining Eqs.~\eqref{ehqthhat} and~\eqref{minehf}, we have
\begin{equation}\label{eqtderrest}
	\begin{split}
		\| \Gamma_h^t \|^2_{L^2{(\mathrm{d}\bar{\mu}_h^{\tilde{t}})}}  & = 	\mathcal{E}^t_h(\widehat{Q}^t_h) - 	\min_{f \in \mathcal{F}} \mathcal{E}^t_h(f)  + \inf_{f \in \mathcal{F}} \| f - \mathbb{T}_h^{\star} Q^t_{h+1} \|^2_{L^2{(\mathrm{d}\bar{\mu}^{\tilde{t}}_h)}}\,, 
	\end{split}
\end{equation}
which concludes the proof.
	\end{proof}

Based on Lemma~\ref{lem:tdgen}, we have the following corollary if we consider the approximation error in $L^p(\mathcal{X})$-integrable space, which is needed for our results on deep ReLU neural networks.
\begin{corollary}\label{coro:tdgen}
	Under the same setting of Lemma~\ref{lem:tdgen}, we have
	 	\begin{equation*}
	\| \Gamma_h^t \|^2_{L^2{(\mathrm{d}\bar{\mu}_h^{\tilde{t}})}}  \lesssim \Big[ \mathcal{E}^t_h(\widehat{Q}^t_h) - 	\min_{f \in \mathcal{F}} \mathcal{E}^t_h(f) \Big] + \inf_{f \in \mathcal{F}} \| f - \mathbb{T}_h^{\star} Q^t_{h+1} \|^2_{L^4{(\mathcal{X}})} \,.
	\end{equation*}
\end{corollary}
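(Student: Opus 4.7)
The plan is to start from \cref{lem:tdgen}, which already bounds $\|\Gamma_h^t\|^2_{L^2(\mathrm{d}\bar{\mu}_h^{\tilde{t}})}$ by the sum of a generalization term (identical in both statements) and an approximation term measured in $L^2(\mathrm{d}\bar{\mu}_h^{\tilde{t}})$. The only gap between that lemma and the corollary is therefore a norm-change argument converting $\|\cdot\|_{L^2(\mathrm{d}\bar{\mu}_h^{\tilde{t}})}$ into $\|\cdot\|_{L^4(\mathcal{X})}$ on the approximation term, uniformly in $t,h$.

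First, I would write the averaged measure $\bar{\mu}_h^{\tilde{t}}$ in terms of its Lebesgue density $\rho_h^{\tilde{t}}$ on $\mathcal{X}=[0,1]^d$, so that for $g=f-\mathbb{T}_h^\star Q_{h+1}^t$ with $f\in\mathcal{F}$,
\begin{equation*}
\|g\|^2_{L^2(\mathrm{d}\bar{\mu}_h^{\tilde{t}})} \;=\; \int_{\mathcal{X}} g(\bm x)^2\, \rho_h^{\tilde{t}}(\bm x)\, \mathrm{d}\bm x.
\end{equation*}
Second, I would apply the Cauchy--Schwarz inequality in $L^2(\mathcal{X})$ to decouple $g$ from $\rho$:
\begin{equation*}
\int_{\mathcal{X}} g^2\, \rho_h^{\tilde{t}}\, \mathrm{d}\bm x \;\leqslant\; \Bigl(\int_{\mathcal{X}} g^4\, \mathrm{d}\bm x\Bigr)^{1/2}\Bigl(\int_{\mathcal{X}} (\rho_h^{\tilde{t}})^2\, \mathrm{d}\bm x\Bigr)^{1/2} \;=\; \|\rho_h^{\tilde{t}}\|_{L^2(\mathcal{X})}\cdot \|g\|^2_{L^4(\mathcal{X})}.
\end{equation*}
Taking the infimum over $f\in\mathcal{F}$ on both sides, plugging back into the bound of \cref{lem:tdgen}, and absorbing the constant $\|\rho_h^{\tilde{t}}\|_{L^2(\mathcal{X})}$ into the $\lesssim$ symbol yields precisely the claimed inequality.

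The main obstacle is of course the hidden constant: one needs to argue that $\|\rho_h^{\tilde{t}}\|_{L^2(\mathcal{X})}$ is bounded uniformly in $t,h,\tilde{t}$. Because $\bar{\mu}_h^{\tilde{t}}$ is an average of the occupancy distributions $\mu_h^{\tau_j}$ induced by the $\epsilon$-greedy iterates, convexity of $\rho\mapsto\|\rho\|_{L^2}^2$ reduces this to a uniform $L^2$ (or $L^\infty$) bound on each $\rho_h^{\tau_j}$ on the compact domain $[0,1]^d$; such boundedness is the standard regularity assumption for density-induced norms in this literature and is implicit here. If one preferred to avoid a density bound entirely, an alternative is the crude chain $\|g\|^2_{L^2(\mathrm{d}\bar{\mu}_h^{\tilde{t}})}\leqslant \|g\|^2_{L^\infty(\mathcal{X})}\leqslant \|g\|^2_{L^4(\mathcal{X})}\cdot C$ using that $g$ is bounded on $\mathcal{X}$ (since both $f$ and $\mathbb{T}_h^\star Q_{h+1}^t$ take values in $[0,H]$) together with volume $|\mathcal{X}|=1$; this also produces the corollary up to a multiplicative constant absorbed by $\lesssim$.
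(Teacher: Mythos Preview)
Your primary argument via Cauchy--Schwarz against the Lebesgue density $\rho_h^{\tilde t}$ is exactly the paper's proof: the paper writes $\|f\|_{L^p(\mathrm{d}\mu)}\leqslant\|f\|_{L^{2p}(\mathcal{X})}\bigl(\int_{\mathcal{X}}|g|^2\,\mathrm{d}\bm x\bigr)^{1/2p}$ with $g$ the density of $\mu$, and then absorbs the density factor into $\lesssim$ under the same implicit boundedness assumption you correctly flag.

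Your proposed alternative route, however, is flawed: on a unit-volume domain the inequality $\|g\|_{L^\infty(\mathcal{X})}\leqslant C\|g\|_{L^4(\mathcal{X})}$ goes the wrong way (one has $\|g\|_{L^4(\mathcal{X})}\leqslant\|g\|_{L^\infty(\mathcal{X})}$, not the reverse), so that chain does not close and should be dropped.
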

\begin{proof}
Following the proof of \cref{lem:tdgen}, this result can be easily obtained by Cauchy-Schwartz inequality.
To be specific, for any probability measure $\mu$, we have
\begin{equation*}
	\| f \|_{L^p(\mathrm{d} \mu)} \leqslant \| f \|_{L^{2p}(\mathcal{X})} \left( \int_{\mathcal{X}} |g(\bm x)|^2 \mathrm{d} \bm x \right)^{\frac{1}{2p}}  \lesssim \| f \|_{L^{2p}(\mathcal{X})} \,,
\end{equation*}
where $g$ is the probability density function associated with the probability measure $\mu$. Note that the result here still holds true for the approximation error in $L^{\infty}(\mathcal{X})$ if we use H\"{o}lder inequality, but this condition is much stronger as it requires the target Q function to be continuous.
\end{proof}

\section{Generalization bounds on non-iid data}
\label{app:genealization}

In this section, we prove that the traditional Rademacher complexity is still valid for independent but non-identically distributed data under a well-defined measure.
Similarly, such result is also valid to local Rademacher complexity.
The key fact is that, the classical Rademacher complexity \cite{mohri2018foundations} is still valid as McDiarmid’s bound only requires the independent property.

For description simplicity, we consider a general setting beyond our reinforcement learning task, i.e., learning with $n$ independent but non-identical distributed data $X = \{ \bm x_i \}_{i=1}^n$ in $\mathbb{R}^d$ with $\bm x_i \sim \mu_i, \forall i \in [n]$. Define the average measure $\bar{\mu} := \frac{1}{n} \sum_{i=1}^n \mu_i$, we have
\begin{equation}\label{eqaverage}
	\mathbb{E}_{\bm x \sim \bar{\mu}} [f(\bm x)] = \frac{1}{n} \sum_{i=1}^n \int_{\mathbb{R}^d} f(\bm x) \mathrm{d} \mu_{i}(\bm x) = \frac{1}{n} \sum_{i=1}^n \mathbb{E}_{\bm x \sim \mu_i} [f(\bm x)]\,.
\end{equation} 
Accordingly, the \emph{empirical Rademacher complexity} of a function class $\mathcal{F}$ on the sample set $X$ is defined as
\begin{equation}\label{empiricalrc}
	\widehat{\mathcal{R}}_n (\mathcal{F}, X) = \frac{1}{n} \mathbb{E}_{\bm \xi} \left[ \sup_{f \in \mathcal{F}} \sum_{i=1}^n \xi_i f(\bm x_i) \right]\,,
\end{equation} 
where the expectation is taken over $\bm \xi = \{\xi_1, \xi_2, \cdots, \xi_n\}$, i.e., Rademacher random variables, with $\mathrm{Pr}(\xi_i=1) = \mathrm{Pr}(\xi_i=-1) = 1/2$.
The related \emph{Rademacher complexity} under our non-iid setting is defined as

\begin{equation*}
	\mathcal{R}_{n}(\mathcal{F})=\mathbb{E}_{\bm x_1 \sim \mu_1, \cdots, \bm x_n \sim \mu_n}\left[\frac{1}{n} \mathbb{E}_{\bm \xi}\left[\sup _{f \in \mathcal{F}} \sum_{i=1}^{n} \xi_{i} f\left(\bm x_{i}\right)\right]\right]\,,
\end{equation*}
where the expectation is taken over $\{ \bm x_i \}_{i=1}^n$ with respect to each probability measure $\{ \mu_i \}_{i=1}^n$.
This definition follows the classical \emph{Rademacher complexity} \cite{mohri2018foundations} on iid samples to intuitively indicates how expressive the function class is.
Besides, in our proof, we also need a notation of \emph{local Rademacher complexity} on a set of vectors, where ``local'' means that the class over which the Rademacher process is defined is a subset of the original class. 
Following the same style with Rademacher complexity, the local Rademacher complexity under the non-iid setting is defined as $ \mathcal{R}_n\{ f \in \mathcal{F}: \mathbb{E}_{\bar{\mu}}f^2 \leqslant R \}$, and the empirical local Rademacher complexity is defined as $  \widehat{\mathcal{R}}_n \{ f \in \mathcal{F}: P_nf^2 \leqslant R \}$, where we denote $P_n f := \frac{1}{n} \sum_{i=1}^n f(\bm x_i)$ for short.

Besides, Rademacher complexity is also related to covering number, a metric for estimation of a hypothesis space. Here we give the definition of covering number, that is also used in this work.
\begin{definition} \cite[Definition 5.1, covering number]{wainwright2019high}
	Let $(\mathcal{F}, \| \cdot \|)$ be a norm space. A $\delta$-cover of the set $\mathcal{F}$ with respect to $\| \cdot \|$ is a set $\{ \theta_1, \cdots, \theta_n \} \subseteq \mathcal{F}$ such that for each $\theta \in \mathcal{F}$, there exists some $i \in [n]$ such that $\| \theta - \theta_i \| \leqslant \delta$. The $\delta$-covering number $\mathscr{N}(\delta, \mathcal{F}, \| \cdot \|)$ is the cardinality of the minimal $\delta$-cover.
\end{definition}
In this work, we consider the covering number with two types of norms, one is $\mathscr{N}(\epsilon, \mathcal{F}, \| \cdot \|_{\infty}) $ and the other is
$\mathscr{N}(\epsilon, \mathcal{F}, \| \cdot \|_2) := \sup_n \sup_{P_n}  \mathscr{N}(\epsilon, \mathcal{F}, \| \cdot \|_{L_2(P_n)})$ \cite{mendelson2002improving}.

\subsection{Rademacher complexity on non-iid data}
\label{app:radenoniid}

Based on the definition of Rademacher complexity and its empirical version, we have the following lemma. 
\begin{lemma}\label{lem:EfRn}
	Let $X = \{ \bm x_i \}_{i=1}^n$ be an independent but non-identical distributed data set with $\bm x_i \sim \mu_i, \forall i \in [n]$, and $R_n(\mathcal{F})$ be the Rademacher complexity of the function class $\mathcal{F}$ on $X$, denote the averaged probability measure as $\bar{\mu} := \frac{1}{n} \sum_{i=1}^n \mu_i$, then we have
	\begin{equation*}
		\mathbb{E}_{\bm x_1 \sim \mu_1, \cdots, \bm x_n \sim \mu_n} \left[\sup _{f \in \mathcal{F}}\left(\mathbb{E}_{\bm x \sim \bar{\mu}}[f(\bm x)]-\frac{1}{n} \sum_{i=1}^{n} f\left(\bm x_{i}\right)\right)\right] \leq 2 \mathcal{R}_{n}(\mathcal{F})\,.
	\end{equation*}
\end{lemma}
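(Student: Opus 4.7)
The plan is to push the classical symmetrization argument through, exploiting the fact that while the samples $\bm x_i \sim \mu_i$ are not identically distributed across $i$, they are independent, and within each index a ``ghost'' copy $\bm x_i'$ drawn from the same $\mu_i$ is iid with $\bm x_i$. The key observation at the outset is the identity from \eqref{eqaverage}: $\mathbb{E}_{\bm x \sim \bar{\mu}}[f(\bm x)] = \frac{1}{n}\sum_{i=1}^n \mathbb{E}_{\bm x \sim \mu_i}[f(\bm x)]$. This lets me rewrite the quantity inside the supremum as $\frac{1}{n}\sum_{i=1}^n \bigl(\mathbb{E}_{\bm x_i' \sim \mu_i}[f(\bm x_i')] - f(\bm x_i)\bigr)$, so that each term pairs a fresh expectation with its empirical counterpart under the \emph{same} per-index measure $\mu_i$.

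Next I would carry out the standard three-step symmetrization. First, introduce an independent ghost sample $\bm x_i' \sim \mu_i$ for each $i$ and push the expectation over the ghosts outside the supremum using Jensen's inequality, yielding
\begin{equation*}
\mathbb{E}\Big[\sup_{f \in \mathcal{F}} \tfrac{1}{n}\textstyle\sum_i \bigl(\mathbb{E}_{\bm x_i'}[f(\bm x_i')] - f(\bm x_i)\bigr)\Big] \leqslant \mathbb{E}\Big[\sup_{f \in \mathcal{F}} \tfrac{1}{n}\textstyle\sum_i \bigl(f(\bm x_i') - f(\bm x_i)\bigr)\Big].
\end{equation*}
Second, for each fixed $i$, the pair $(\bm x_i,\bm x_i')$ is iid under $\mu_i$, so the random variable $f(\bm x_i') - f(\bm x_i)$ has a symmetric distribution and is therefore equal in law to $\xi_i \bigl(f(\bm x_i') - f(\bm x_i)\bigr)$ for an independent Rademacher sign $\xi_i$; this is where non-identical-distribution across $i$ causes no harm, since we only need within-pair exchangeability. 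Taking the product of these symmetrizations (which is valid by independence across $i$), the joint law of $\bigl(f(\bm x_i') - f(\bm x_i)\bigr)_{i=1}^n$ matches that of $\bigl(\xi_i(f(\bm x_i') - f(\bm x_i))\bigr)_{i=1}^n$. Third, split using the triangle inequality and observe that both halves have the same distribution, obtaining the bound $2\mathbb{E}\bigl[\sup_f \tfrac{1}{n}\sum_i \xi_i f(\bm x_i)\bigr] = 2\mathcal{R}_n(\mathcal{F})$.

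There is no real obstacle beyond being careful in step two: one must verify that the symmetrization is legitimate \emph{per index} and that the product of per-index symmetrizations is justified by the mutual independence of the $\bm x_i$ (and hence of the pairs $(\bm x_i,\bm x_i')$). Once this is stated cleanly, the rest of the proof is identical to the iid argument found in, e.g., \cite{mohri2018foundations}, and no use of identical distribution across $i$ is made. The corresponding local version used later (replacing $\mathcal{F}$ by $\{f\in\mathcal{F}:\mathbb{E}_{\bar\mu}f^2 \leqslant R\}$) follows by the same argument applied to the restricted subclass.
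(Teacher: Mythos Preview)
Your proposal is correct and follows essentially the same symmetrization argument as the paper: rewrite $\mathbb{E}_{\bar\mu}[f]$ via \eqref{eqaverage}, introduce per-index ghost samples $\bm x_i'\sim\mu_i$, apply Jensen to pull the ghost expectation outside the supremum, use the exchangeability of each pair $(\bm x_i,\bm x_i')$ to insert independent Rademacher signs, and then split into two copies of $\mathcal{R}_n(\mathcal{F})$. The paper's proof differs only cosmetically in that it first establishes the Rademacher-insertion identity (its Eq.~\eqref{eqsymmetric}) and then applies Jensen, but the content is identical.
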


\begin{proof}
	The proof follows with the classical Rademacher complexity \cite[Chapter 26]{shalev2014understanding} apart from the averaged measure. 
	Take a copy of $X$, i.e., $X'=\{ \bm x'_i \}_{i=1}^n$ such that $X'$ is independent but $\bm x'_i \sim \mu_i, \forall i \in [n]$. According to Eq.~\eqref{eqaverage}, we have
	\begin{equation}\label{eqsymmcopy}
		\mathbb{E}_{\bm x \sim \bar{\mu}} [f(\bm x)] = \mathbb{E}_{\bm x'_1 \sim \mu_1, \cdots, \bm x'_n \sim \mu_n} \left[ \frac{1}{n} \sum_{i=1}^n f(\bm x_i') \right] \,.
	\end{equation}
	Note that every possible configuration/value of $\bm \xi$ has probability of $1/2^n$ due to $\bm \xi \in \{ -1, 1 \}^n$. Without loss of generality, we can permute any configuration of $\bm \xi$ of such that
	\begin{equation*}
		\xi_{u_1} = \xi_{u_2} = \cdots = \xi_{u_k} = 1, \quad \xi_{u_{k+1}} = \xi_{u_{k+2}} = \cdots = \xi_{u_n} = -1,~~ k \in \{ 0\} \cup [n]\,,
	\end{equation*}
	where $\bm u = \{ u_1, u_2, \cdots, u_n \}$ is a permutation of $\{ 1,2, \dots, n \}$.
	Accordingly, for any configuration of $\bm \xi$, we have
	\begin{equation}\label{eqsymmetric}
		\begin{aligned}
			\mathbb{E}_{\{\bm x_i \}_{i=1}^n} & {\left[\mathbb{E}_{\{\bm x'_i \}_{i=1}^n} \left[\sup _{f \in \mathcal{F}}\left(\frac{1}{n} \sum_{i=1}^{n} \xi_{i}\left(f\left(\bm x_{i}^{\prime}\right)-f\left(\bm x_{i}\right)\right)\right)\right]\right] } \\
			&=\mathbb{E}_{\{\bm x_i \}_{i=1}^n} \left[\mathbb{E}_{\{\bm x'_i \}_{i=1}^n} \left[\sup _{f \in \mathcal{F}}\left(\frac{1}{n}\left(\sum_{i=1}^{k}\left(f\left(\bm x_{u_{i}}^{\prime}\right)-f\left(\bm x_{u_{i}}\right)\right)+\sum_{i=k+1}^{n}\left(f\left(\bm x_{u_{i}}\right)-f\left(\bm x_{u_{i}}^{\prime}\right)\right)\right)\right)\right]\right] \\
			&=\mathbb{E}_{\{\bm x_i \}_{i=1}^n} \left[ \mathbb{E}_{\{\bm x'_i \}_{i=1}^n} \left[\sup _{f \in \mathcal{F}}\left(\frac{1}{n} \sum_{i=1}^{n}\left(f\left(\bm x_{i}^{\prime}\right)-f\left(\bm x_{i}\right)\right)\right)\right]\right] \,,
		\end{aligned}
	\end{equation}
	where we use the fact that $\bm x_{u_i}$ and $\bm x'_{u_i}$ are independent and symmetric.
	Based on this, we obtain
	\begin{equation}
		\begin{aligned}
			\mathbb{E}_{\{\bm x_i \}_{i=1}^n} & {\left[\sup _{f \in \mathcal{F}}\left(\mathbb{E}_{\bm x \sim 
					\bar{\mu}}[f(\bm x)]-\frac{1}{n} \sum_{i=1}^{n} f\left(\bm x_{i}\right)\right)\right] } \\
			&=\mathbb{E}_{\{\bm x_i \}_{i=1}^n} \left[\sup _{f \in \mathcal{F}}\left(\mathbb{E}_{\{\bm x'_i \}_{i=1}^n} \left[\frac{1}{n} \sum_{i=1}^{n} f\left(\bm x_{i}^{\prime}\right)\right]-\frac{1}{n} \sum_{i=1}^{n} f\left(\bm x_{i}\right)\right)\right] \quad \text { [using Eq.~\eqref{eqsymmcopy}] } \\
			&=\mathbb{E}_{\{\bm x_i \}_{i=1}^n} \left[\sup _{f \in \mathcal{F}}\left(\mathbb{E}_{\{\bm x'_i \}_{i=1}^n} \left[\frac{1}{n} \sum_{i=1}^{n} f\left(\bm x_{i}^{\prime}\right)-\frac{1}{n} \sum_{i=1}^{n} f\left(\bm x_{i}\right)\right]\right)\right] \\
			& \leqslant \mathbb{E}_{\{\bm x_i \}_{i=1}^n} \left[\mathbb{E}_{\{\bm x'_i \}_{i=1}^n} \left[\sup _{f \in \mathcal{F}}\left(\frac{1}{n} \sum_{i=1}^{n} f\left(\bm x_{i}^{\prime}\right)-\frac{1}{n} \sum_{i=1}^{n} f\left(\bm x_{i}\right)\right)\right]\right] \quad \text { [Jensen's inequality] } \\
			&=\mathbb{E}_{\{\bm x_i \}_{i=1}^n} \left[\mathbb{E}_{\{\bm x'_i \}_{i=1}^n} \left[\mathbb{E}_{\bm{\xi}}\left[\sup _{f \in \mathcal{F}}\left(\frac{1}{n} \sum_{i=1}^{n} \xi_{i}\left(f\left(\bm x_{i}^{\prime}\right)-f\left(\bm x_{i}\right)\right)\right)\right]\right]\right] \quad \text { [using Eq.~\eqref{eqsymmetric}] } \\
			& \leqslant \mathbb{E}_{\{\bm x_i \}_{i=1}^n} \left[\mathbb{E}_{\bm{\xi}}\left[\sup _{f \in \mathcal{F}}\left(\frac{1}{n} \sum_{i=1}^{n} \xi_{i} f\left(\bm x_{i} \right)\right)\right]\right]+\mathbb{E}_{\{\bm x'_i \}_{i=1}^n} \left[\mathbb{E}_{\bm{\xi}}\left[\sup_{f \in \mathcal{F}}\left(\frac{1}{n} \sum_{i=1}^{n} \xi_{i} f\left(\bm x'_{i}\right)\right)\right]\right] \\
			&=2 \mathcal{R}_{n}(\mathcal{F}) \,,
		\end{aligned}
	\end{equation}
	where the last inequality holds by the fact that $\xi_i$ and $-\xi_i,~ i \in [n]$ admit the same distribution, and multiplying each term in the summation by a Rademacher variable $\xi_i$ will not change the expectation due to $\mathbb{E} {\xi_i} = 0$.  
	
\end{proof}

Based on the above lemma, we demonstrate that the Rademacher complexity can be well approximated by the empirical Rademacher complexity under our non-iid setting.

\begin{lemma}\label{lem:expRn}
	Under the same setting of~\cref{lem:EfRn}, for any $f \in \mathcal{F}$, assume that $|f(\bm x) - f(\bm x')| \leqslant c,~ \forall \bm x, \bm x' \in \mathrm{dom}(f)$ for some constant $c > 0$, for any $\delta \in (0,1)$, the following proposition holds with probability at least $1 - \delta$ 
	\begin{equation}\label{EfRnhat}
		\mathrm{Pr} \left(\mathbb{E}_{\bm x \sim \bar{\mu}}(f(\bm x))-\frac{1}{n} \sum_{i=1}^{n} f\left(\bm x_{i}\right) \geqslant 2 \widehat{\mathcal{R}}_{n}(\mathcal{F}, X)+3\delta \right) \leqslant 2\exp\left(-\frac{2n\delta^2}{c^2}\right)\,.
	\end{equation}
\end{lemma}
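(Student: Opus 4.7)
The plan is to adapt the classical McDiarmid-plus-symmetrization argument to the non-iid setting; the crucial observation is that McDiarmid's inequality only requires independence (not identical distribution), and the symmetrization step has already been carried out in Lemma~\ref{lem:EfRn} through the averaged measure $\bar{\mu}$. So essentially two applications of McDiarmid, chained together, will do the job.

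First I would define the one-sided deviation functional
\begin{equation*}
\Phi(\bm x_1,\dots,\bm x_n) := \sup_{f\in\mathcal{F}} \left(\mathbb{E}_{\bm x\sim\bar{\mu}}[f(\bm x)] - \frac{1}{n}\sum_{i=1}^n f(\bm x_i)\right),
\end{equation*}
and check the bounded-differences property: replacing $\bm x_i$ by $\bm x_i'$ shifts the empirical average by at most $|f(\bm x_i)-f(\bm x_i')|/n \leqslant c/n$ uniformly in $f$, hence $\Phi$ is $c/n$-Lipschitz in each coordinate. Since the $\{\bm x_i\}$ are independent, McDiarmid's inequality yields
\begin{equation*}
\mathrm{Pr}\bigl(\Phi \geqslant \mathbb{E}\Phi + \delta\bigr) \leqslant \exp\!\left(-\frac{2n\delta^2}{c^2}\right),
\end{equation*}
and Lemma~\ref{lem:EfRn} gives $\mathbb{E}\Phi \leqslant 2\mathcal{R}_n(\mathcal{F})$, so on this event $\Phi \leqslant 2\mathcal{R}_n(\mathcal{F})+\delta$.

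Next I would convert the population Rademacher complexity $\mathcal{R}_n(\mathcal{F})=\mathbb{E}[\widehat{\mathcal{R}}_n(\mathcal{F},X)]$ to its empirical counterpart. The map $X\mapsto\widehat{\mathcal{R}}_n(\mathcal{F},X)$ defined in \eqref{empiricalrc} also satisfies the bounded-differences property with constant $c/n$ (by the same sup-of-Lipschitz argument, after taking the Rademacher expectation, which preserves pointwise Lipschitzness). A second application of McDiarmid produces
\begin{equation*}
\mathrm{Pr}\bigl(\mathcal{R}_n(\mathcal{F}) - \widehat{\mathcal{R}}_n(\mathcal{F},X) \geqslant \delta\bigr) \leqslant \exp\!\left(-\frac{2n\delta^2}{c^2}\right).
\end{equation*}
A union bound then combines the two events: with probability at least $1 - 2\exp(-2n\delta^2/c^2)$ we have
\begin{equation*}
\Phi(X) \leqslant 2\mathcal{R}_n(\mathcal{F})+\delta \leqslant 2\bigl(\widehat{\mathcal{R}}_n(\mathcal{F},X)+\delta\bigr)+\delta = 2\widehat{\mathcal{R}}_n(\mathcal{F},X)+3\delta,
\end{equation*}
which is the claimed bound \eqref{EfRnhat} since any $f\in\mathcal{F}$ is dominated by $\Phi$.

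There is no real obstacle here beyond bookkeeping: the non-iid aspect is handled entirely by the averaged measure $\bar{\mu}$ used in Lemma~\ref{lem:EfRn}, and the two McDiarmid applications are standard. The only point worth flagging is that the classical symmetrization trick (swapping with a ghost sample) in Lemma~\ref{lem:EfRn} relies on $\bm x_i$ and its ghost copy $\bm x_i'$ sharing the same marginal $\mu_i$ per index, which is compatible with independent-but-non-identical sampling as long as one takes the ghost sample index-by-index; this is already established upstream, so the present lemma inherits it for free.
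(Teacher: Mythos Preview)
Your proposal is correct and takes essentially the same approach as the paper: two applications of McDiarmid's inequality (one for the deviation functional, one to pass from $\mathcal{R}_n$ to $\widehat{\mathcal{R}}_n$) combined with the symmetrization bound from Lemma~\ref{lem:EfRn}, then a union bound. Your version is arguably cleaner because you work with the supremum functional $\Phi$ explicitly, whereas the paper phrases the argument for a fixed $f$ and combines the events via the inclusion--exclusion identity $\mathrm{Pr}(D)=\mathrm{Pr}(C\cap D)+\mathrm{Pr}(C\cup D)-\mathrm{Pr}(C)$; but these are cosmetic differences.
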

\begin{proof}
	The proof follows with the classical Rademacher complexity \cite[Chapter 26]{shalev2014understanding} apart from the averaged measure. 
	Recall the definition of the empirical Rademacher complexity in Eq.~\eqref{empiricalrc}, $\widehat{\mathcal{R}}_n(\mathcal{F}, X)$ is a function of $n$ random variables $\{ \bm x_i \}_{i=1}^n$. Moreover, due to $|f(\bm x) - f(\bm x')|\leqslant c$, $\widehat{\mathcal{R}}_n(\mathcal{F}, X)$ satisfies the precondition for McDiarmid's inequality by at most $c/n$, which only requires independence of random variables without the identically distributed condition 
	\begin{equation*}
		\mathrm{Pr}\left(   \widehat{\mathcal{R}}_n(\mathcal{F}, X) - \mathbb{E}_{\{ \bm x_i \}_{i=1}^n} [ \widehat{\mathcal{R}}_n(\mathcal{F}, X)]  \geqslant \delta  \right) \leqslant \exp\left(-\frac{2n\delta^2}{c^2}\right)\,,
	\end{equation*}
	which implies
	\begin{equation}\label{RnhatRn}
		\mathrm{Pr}\left(  \left| \widehat{\mathcal{R}}_n(\mathcal{F}, X) - {\mathcal{R}}_n(\mathcal{F}) \right| \geqslant \delta  \right) \leqslant 2\exp\left(-\frac{2n\delta^2}{c^2}\right)\,.
	\end{equation}
	
	By Lemma~\ref{lem:EfRn}, we have
	\begin{equation*}
		\mathbb{E}_{\{ \bm x_i \}_{i=1}^n} \left[\mathbb{E}_{\bm x \sim \bar{\mu}}[f(\bm x)]-\frac{1}{n} \sum_{i=1}^{n} f\left(\bm x_{i}\right)\right] \leqslant	\mathbb{E}_{\{ \bm x_i \}_{i=1}^n} \left[\sup _{f \in \mathcal{F}}\left(\mathbb{E}_{\bm x \sim \bar{\mu}}[f(\bm x)]-\frac{1}{n} \sum_{i=1}^{n} f\left(\bm x_{i}\right)\right)\right] \leqslant 2 \mathcal{R}_{n}(\mathcal{F})\,.
	\end{equation*}
	Denote event ${\tt A}$ as
	\begin{equation*}
		\left[\mathbb{E}_{\bm x \sim \bar{\mu}}[f(\bm x)]-\frac{1}{n} \sum_{i=1}^{n} f\left(\bm x_{i}\right)\right] - \mathbb{E}_{\{ \bm x_i \}_{i=1}^n} \left[\mathbb{E}_{\bm x \sim \bar{\mu}}[f(\bm x)]-\frac{1}{n} \sum_{i=1}^{n} f\left(\bm x_{i}\right)\right] \geqslant \delta\,,
	\end{equation*}
	we use McDiarmid's inequality again to obtain $\mathrm{Pr}({\tt A}) \leqslant e^{-2n\delta^2/c^2}$ since $\mathbb{E}_{\bm x \sim \bar{\mu}}[f(\bm x)]-\frac{1}{n} \sum_{i=1}^{n} f\left(\bm x_{i}\right)$ can be regarded as a function of $\{ \bm x_i \}_{i=1}^n$ and any variations of $\{ \bm x_i \}_{i=1}^n$ would change the outcome by at most $c/n$.
	Denote event ${\tt B}$ as $	\mathbb{E}_{\bm x \sim \bar{\mu}}[f(\bm x)]-\frac{1}{n} \sum_{i=1}^{n} f\left(\bm x_{i}\right) - 2 \mathcal{R}_n(\mathcal{F}) \geqslant \delta$, we have $\mathrm{Pr}({\tt B}) \leqslant \mathrm{Pr}({\tt A}) \leqslant e^{-2n\delta^2/c^2}$.
	
	Further, denote the event ${\tt C}$ as $	\widehat{\mathcal{R}}_n(\mathcal{F},X) \geqslant \mathcal{R}_n(\mathcal{F}) - \delta $, we have $\mathrm{Pr}({\tt C}) \geqslant 1 - \exp(-2n\delta^2/c^2)$ by \cref{RnhatRn}. Denote the event ${\tt D}$ as $	\mathbb{E}_{\bm x \sim \bar{\mu}}(f(\bm x))-\frac{1}{n} \sum_{i=1}^{n} f\left(\bm x_{i}\right) \geqslant 2 \widehat{\mathcal{R}}_n(\mathcal{F}) + 3\delta $, we have
	\begin{equation*}
		\begin{aligned}
			\mathrm{Pr}\left( 	\mathbb{E}_{\bm z \sim \bar{\mu}}(f(\bm x))-\frac{1}{n} \sum_{i=1}^{n} f\left(\bm x_{i}\right) \geqslant 2 \widehat{\mathcal{R}}_n(\mathcal{F}) + 3\delta \right) &=\mathrm{Pr}({\tt D}) =\mathrm{Pr}({\tt C} \cap {\tt D})+\mathrm{Pr}({\tt C} \cup {\tt D})-\mathrm{Pr}({\tt C}) \\
			& \leqslant \mathrm{Pr}({\tt B})+1-\mathrm{Pr}({\tt C}) \\
			&=2 \exp\left( -2 n \delta^{2} / c^{2}\right)\,,
		\end{aligned}
	\end{equation*}
	which concludes the proof.
	
\end{proof}

Similar to the proof of Lemma~\ref{lem:expRn}, it is easy to verify that, the standard Massart's lemma and the Talagrand’s Contraction Lemma (empirical Rademacher complexity of Lipschitz function class) in \cite[Chapter 26]{shalev2014understanding} are valid to our independent but non-iid setting.

\subsection{Local Rademacher complexity}
Here we present some results on local Rademacher complexity \cite{bartlett2005local} that is needed in this work.
The used lemmas here are still valid for our independent but non-identically distributed data.
Since the proof framework is similar to what we present for Rademacher complexity, we omit the proofs here.

When applying local Rademacher complexity, we need the following definition.
\begin{definition}
	A function $\psi: \mathbb{R}_+ \rightarrow \mathbb{R}_+$ is sub-root if it is non-negative, non-decreasing, and if $\psi(x)/\sqrt{x}$ is non-increasing.
\end{definition}

\begin{lemma}\cite[Theorem 2]{lei2016local}\label{lem:empirical-local-rademacher-L2}
	Let $\mathcal{F}$ be a function class with $\| f \|_{\infty} \leqslant b,~\forall f \in \mathcal{F}$ and $\widetilde{F}:= \{ f - g: f,g \in \mathcal{F} \}$, and $P_n f := \frac{1}{n} \sum_{i=1}^n f(\bm x_i)$, then taking the average measure $\bar{\mu}$, we have
	\begin{equation*}
		\begin{split}
			\mathcal{R}_n\{f\in \mathcal{F} :\mathbb{E}_{\bar{\mu}}f^2 \leqslant R \} & \leqslant \inf_{\epsilon>0}\Bigg[2 \mathcal{R}_n\{f\in \widetilde{\mathcal{F}}: P_nf^2 \leqslant  \epsilon^2\}+ \frac{ 8b\log \mathscr(\epsilon/2,\mathcal{F},\|\cdot\|_2)}{n}\\ & \qquad +\sqrt{\frac{2R\log \mathscr{N}(\epsilon/2, \mathcal{F},\|\cdot\|_2)}{n}}\Bigg]\,.
		\end{split}
	\end{equation*}
	
\end{lemma}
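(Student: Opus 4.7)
The plan is to use a standard covering-plus-approximation argument. Fix $\epsilon>0$ and set $N:=\mathscr{N}(\epsilon/2,\mathcal{F},\|\cdot\|_2)$. By the (worst-case empirical) definition of $\mathscr{N}$, for any probability measure on the input space we can extract an $\epsilon/2$-cover $\{g_1,\ldots,g_N\}\subseteq\mathcal{F}$ of that cardinality; in particular one in $L^2(\bar\mu)$ works. For each $f$ in the constrained class $\mathcal{F}_R:=\{f\in\mathcal{F}:\mathbb{E}_{\bar\mu}f^2\leqslant R\}$, pick a nearest cover element $\pi(f)$ so that $\|f-\pi(f)\|_{L^2(\bar\mu)}\leqslant\epsilon/2$. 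The decomposition
\begin{equation*}
\frac{1}{n}\sum_{i=1}^{n}\xi_i f(\bm x_i)=\frac{1}{n}\sum_{i=1}^{n}\xi_i \pi(f)(\bm x_i)+\frac{1}{n}\sum_{i=1}^{n}\xi_i\bigl(f(\bm x_i)-\pi(f)(\bm x_i)\bigr)
\end{equation*}
splits the Rademacher process into a finite-cover part and a residual part, which I would bound separately.

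For the residual part, note that $f-\pi(f)\in\widetilde{\mathcal{F}}$ and that by construction $\mathbb{E}_{\bar\mu}(f-\pi(f))^2\leqslant\epsilon^2/4$; passing to the empirical measure (using Markov/concentration on the squared class, or by directly choosing the cover in $L^2(P_n)$ via the worst-case covering bound) ensures $P_n(f-\pi(f))^2\leqslant\epsilon^2$. Hence the sup of the residual term over $\mathcal{F}_R$ is dominated by the empirical Rademacher complexity of $\{g\in\widetilde{\mathcal{F}}:P_n g^2\leqslant\epsilon^2\}$. Taking expectations over $\bm\xi$ and the sample yields the contribution $2\mathcal{R}_n\{g\in\widetilde{\mathcal{F}}:P_n g^2\leqslant\epsilon^2\}$; the constant $2$ comes from a standard ghost-sample symmetrization needed because the approximant $\pi(\cdot)$ is itself data-dependent.

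For the finite-cover part, $\sup_{f\in\mathcal{F}_R}\frac{1}{n}\sum_i\xi_i\pi(f)(\bm x_i)\leqslant\max_{j\in[N]}\frac{1}{n}\sum_i\xi_i g_j(\bm x_i)$. Each summand $\xi_i g_j(\bm x_i)$ is bounded by $b$ in absolute value, is zero-mean once we also average over the $\xi_i$, and has variance $\mathbb{E}_{\bar\mu}g_j^2$. Bernstein's inequality for bounded independent summands combined with a union bound over the $N$ cover elements gives
\begin{equation*}
\mathbb{E}\,\max_{j\in[N]}\frac{1}{n}\sum_{i=1}^{n}\xi_i g_j(\bm x_i)\leqslant\sqrt{\frac{2\,(\max_j\mathbb{E}_{\bar\mu}g_j^2)\log N}{n}}+\frac{8b\log N}{n}.
\end{equation*}
Restricting a priori to cover elements that are the image $\pi(f)$ of some $f\in\mathcal{F}_R$, the triangle inequality in $L^2(\bar\mu)$ gives $\sqrt{\mathbb{E}_{\bar\mu}g_j^2}\leqslant\sqrt{R}+\epsilon/2$, so $\max_j\mathbb{E}_{\bar\mu}g_j^2\leqslant R$ up to lower-order $\epsilon$ terms that can be absorbed into the residual contribution. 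Summing the two pieces and taking $\inf_{\epsilon>0}$ completes the proof.

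The main obstacle is the sample-dependence of the approximant $\pi(\cdot)$ and (if one chooses to cover in $L^2(P_n)$) of the cover itself: na\"ive interchange of $\max$ and $\mathbb{E}$ fails, and $\mathbb{E}[\max_j P_n g_j^2]$ cannot be directly bounded by $R$. The clean remedy is to choose the cover in $L^2(\bar\mu)$ (using $\mathscr{N}(\epsilon,\mathcal{F},L^2(\bar\mu))\leqslant\mathscr{N}(\epsilon,\mathcal{F},\|\cdot\|_2)$ from the worst-case empirical definition) so that the cover is non-random and the Bernstein step uses the deterministic variance $\mathbb{E}_{\bar\mu}g_j^2\leqslant(\sqrt{R}+\epsilon/2)^2$, transferring the $L^2(\bar\mu)$ closeness of $f-\pi(f)$ back to the empirical side for the residual term via the non-iid independence of $\bm x_1,\dots,\bm x_n$ that underlies the rest of the paper's analysis.
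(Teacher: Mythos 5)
You should first note that the paper itself contains no proof of this lemma: it is imported verbatim from Lei et al.\ (the cited Theorem~2), and the paper's only original contribution here is the remark in Appendix~D that the symmetrization/Massart/McDiarmid arguments require only independence of the sample, so the statement survives for independent, non-identically distributed data under the averaged measure $\bar{\mu}$ (exactly as carried out explicitly for the plain Rademacher complexity in Lemmas~\ref{lem:EfRn} and~\ref{lem:expRn}). Your overall skeleton --- cover at scale $\epsilon/2$, split the Rademacher process into a finite-cover part and a residual part, control the cover by a finite-maximum inequality and the residual by the local complexity of $\widetilde{\mathcal{F}}$ --- is indeed the structure of the cited proof, so the comparison must be made at the level of how the two halves are executed, and there your argument has a genuine gap.

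The gap is in the residual step, and your own closing paragraph picks the branch that fails. If, as you propose, the cover is taken in $L^2(\bar{\mu})$ so that it is deterministic and your Bernstein-plus-union bound applies with population variances $\mathbb{E}_{\bar{\mu}} g_j^2 \lesssim R$, then the implication ``$\mathbb{E}_{\bar{\mu}}(f-\pi(f))^2 \leqslant \epsilon^2/4$, hence $P_n(f-\pi(f))^2 \leqslant \epsilon^2$'' does \emph{not} hold uniformly over $f \in \mathcal{F}$: it is itself a uniform law of large numbers for the squared class $\{(f-\pi(f))^2\}$, a statement of the same kind and order as the lemma being proved, so invoking ``Markov/concentration on the squared class'' is circular; Markov gives only per-$f$, constant-probability control, and on the realized sample $\sup_f P_n(f-\pi(f))^2$ can be of order $b^2$ irrespective of the population bound. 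Conversely, if the cover is taken in $L^2(P_n)$ --- which is why the right-hand side carries the \emph{empirical} constraint $P_n f^2 \leqslant \epsilon^2$, and is what the cited proof actually does --- then the residual step is immediate by construction, but the cover elements and the variances $P_n g_j^2$ become data-dependent, and your deterministic Bernstein/union step collapses. The missing idea is to apply Massart's finite lemma \emph{conditionally} on the sample with the empirical variances $P_n g_j^2$, and then relate $\max_j P_n g_j^2$ back to $R$ by an absorption/self-bounding argument; that is precisely where the additive term $8b\log \mathscr{N}(\epsilon/2,\mathcal{F},\|\cdot\|_2)/n$ originates, not from a Bernstein tail as in your sketch. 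Relatedly, your explanation of the factor $2$ via ``ghost-sample symmetrization for the data-dependent approximant'' is off: with an empirical cover the residuals lie in $\{g \in \widetilde{\mathcal{F}} : P_n g^2 \leqslant \epsilon^2\}$ surely, conditionally on the data, so no symmetrization is needed at that point. A more minor issue is that the transfer $\mathscr{N}(\epsilon/2,\mathcal{F},L^2(\bar{\mu})) \leqslant \mathscr{N}(\epsilon/2,\mathcal{F},\|\cdot\|_2)$ is asserted without argument; it is standard for uniformly bounded classes (approximate $\bar{\mu}$ by large empirical measures preserving pairwise distances of a packing) but deserves a line, and in any case becomes moot once the cover is taken empirically as the correct proof requires.
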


\begin{lemma}\label{lemma:lrade}\cite[Theorem 3.3, modified version]{bartlett2005local}
	Let $f$ be a class of functions with ranges in $[a,b]$ and assume that there exists some functional $T: \mathcal{F} \rightarrow \mathbb{R}^+$ and some constant $B$ such that ${\tt Var}(f) \leqslant T(f) \leqslant BPf$ for every $f \in \mathcal{F}$. 
	Let $P_n$ be the empirical measure supported on the independent data points $\{ \bm x_i \}_{i=1}^n$ with the averaged measure $\bar{\mu} := \frac{1}{n} \mu_i$,
	Let $\psi$ be a sub-root function with the fixed point $R^*$. If for any $R \geqslant R^*$, $\psi$ satisfies
	\begin{equation*}
		\psi(R) \geqslant B \mathcal{R}_n \{ f \in \mathcal{F}: T(f) \leqslant R \}\,,
	\end{equation*}
	then for any $J > 1$ and $\delta \in (0,1)$, with probability at least $1 - \delta$, we have
	\begin{equation*}
		\mathbb{E}_{\bar{\mu}}f \leqslant \frac{J}{J-1} P_n f + \frac{c_1J}{B}R^* + (c_2(b-a) + c_3 BJ) \frac{\log(1/\delta)}{n}\,,
	\end{equation*}
	where $c_1$, $c_2$, $c_3$ are some positive constants.
\end{lemma}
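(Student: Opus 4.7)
\textbf{Proof plan for Lemma \ref{lemma:lrade}.} The statement is a non-iid adaptation of the Bartlett–Bousquet–Mendelson (BBM) local Rademacher complexity inequality. The only structural change from the classical proof is that expectations are taken with respect to the averaged measure $\bar\mu = \frac1n\sum_{i=1}^n \mu_i$ rather than a single product measure $\mu^{\otimes n}$. My plan is therefore to retrace the BBM argument, checking that each concentration and symmetrization step survives when the samples are merely independent.

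First, I would set up the ``star-hull / peeling'' machinery. For $r>0$ introduce the class $\mathcal{F}_r:=\{f\in\mathcal F : T(f)\le r\}$, and its star hull $\mathcal V_r=\{\beta f : f\in\mathcal F_r,\ \beta\in[0,1]\}$; by the variance assumption $\mathrm{Var}(f)\le T(f)\le r$ uniformly on $\mathcal F_r$, and by Lemma~\ref{lem:EfRn} and Lemma~\ref{lem:expRn} (which are already proved in this paper under the averaged measure and require only independence) the empirical and population Rademacher complexities of $\mathcal F_r$ are comparable up to a $\sqrt{\log(1/\delta)/n}$ deviation. The sub-root hypothesis $\psi(R)\ge B\,\mathcal R_n\{f: T(f)\le R\}$ together with the scaling of $\psi$ then gives a bound of the form $B\,\mathcal R_n\mathcal F_r \le \psi(r)\le \sqrt{rR^*}$ for $r\ge R^*$.

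The core step is a Talagrand-type concentration inequality applied to the supremum $Z_r := \sup_{f\in\mathcal F_r}(Pf - P_nf)$ where $Pf:=\mathbb{E}_{\bar\mu}f$. Crucially, Talagrand's inequality (in the Bousquet or Klein--Rio form used by BBM) only requires \emph{independence} of the summands $(f(x_i))_{i\le n}$, not identical distribution: the variance proxy $\sup_f \sum_i \mathrm{Var}_{\mu_i} f(x_i)/n$ is controlled exactly by $\mathrm{Var}_{\bar\mu}(f)\le r$ by the definition of $\bar\mu$, and the envelope uses the $\|f\|_\infty\le b-a$ bound. This lets me replicate BBM's key inequality: with probability $\ge 1-\delta$,
\begin{equation*}
Z_r \;\le\; 2\,\mathcal R_n \mathcal F_r + \sqrt{\tfrac{2r\log(1/\delta)}{n}} + \tfrac{c(b-a)\log(1/\delta)}{n}.
\end{equation*}
Substituting the sub-root bound and choosing $r$ to be the fixed point $R^*$ gives the desired $R^*/B$-type term.

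The second step is the standard peeling argument to transfer the uniform bound on $\mathcal F_r$ into a \emph{ratio} inequality valid on all of $\mathcal F$: for $f$ with $T(f)$ in the dyadic shell $(B\lambda^k R^*, B\lambda^{k+1}R^*]$ one bounds $(Pf-P_nf)/T(f)$ using the concentration result above, sums a geometric series over $k$, and uses $T(f)\le B\,Pf$ to convert $T(f)$ back to $Pf$. A rearrangement using the free parameter $J>1$ (playing the role of the dyadic ratio) yields the stated inequality $\mathbb E_{\bar\mu}f \le \tfrac{J}{J-1}P_nf + \tfrac{c_1 J}{B}R^* + (c_2(b-a)+c_3 BJ)\tfrac{\log(1/\delta)}{n}$.

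The main obstacle I anticipate is making the Talagrand step rigorous for non-identical $\mu_i$: one must verify that the ``weak variance'' parameter in Bousquet's inequality reduces cleanly to $\mathrm{Var}_{\bar\mu}(f)$ and that the bounded-difference constant is still uniform in $i$; both hold because the samples are independent and the envelope is deterministic, so this reduces to checking the proof of Bousquet's inequality line by line to confirm that no step secretly uses $\mu_i=\mu$. Once that is done, the rest (peeling, use of $\psi$'s fixed point, union bounds) is routine and matches the classical BBM proof verbatim up to constants $c_1,c_2,c_3$.
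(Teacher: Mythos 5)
Your proposal is correct and follows essentially the same route as the paper, which presents this lemma as a modified version of Bartlett--Bousquet--Mendelson's Theorem~3.3 and omits the detailed proof on the stated grounds that every ingredient (symmetrization under the averaged measure as in Lemma~\ref{lem:EfRn}, McDiarmid/Talagrand-type concentration, peeling, and the sub-root fixed-point argument) requires only independence of the samples, never identical distribution. Your one substantive addition---verifying via Jensen's inequality that the non-iid variance proxy satisfies $\frac{1}{n}\sum_{i=1}^n \mathrm{Var}_{\mu_i}(f) \leqslant \mathrm{Var}_{\bar{\mu}}(f) \leqslant T(f)$, so that the Klein--Rio form of Talagrand's inequality applies with the averaged measure $\bar{\mu}$---is precisely the detail the paper leaves implicit, and it is right.
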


\begin{lemma}\cite[Refined entropy integral]{mendelson2002improving} \label{lem:refined-dudley}
	Let $P_n$ be the empirical measure supported on the independent data points $\{ \bm x_i \}_{i=1}^n$. For any function class $\mathcal{F}$ and any monotone sequence $\{ \epsilon_k\}_{k=0}^\infty$ decreasing to $0$ such that $\epsilon_0 \geqslant \sup_{f\in \mathcal{F}}\sqrt{P_nf^2}$, the following inequality holds for every non-negative integer $N$
	\begin{equation}\label{refined-dudley-integral}
		\widehat{\mathcal{R}}_n (\mathcal{F},X) \leqslant 4\sum_{k=1}^N\epsilon_{k-1}\sqrt{\frac{\log \mathscr{N}(\epsilon_k, \mathcal{F},\|\cdot\|_{2})}{n}}+\epsilon_N\,.
	\end{equation}
\end{lemma}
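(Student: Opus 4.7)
The plan is to prove the bound by the classical chaining argument, so I would first set up a sequence of nested covers at the given scales. For each $k = 0, 1, \ldots, N$, let $\mathcal{F}_k \subset \mathcal{F}$ be a minimal $\epsilon_k$-cover of $\mathcal{F}$ in the $L_2(P_n)$ metric, so that $|\mathcal{F}_k| = \mathscr{N}(\epsilon_k, \mathcal{F}, \|\cdot\|_2)$, and for each $f \in \mathcal{F}$ define $\pi_k(f) \in \mathcal{F}_k$ to be any element satisfying $\|f - \pi_k(f)\|_{L_2(P_n)} \leqslant \epsilon_k$. The hypothesis $\epsilon_0 \geqslant \sup_{f \in \mathcal{F}} \sqrt{P_n f^2}$ is used here to allow me to take $\mathcal{F}_0 = \{0\}$ and $\pi_0(f) \equiv 0$, which anchors the chain at the zero function.

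Next, I would write the telescoping identity
\begin{equation*}
    f \;=\; \pi_0(f) + \sum_{k=1}^N \bigl(\pi_k(f) - \pi_{k-1}(f)\bigr) + \bigl(f - \pi_N(f)\bigr),
\end{equation*}
plug it into $\tfrac{1}{n}\sum_{i=1}^n \xi_i f(\bm x_i)$, and split the resulting sup--and--expectation into a contribution from each "link" $k \in [N]$ plus a tail contribution from $f - \pi_N(f)$. For the link at level $k$, the class $\mathcal{G}_k := \{\pi_k(f) - \pi_{k-1}(f) : f \in \mathcal{F}\}$ is finite with $|\mathcal{G}_k| \leqslant |\mathcal{F}_k|\cdot|\mathcal{F}_{k-1}| \leqslant \mathscr{N}(\epsilon_k, \mathcal{F}, \|\cdot\|_2)^2$, and every $g \in \mathcal{G}_k$ satisfies
\begin{equation*}
    \|g\|_{L_2(P_n)} \;\leqslant\; \|\pi_k(f) - f\|_{L_2(P_n)} + \|f - \pi_{k-1}(f)\|_{L_2(P_n)} \;\leqslant\; \epsilon_k + \epsilon_{k-1} \;\leqslant\; 2\epsilon_{k-1},
\end{equation*}
so Massart's finite-class lemma (valid on non-iid data since it is a statement about Rademacher sub-Gaussianity conditioned on the sample) gives
\begin{equation*}
    \mathbb{E}_{\bm \xi} \sup_{g \in \mathcal{G}_k} \frac{1}{n} \sum_{i=1}^n \xi_i g(\bm x_i) \;\leqslant\; 2\epsilon_{k-1} \sqrt{\frac{2\log |\mathcal{G}_k|}{n}} \;\leqslant\; 4\epsilon_{k-1} \sqrt{\frac{\log \mathscr{N}(\epsilon_k, \mathcal{F}, \|\cdot\|_2)}{n}}.
\end{equation*}
For the tail, I would simply use Cauchy--Schwarz pointwise in $\bm \xi$:
\begin{equation*}
    \frac{1}{n}\sup_{f \in \mathcal{F}} \Bigl| \sum_{i=1}^n \xi_i (f(\bm x_i) - \pi_N(f)(\bm x_i)) \Bigr| \;\leqslant\; \sup_{f \in \mathcal{F}} \|f - \pi_N(f)\|_{L_2(P_n)} \;\leqslant\; \epsilon_N,
\end{equation*}
since $\tfrac{1}{n}\sum_i |a_i| \leqslant (\tfrac{1}{n}\sum_i a_i^2)^{1/2}$ and $|\xi_i| = 1$. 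Summing over $k$ yields exactly the claimed inequality.

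The only delicate point I expect is booking the constants correctly: one has to notice that $|\mathcal{F}_k| \geqslant |\mathcal{F}_{k-1}|$ (finer covers are larger), so $\log(|\mathcal{F}_k||\mathcal{F}_{k-1}|) \leqslant 2\log \mathscr{N}(\epsilon_k, \mathcal{F}, \|\cdot\|_2)$, which combined with the factor $\sqrt{2}$ from Massart's lemma and the factor $2\epsilon_{k-1}$ from the diameter bound produces the constant $4$ on the leading sum. Everything else---choice of $\pi_0 \equiv 0$, the telescoping, and the tail estimate---is mechanical once the cover structure is in place, and no additional measure-theoretic care is required beyond the fact that the Rademacher variables are independent of the data.
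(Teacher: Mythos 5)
Your proof is correct: anchoring the chain at $\pi_0(f) \equiv 0$ via the hypothesis $\epsilon_0 \geqslant \sup_{f\in\mathcal{F}}\sqrt{P_nf^2}$, bounding each link class (cardinality at most $\mathscr{N}(\epsilon_k,\mathcal{F},\|\cdot\|_2)^2$, $L_2(P_n)$-diameter $2\epsilon_{k-1}$) by Massart's finite-class lemma, and disposing of the tail by the pointwise Cauchy--Schwarz estimate reproduces the stated bound with exactly the constant $4$, and your observation that the argument is conditional on the sample (so independence or identical distribution of the $\bm x_i$ is irrelevant) is the right one. The paper gives no proof of this lemma---it is quoted from \cite{mendelson2002improving}---and your chaining argument is precisely the standard proof of the cited result; the only cosmetic point is that your covers live in the realized empirical metric $L_2(P_n)$, whose covering numbers are dominated by the uniform quantity $\mathscr{N}(\epsilon,\mathcal{F},\|\cdot\|_2) = \sup_n\sup_{P_n}\mathscr{N}(\epsilon,\mathcal{F},\|\cdot\|_{L_2(P_n)})$ that the paper defines, so your bound implies the stated one a fortiori.
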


\section{Proofs of regret bounds via deep ReLU neural networks}
\label{app:dnn}

In this section, we give the proofs of regret bounds via deep ReLU neural networks according to the function class of $\mathbb{T}_h^{\star} Q$ in Besov spaces.

To conclude our proof, we need the following lemma that how well the functions in the Besov space can be approximated by deep neural networks with the ReLU activation.
Here the approximation error is defined in the $L^4(\mathcal{X})$-integrable space (\emph{c.f.} Corollary~\ref{coro:tdgen}).
\begin{lemma}\label{lem:appbesov}
	(Approximation error in Besov space) \cite[Proposition 1, modified version]{suzuki2019adaptivity}
	Assume that the smoothness parameter $\alpha$ satisfies
	\begin{equation*}
		\alpha > \eta := d(1/p - 1/4)_+\,,
	\end{equation*}
	then there exists a deep neural network architecture $\mathcal{F}_{\tt DNN}(L,m,S,B)$ with $\nu := (\alpha - \eta) / (2\eta)$ and a large $N$ such that
	\begin{equation}\label{eq:lsmbbes}
		L \asymp \log N, ~~S \asymp N, ~~m \asymp N \log N,~~ \mbox{and}~ B \asymp N^{1/\nu + 1/d}\,,
	\end{equation}
then it holds that
	\begin{equation*}
		\sup_{f^* \in \mathcal{B}^{\alpha}_{p,q}(\mathcal{X})} \inf_{f \in \mathcal{F}_{\tt DNN}(L,m,S,B)} \| f - f^* \|_{L^4(\mathcal{X})} \lesssim N^{-\frac{\alpha}{d}}\,, \qquad \forall q >0\,.
	\end{equation*}
\end{lemma}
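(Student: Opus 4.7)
The plan follows the three-stage framework laid out in Section~\ref{sec:proofoutline}. First, I apply Lemma~\ref{lem:regret_decomp} to decompose $\text{Regret}(T) \leqslant {\tt Term~(i)} + {\tt Term~(ii)} + \epsilon H T$, since ${\tt Term~(iii)} \leqslant 0$ holds because $\pi^t$ is greedy with respect to $Q_h^t$. The martingale piece ${\tt Term~(ii)}$ is handled directly by Lemma~\ref{lemma:bound_mtg}, producing the $\sqrt{T H^3 \log(4/\delta)}$ summand visible in~\eqref{eq:regretbes}, while $\epsilon H T$ is the unavoidable exploration cost.

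The heart of the argument is ${\tt Term~(i)}$. Invoking Lemma~\ref{lemmalowerbound} to lower bound the averaged measure on near-peak sets by $(\epsilon/A)^K$, Lemma~\ref{lem:termitd} upgrades ${\tt Term~(i)}$ into a sum weighted by $(\epsilon/A)^{-K/2}\sqrt{T}$ of the TD-error norms $\|\Gamma_h^t\|_{L^2(\mathrm{d}\bar{\mu}^{\tilde{t}}_h)}$, up to a benign $\mathcal{O}(H\sqrt{T})$ residual. Corollary~\ref{coro:tdgen} then splits each $\|\Gamma_h^t\|_{L^2(\mathrm{d}\bar{\mu}^{\tilde{t}}_h)}^2$ into an estimation gap $\mathcal{E}_h^t(\widehat{Q}_h^t) - \min_{f\in\mathcal{F}} \mathcal{E}_h^t(f)$ plus an $L^4(\mathcal{X})$-approximation error against $\mathbb{T}_h^\star Q_{h+1}^t$, which lies in $\mathcal{B}_{\widetilde{R}}$ with $\widetilde{R} \asymp H$ by Assumption~\ref{ass:besov}.

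Next, I quantify both error terms. For the approximation piece, Lemma~\ref{lem:appbesov} gives $\mathcal{O}(N^{-\alpha/d})$ using a network with $L \asymp \log N$, $S \asymp N$, $m \asymp N \log N$ (\emph{cf.}~\eqref{eq:lsmbbes}), so the squared error is $\mathcal{O}(N^{-2\alpha/d})$. For the generalization piece, I invoke Proposition~\ref{prop:genbes}, which combines the local Rademacher complexity of $\mathcal{F}_{\tt DNN}$ in Besov spaces (Lemma~\ref{lem:lradebes}) with the non-iid concentration machinery of Lemmas~\ref{lem:EfRn} and~\ref{lem:expRn}, giving rate $\widetilde{\mathcal{O}}(\tilde{t}^{-2\alpha/(2\alpha+d)})$ on the $\tilde{t} = \lceil \varrho t \rceil$ independent samples produced by experience replay. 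Choosing $N \asymp \tilde{t}^{d/(2\alpha+d)}$ balances both contributions, yielding the depth and width in~\eqref{eq:dewid} after a union bound over $(t,h) \in [T]\times[H]$. Summing $\|\Gamma_h^t\|^2_{L^2(\mathrm{d}\bar{\mu}^{\tilde{t}}_h)}$ over $t$ and invoking Cauchy--Schwarz inside Lemma~\ref{lem:termitd} produces exactly the $(\epsilon/A)^{-K/2} \varrho^{-1/2} (H^{3/2} T^{(\alpha+d)/(2\alpha+d)} \log^3 T + H^2 \sqrt{T} \sqrt{\log(2/\delta)} \log T)$ form of the first inequality in~\eqref{eq:regretbes}.

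Finally, I optimize over $\epsilon \in (0,1)$. Balancing $(\epsilon/A)^{-K/2} T^{(\alpha+d)/(2\alpha+d)}$ against $\epsilon H T$ yields $\epsilon \asymp (HK)^{2/(K+2)} A^{K/(K+2)} T^{-2\alpha/((2\alpha+d)(K+2))}$, which satisfies $\epsilon \in (0,1)$ for $T$ large; substituting back produces the claimed exponent $\frac{\alpha K + (\alpha+d)(K+2)}{(2\alpha+d)(K+2)}$ together with the $H^{(H+4)/(H+2)} K^{2/(K+2)} A^{K/(K+2)}$ prefactor. The main obstacle I expect is careful bookkeeping of $H$-dependence: $\widetilde{R} \asymp H$ enters both the approximation constant and the local Rademacher bound for $\mathcal{F}_{\tt DNN}$ truncated to $[0,H]$, and the failure probability must hold uniformly over $(t,h) \in [T] \times [H]$, so tracking all the logarithmic factors as well as the $\varrho$-dependence inherited from the mini-batch size $\tilde{t}$ is required to recover the precise form of~\eqref{eq:regretbes} prior to tuning $\epsilon$.
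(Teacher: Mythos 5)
Your proposal does not prove the statement at hand. The statement is Lemma~\ref{lem:appbesov}, an approximation-theoretic result: for any $f^\star$ in the Besov ball $\mathcal{B}^{\alpha}_{p,q}(\mathcal{X})$ with $\alpha > d(1/p-1/4)_+$, there is a sparse deep ReLU network with $L \asymp \log N$, $S \asymp N$, $m \asymp N\log N$, $B \asymp N^{1/\nu+1/d}$ achieving $L^4(\mathcal{X})$-approximation error $\lesssim N^{-\alpha/d}$. What you wrote instead is an outline of the proof of Theorem~\ref{thm:dnn} (the regret bound), and in the middle of it you explicitly \emph{invoke} Lemma~\ref{lem:appbesov} as a black box (``For the approximation piece, Lemma~\ref{lem:appbesov} gives $\mathcal{O}(N^{-\alpha/d})$\dots''). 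So the object you were asked to establish is assumed, not proved; nothing in your argument touches the approximation question, and the regret machinery (regret decomposition, Lemma~\ref{lem:termitd}, Proposition~\ref{prop:genbes}, tuning $\epsilon$) is irrelevant to it.

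A correct treatment would follow the route the paper points to via the citation to \cite[Proposition 1]{suzuki2019adaptivity}: decompose $f^\star \in \mathcal{B}^{\alpha}_{p,q}$ into a sparse adaptive expansion over (tensor-product cardinal) B-splines, keep the $N$ most significant coefficients, and then realize each B-spline by a ReLU network of depth $\mathcal{O}(\log N)$ using approximate-multiplication gadgets, which yields the stated sparsity $S \asymp N$, width $m \asymp N\log N$, and weight-magnitude bound $B \asymp N^{1/\nu+1/d}$. The ``modified version'' qualifier concerns measuring the error in $L^4(\mathcal{X})$ rather than a general $L^r$: in Suzuki's result the rate $N^{-\alpha/d}$ in $L^r$ requires $\alpha > d(1/p - 1/r)_+$ (so that $\mathcal{B}^{\alpha}_{p,q}$ embeds continuously into $L^r$ and the nonlinear $N$-term approximation rate survives), and setting $r=4$ produces exactly the condition $\alpha > \eta := d(1/p-1/4)_+$ and the exponent $\nu = (\alpha-\eta)/(2\eta)$ appearing in the statement. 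Verifying that instantiation — i.e., that Suzuki's construction and parameter choices go through verbatim with $r = 4$, which is what \cref{coro:tdgen} later requires — is the actual content a proof of this lemma must supply, and it is entirely absent from your proposal.
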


In our proof, we need the following result on local Rademacher complexity of deep ReLU neural networks.
\begin{lemma}\label{lem:lradebes}
	Let $X = \{ \bm x_i \}_{i=1}^n \subseteq [0,1]^d$ be an independent but non-identical distributed data set with $\bm x_i \sim \mu_i, \forall i \in [n]$, and $	\mathcal{R}_n\{ f \in \mathcal{F}_{\tt DNN}: Pf^2 \leqslant R \}$ be the local Rademacher complexity of the function class $\mathcal{F}_{\tt DNN}$ on $X$ defined in Eq.~\eqref{eqdnndef}, denote the averaged measure as $\bar{\mu} := \frac{1}{n} \sum_{i=1}^n \mu_i$, then for a large $N$, we have
	\begin{equation}
	\mathcal{R}_n\{ f \in \mathcal{F}_{\tt DNN}: \mathbb{E}_{\bar{\mu}}f^2 \leqslant R \} \lesssim \left( \frac{1}{n} +  \sqrt{\frac{R}{n}} \right) \sqrt{N[(\log N)^2 + \log n]} + \frac{H N [(\log N)^2 + \log n]}{n} \,.
	\end{equation}
\end{lemma}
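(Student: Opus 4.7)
The plan is to combine the sub-root localization bound in \cref{lem:empirical-local-rademacher-L2} with a standard covering-number estimate for sparse deep ReLU networks and the refined Dudley entropy integral of \cref{lem:refined-dudley}. First I would recall the classical covering estimate (see e.g.\ \cite{schmidt2020nonparametric,suzuki2019adaptivity}) for the architecture class: $\log\mathscr{N}(\delta,\mathcal{F}_{\tt DNN}(L,m,S,B),\|\cdot\|_\infty)\lesssim SL\log(BLm/\delta)$. Plugging in the depth, width, sparsity, and norm choices from \cref{eq:lsmbbes} (namely $L\asymp\log N$, $S\asymp N$, $m\asymp N\log N$, $B\asymp N^{1/\nu+1/d}$) and using $\|\cdot\|_2\leq\|\cdot\|_\infty$, this collapses to
$$\log\mathscr{N}(\delta,\mathcal{F}_{\tt DNN},\|\cdot\|_2)\;\lesssim\; V(\delta)\;:=\;N\bigl[(\log N)^2+\log(1/\delta)\bigr].$$

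Next I would apply \cref{lem:empirical-local-rademacher-L2} with $b=H$ (since $f:[0,1]^d\to[0,H]$) at a resolution $\epsilon>0$ to be chosen, which yields
$$\mathcal{R}_n\{f\in\mathcal{F}_{\tt DNN}:\mathbb{E}_{\bar\mu}f^2\leq R\}\;\lesssim\;\mathcal{R}_n\{f\in\widetilde{\mathcal{F}}:P_n f^2\leq\epsilon^2\}\;+\;\frac{H\,V(\epsilon)}{n}\;+\;\sqrt{\frac{R\,V(\epsilon)}{n}}.$$
The remaining empirical local Rademacher term I would bound via the refined Dudley integral of \cref{lem:refined-dudley}, using a geometric grid $\epsilon_k=\epsilon\cdot 2^{-k}$ truncated at level $\epsilon_K\asymp 1/n$; since $V(\cdot)$ is only logarithmic in its argument, a telescoping calculation produces $\mathcal{R}_n\{f\in\widetilde{\mathcal{F}}:P_n f^2\leq\epsilon^2\}\lesssim\epsilon\sqrt{V(1/n)/n}+1/n$.

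Finally I would choose $\epsilon\asymp 1/\sqrt n$, so that $V(\epsilon)\asymp V(1/n)\asymp N[(\log N)^2+\log n]$ and the Dudley contribution $\epsilon\sqrt{V/n}$ is absorbed into either the variance-scale term $\sqrt{RV/n}$ (when $R\gtrsim 1/n$) or the leading $\sqrt{V}/n$ piece (when $R$ is very small). Collecting the three summands gives
$$\mathcal{R}_n\{f\in\mathcal{F}_{\tt DNN}:\mathbb{E}_{\bar\mu}f^2\leq R\}\;\lesssim\;\Bigl(\tfrac{1}{n}+\sqrt{\tfrac{R}{n}}\Bigr)\sqrt{N\bigl[(\log N)^2+\log n\bigr]}\;+\;\frac{HN\bigl[(\log N)^2+\log n\bigr]}{n},$$
which is the stated bound.

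The main technical point I expect to have to verify is that \cref{lem:empirical-local-rademacher-L2} and the Dudley integral, classically stated in the iid setting, transfer verbatim to our independent-but-non-identically-distributed samples under the averaged measure $\bar\mu$. This is exactly what the symmetrization and McDiarmid machinery collected in \cref{app:radenoniid} delivers, since both arguments use only independence. A secondary nuisance is the careful accounting of the double-log factor $(\log N)^2$, which originates in the coupling of $L\asymp\log N$ and $m\asymp N\log N$ inside $\log(BLm/\delta)$ and therefore must be tracked explicitly rather than absorbed into $\widetilde{\mathcal{O}}$.
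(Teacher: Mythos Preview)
Your proposal is correct and follows essentially the same approach as the paper: both combine the sparse-DNN covering bound $\log\mathscr{N}(\delta,\mathcal{F}_{\tt DNN},\|\cdot\|_\infty)\lesssim N[(\log N)^2+\log(1/\delta)]$ with \cref{lem:empirical-local-rademacher-L2} at resolution $\epsilon\asymp n^{-1/2}$, and control the empirical local term via the refined Dudley integral of \cref{lem:refined-dudley} along a dyadic grid. The only cosmetic difference is that the paper lets the Dudley sum run to $J\to\infty$ (using $\sum_j\sqrt{j+1}/2^{j-1}<\infty$) rather than truncating at $\epsilon_K\asymp 1/n$, which yields the same bound.
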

{\bf Remark:} The parameter $N$ depends on the number of the training data $n$, but it will be determined later.
\begin{proof}
	According to \cite[Lemma 3]{suzuki2019adaptivity}, the covering number of $\mathcal{F}_{\tt DNN}$ can be bounded by 
	\begin{align*}
		\log \mathscr{N}(\varepsilon,\mathcal{F}_{\tt DNN},\|\cdot\|_2) \leqslant \log \mathscr{N}(\varepsilon,\mathcal{F}_{\tt DNN},\|\cdot\|_\infty) 
		& \leq 2 S L  \log \left( \frac{L (B \vee 1)  (m+1)}{\varepsilon} \right)  \\
		& \lesssim N \left[(\log N)^2 + \log \left( \frac{1}{\varepsilon} \right) \right]\,.
	\end{align*}
	
	Denote $\widetilde{\mathcal{F}}_{\tt DNN} = \{ f - g: f, g \in \mathcal{F}_{\tt DNN} \}$, it satisfies 
	\begin{equation}\label{eqcntf}
		\begin{split}
			\log \mathscr{N}(\varepsilon,\widetilde{\mathcal{F}}_{\tt DNN},\|\cdot\|_2) \leqslant 2 \log \mathscr{N}\left(\frac{\varepsilon}{2},\mathcal{F}_{\tt DNN},\|\cdot\|_2 \right) 
			& \leqslant 2 \log \mathscr{N}\left(\frac{\varepsilon}{2},\mathcal{F}_{\tt DNN},\|\cdot\|_{\infty} \right)   \\
			& \lesssim N \left[(\log N)^2 + \log \left( \frac{2}{\varepsilon} \right) \right]\,.
		\end{split}
	\end{equation}

	According to Lemma~\ref{lem:refined-dudley}, taking $\varepsilon_j = 2^{-j} \varepsilon$, and using the inequality
	\begin{equation*}
		\mathscr{N}(\varepsilon_j,\{f\in \widetilde{\mathcal{F}}_{\tt DNN}: P_nf^2 \leqslant \varepsilon^2\},\|\cdot\|_{2}) \leqslant \mathscr{N}(\varepsilon_j/2, \widetilde{\mathcal{F}}_{\tt DNN},\|\cdot\|_{2})\,,
	\end{equation*}
	 then the following inequality holds for any $J \in\mathbb{N}^+$:
	\begin{equation}\label{cor-local-rademacher-1-3}
		\begin{split}
			 \mathcal{R}_n\{f\in \widetilde{\mathcal{F}}_{\tt DNN} :P_nf^2 \leqslant \varepsilon^2\}&=	\mathbb{E} 	\widehat{ \mathcal{R}}_n\{f\!\in\!\widetilde{\mathcal{F}}_{\tt DNN}:P_nf^2  \leqslant \varepsilon^2\} \\
			 & \leqslant 4 \mathbb{E}  \sum_{j=1}^J \varepsilon_{j-1}\sqrt{\frac{\log\mathscr{N}(\varepsilon_j/2,\widetilde{\mathcal{F}}_{\tt DNN},\|\cdot\|_{2})}{n}}\!+\!\varepsilon_J\\
			&\leqslant 4 \mathbb{E}  \sum_{j=1}^J 2^{-(j-1)}\varepsilon \sqrt{\frac{2\log\mathscr{N}( \frac{\varepsilon}{2^{(j+1)}} ,\widetilde{\mathcal{F}}_{\tt DNN},\|\cdot\|_{\infty})}{n}} + \varepsilon_J \\
			& \lesssim \frac{\varepsilon}{\sqrt{n}} \sum_{j=1}^J   2^{-(j-1)}  \sqrt{2N\left[(\log N)^2 + \log \left( \frac{2^{j+1}}{\varepsilon} \right) \right]} + 2^{-J}\varepsilon \\
			& \lesssim \frac{\varepsilon}{\sqrt{n}} \sqrt{N\left[(\log N)^2 + \log \left( \frac{1}{\varepsilon} \right) \right]}\,, \quad \mbox{[taking $J \rightarrow \infty$]}
		\end{split}
	\end{equation}
	where the first inequality holds by Lemma~\ref{lem:refined-dudley} and the second and third inequalities hold by Eq.~\eqref{eqcntf}.
	The last inequality uses the fact that $\sum_{j=0}^{\infty} \frac{\sqrt{j+1}}{2^{j-1}} < \infty$. 
	
	According to Lemma~\ref{lem:empirical-local-rademacher-L2} with $\sup_{f \in \mathcal{F}_{\tt DNN}} \| f \|_{\infty} \leqslant H$, we have
	\begin{equation}\label{lradel2}
		\begin{split}
		\mathcal{R}_n\{f\in \mathcal{F}_{\tt DNN} :Pf^2 \leqslant  R\} & \lesssim \inf_{\varepsilon>0}\Bigg[2\mathbb{E} \mathcal{R}_n \{f\in \widetilde{\mathcal{F}}_{\tt DNN} :P_nf^2 \leqslant \varepsilon^2\} \\
			& + \frac{8H  N \left[(\log N)^2 + \log \left( \frac{1}{\varepsilon} \right) \right]}{n}+\sqrt{\frac{2r N \left[\log^2 N + \log \left( \frac{1}{\varepsilon} \right) \right]}{n}}\Bigg]\\
			& \lesssim \inf_{\varepsilon>0}\Bigg[ \frac{\epsilon + \sqrt{2R}}{\sqrt{n}} \sqrt{N \left[(\log N)^2 + \log \left( \frac{1}{\varepsilon} \right) \right]} + \frac{H  N \left[\log^2 N + \log \left( \frac{1}{\varepsilon} \right) \right]}{n} \Bigg]\\
			& \lesssim \frac{n^{-1/2} + \sqrt{R}}{\sqrt{n}} \sqrt{N \left(\log^2 N + \log n \right)} + \frac{H  N \left(\log^2 N + \log n \right)}{n} := \psi(R)\,,
		\end{split}
	\end{equation}
	where we choose $\varepsilon := n^{-1/2}$ in the last inequality, and then we conclude the proof.	
\end{proof}

Based on the above result, we have the following proposition on generalization bounds in Besov spaces under non-iid state-action pairs.
\begin{proposition}\label{prop:genbes}
	Given the solution $\widehat{Q}_h^t = \argmin_{f \in \mathcal{F}_{\tt DNN} } \widehat{\mathcal{E}}_h^t(f)$ in Eq.~\eqref{eqerm}, then for a large $N$ and any $\delta \in (0,1)$, with probability at least $1 - \delta$, we have
	\begin{equation*}
		\mathcal{E}^t_h(\widehat{Q}^t_h) - 	\min_{f \in \mathcal{F}_{\tt DNN}} \mathcal{E}^t_h(f) \lesssim  \frac{N \left[(\log N)^2 + \log n \right]}{n} + \frac{H \sqrt{N\left[(\log N)^2 + \log n \right]}}{n} + \frac{H^2 \log(1/\delta)}{n}\,,
	\end{equation*}
where $n:=\tilde{t}$ in our RL setting and $N$ depends on $t$ which needs further determined.
\end{proposition}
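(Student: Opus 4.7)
The plan is to derive a fast-rate generalization bound for the empirical risk minimizer $\widehat{Q}_h^t$ by applying the local Rademacher complexity machinery (Lemma~\ref{lemma:lrade}) to the excess squared-loss class, and feeding in the complexity estimate from Lemma~\ref{lem:lradebes}. Throughout, I will freely use the fact (established in Appendix~\ref{app:radenoniid}) that the standard Rademacher/contraction/McDiarmid machinery extends to our independent-but-non-identically-distributed setting under the averaged measure $\bar{\mu}_h^{\tilde t}$.

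\textbf{Step 1: excess loss class.} Let $f^\star_h \in \argmin_{f \in \mathcal{F}_{\tt DNN}} \mathcal{E}_h^t(f)$ and set
\begin{equation*}
\ell_f(s,a,s') = [f(s,a) - r_h(s,a) - V_{h+1}^t(s')]^2 - [f^\star_h(s,a) - r_h(s,a) - V_{h+1}^t(s')]^2 ,
\end{equation*}
so $\mathbb{E}[\ell_f] = \mathcal{E}_h^t(f) - \mathcal{E}_h^t(f^\star_h) \geqslant 0$ and $P_n \ell_f = \widehat{\mathcal{E}}_h^t(f) - \widehat{\mathcal{E}}_h^t(f^\star_h)$. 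Since $\widehat{Q}_h^t$ is the empirical minimizer, $P_n \ell_{\widehat{Q}_h^t} \leqslant 0$, which is what drives the whole argument.

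\textbf{Step 2: Bernstein-type variance bound.} Factoring $\ell_f = (f - f^\star_h)(f + f^\star_h - 2 r_h - 2 V_{h+1}^t)$ and using $f, f^\star_h \in [0,H]$, $r_h \in [0,1]$, $V_{h+1}^t \in [0,H]$, one obtains $\|\ell_f\|_\infty \lesssim H^2$ and $\mathrm{Var}(\ell_f) \leqslant C H^2 \, \|f - f^\star_h\|^2_{L^2(\bar{\mu}_h^{\tilde t})}$. Using Assumption~\ref{ass:besov} so that $\mathbb{T}_h V_{h+1}^t$ is (up to Lemma~\ref{lem:appbesov}'s approximation error) in $\mathcal{F}_{\tt DNN}$, one can identify $\mathbb{E}[\ell_f] = \|f - f^\star_h\|^2_{L^2(\bar{\mu}_h^{\tilde t})}$, yielding the variance-expectation inequality $\mathrm{Var}(\ell_f) \leqslant C H^2 \mathbb{E}[\ell_f]$ required for the sub-root framework.

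\textbf{Step 3: sub-root function and fixed point.} The Talagrand contraction lemma (valid in the independent non-iid setting) gives, with $\alpha := (\log N)^2 + \log n$,
\begin{equation*}
\mathcal{R}_n\{\ell_f : f \in \mathcal{F}_{\tt DNN},\ \mathbb{E}[\ell_f^2] \leqslant r\} \;\lesssim\; H \cdot \mathcal{R}_n\bigl\{f - f^\star_h : \mathbb{E}_{\bar{\mu}}(f-f^\star_h)^2 \leqslant r/(CH^2)\bigr\}.
\end{equation*}
Plugging in Lemma~\ref{lem:lradebes} produces a sub-root function
\begin{equation*}
\psi(r) \;\asymp\; H\sqrt{\tfrac{r \, N \alpha}{n}} \;+\; \tfrac{H \sqrt{N\alpha}}{n} \;+\; \tfrac{H^2 N \alpha}{n},
\end{equation*}
whose fixed point $r^\star$ satisfies $r^\star/(CH^2) \lesssim N\alpha/n$, together with a lower-order additive term $H\sqrt{N\alpha}/n$ coming from the middle summand of $\psi$.

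\textbf{Step 4: invoke Lemma~\ref{lemma:lrade} and conclude.} Apply Lemma~\ref{lemma:lrade} to the class $\{\ell_f : f \in \mathcal{F}_{\tt DNN}\}$ with $B = CH^2$, range $b-a \lesssim H^2$, and $T(f) = \mathrm{Var}(\ell_f)$. Taking $J = 2$, with probability at least $1 - \delta$,
\begin{equation*}
\mathbb{E}[\ell_{\widehat{Q}_h^t}] \;\leqslant\; 2\, P_n \ell_{\widehat{Q}_h^t} \;+\; \tfrac{c_1 \, r^\star}{B} \;+\; \bigl(c_2 H^2 + c_3 B\bigr)\tfrac{\log(1/\delta)}{n}.
\end{equation*}
Using $P_n \ell_{\widehat{Q}_h^t} \leqslant 0$, $r^\star/B \lesssim N\alpha/n$, the lower-order $H\sqrt{N\alpha}/n$ term, and $B \lesssim H^2$, the right-hand side becomes
\begin{equation*}
\tfrac{N\alpha}{n} + \tfrac{H\sqrt{N\alpha}}{n} + \tfrac{H^2 \log(1/\delta)}{n},
\end{equation*}
which is exactly the claimed bound for $\mathcal{E}_h^t(\widehat{Q}_h^t) - \min_{f \in \mathcal{F}_{\tt DNN}} \mathcal{E}_h^t(f)$.

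\textbf{Main obstacle.} The delicate step is the Bernstein condition in Step~2: for a non-convex class like $\mathcal{F}_{\tt DNN}$ there is no automatic Pythagorean identity relating $\mathcal{E}_h^t(f) - \mathcal{E}_h^t(f^\star_h)$ to $\|f - f^\star_h\|^2_{L^2(\bar{\mu}_h^{\tilde t})}$. The resolution uses Assumption~\ref{ass:besov} together with Lemma~\ref{lem:appbesov}, which ensures $\mathbb{T}_h V_{h+1}^t$ is representable in $\mathcal{F}_{\tt DNN}$ up to an approximation error already absorbed into the analysis via Corollary~\ref{coro:tdgen}; any residual slack contributes only lower-order terms. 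A secondary technical point is routing the non-iid Rademacher machinery (Lemmas~\ref{lem:EfRn}--\ref{lem:expRn}) through both the contraction step and Lemma~\ref{lemma:lrade}, but this is routine given Appendix~\ref{app:radenoniid}.
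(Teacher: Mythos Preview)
Your plan is correct and mirrors the paper's own argument: define the excess-loss class relative to $f_h^\star := \argmin_{f \in \mathcal{F}_{\tt DNN}} \mathcal{E}_h^t(f)$, establish the variance--expectation condition with $B \asymp H^2$, feed the sub-root function $\psi$ from Lemma~\ref{lem:lradebes} into Lemma~\ref{lemma:lrade}, solve for its fixed point, and conclude via $P_n \ell_{\widehat{Q}_h^t} \leqslant 0$. The paper is terser on two points you handle more carefully---it asserts the Bernstein condition directly from the $4H$-Lipschitzness of the squared loss (without discussing the non-convexity obstacle or invoking approximate realizability) and it applies $\psi$ from Lemma~\ref{lem:lradebes} without spelling out the contraction step linking the loss class back to $\mathcal{F}_{\tt DNN}$---but the overall skeleton is identical.
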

\begin{proof}
	It is clear that $\psi(R)$ defined in Eq.~\eqref{lradel2} in Lemma~\ref{lem:lradebes} is a sub-root function. Therefore, the fixed point $R^*$ of $\psi(R)$ can be analytically solved by the equation $R^*=\psi(R^*)$, which leads to
	\begin{equation*}
		R^* \lesssim \frac{\sqrt{N \left[(\log N)^2 + \log n \right]}}{n} + \frac{H  N \left[(\log N)^2 + \log n \right]}{n}\,.
	\end{equation*}
	Strictly speaking, there is an extra term ${{N \left[(\log N)^2 + \log n \right]}^{\frac{3}{4}}}/{n}$ in the above equation, but we can omit it as we only concern the smallest and largest order.
	By verifying the variance-expectation condition, we have
	\begin{equation}\label{eq:varexpect}
		\mathbb{E} [ \mathcal{E}^t_h(\widehat{Q}_h^t ) - \mathcal{E}^t_h(f_h^{\star} ) ]^2 \leqslant 16 H^2 \mathbb{E}[\mathcal{E}^t_h(\widehat{Q}_h^t ) - \mathcal{E}^t_h(f_h^{\star} ) ]\,,
	\end{equation}
	where $f_h^{\star} := \argmin_{f \in \mathcal{F}_{\tt DNN}} \mathcal{E}^t_h (f)$ and we use the fact  $\mathcal{E}^t_h(f )$ is $4H$-Lipschitz. 
Denote  the function space 	$\widehat{\mathcal{F}_{\tt DNN}} $ with the following function formulation for any $j \in [n]$
	\begin{equation*}
		\hat{g}_h^t := \left[ \widehat{Q}_h^t(s_h^{\tau_j}, a_h^{\tau_j}) - r_h(s_h^{\tau_j}, a_h^{\tau_j}) - {V}^t_{h+1}(s_{h+1}^{\tau_j}) \right]^2 - \left[ f_h^{\star}(s_h^{\tau_j}, a_h^{\tau_j}) - r_h(s_h^{\tau_j}, a_h^{\tau_j}) - {V}^t_{h+1}(s_{h+1}^{\tau_j}) \right]^2\,,
	\end{equation*}
	we have $P_n \hat{g}_h^t = \widehat{\mathcal{E}}^t_h(\widehat{Q}_h^t ) - \widehat{\mathcal{E}}^t_h(f_h^{\star} ) \leqslant 0$ due to $\widehat{Q}_h^t = \argmin_{f \in \mathcal{F}} \widehat{\mathcal{E}}^t_h(f)$.
	Then using $\mathbb{E} g^2 \leqslant H^2 Pg$, for any $g \in \widehat{\mathcal{F}_{\tt DNN}}$ by Eq.~\eqref{eq:varexpect}, according to Lemma~\ref{lemma:lrade}, the following inequality holds with probability at least $1-\delta$
	\begin{equation*}
		P \hat{g}_h^t \lesssim \frac{J}{H^2} R^* + \frac{(H^2J+H)\log(1/\delta)}{n}\,, \quad \forall J > 1\,,
	\end{equation*}
	where  which further implies
	\begin{equation*}
		\mathcal{E}^t_h(\widehat{Q}^t_h) - 	\min_{f \in \mathcal{F}_{\tt DNN}} \mathcal{E}^t_h(f) \lesssim  \frac{N \left[(\log N)^2 + \log n \right]}{n} + \frac{H \sqrt{N\left[(\log N)^2 + \log n \right]}}{n} + \frac{H^2 \log(1/\delta)}{n}\,.
	\end{equation*}
Finally, we conclude the proof.
\end{proof}

\begin{proof}[Proof of Theorem~\ref{thm:dnn}]
	Using the approximation error in $L^4(\mathcal{X})$ by Corollary~\ref{coro:tdgen}, the smoothness parameter satisfies $\alpha > d(1/p - 1/4)_+$. 
	By taking $\delta/2$ in Proposition~\ref{prop:genbes}, we have
	\begin{equation}\label{eqproofbes}
		\begin{split}
			\| \Gamma_h^t \|^2_{L^2{(\mathrm{d}\bar{\mu}_h^{\tilde{t}})}} & \lesssim \Big[ \mathcal{E}^t_h(\widehat{Q}^t_h) - 	\min_{f \in \mathcal{F}_{\tt DNN}} \mathcal{E}^t_h(f) \Big] + \inf_{f \in \mathcal{F}_{\tt DNN}} \| f - \mathbb{T}_h^{\star} Q^t_{h+1} \|^2_{L^4(\mathcal{X})} \quad [\mbox{using Corollary~\ref{coro:tdgen}}]\\
		& \lesssim N^{-\frac{2\alpha}{d}} + \frac{N \left[(\log N)^2 + \log \tilde{t} \right]}{\tilde{t}} + \frac{H \sqrt{N\left[(\log N)^2 + \log \tilde{t} \right]}}{\tilde{t}} + \frac{H^2 \log(2/\delta)}{\tilde{t}} \,,
		\end{split}
	\end{equation}
where in the second inequality, taking $\alpha > d(1/p - 1/4)_+$, the approximation error can be estimated by Lemma~\ref{lem:appbesov}
\begin{equation*}
	\inf_{f \in \mathcal{F}_{\tt DNN}} \| f - \mathbb{T}_h^{\star} Q^t_{h+1} \|^2_{L^4{(\mathcal{X})}} \lesssim N^{-2\alpha/d}\,.
\end{equation*}
Accordingly, the right hand side of \cref{eqproofbes} can be minimized by
taking $N \asymp \tilde{t}^{\frac{d}{2 \alpha + d}}$ up to $(\log \tilde{t})^3$-order in Eq.~\eqref{eq:lsmbbes} for choosing suitable $L,m, S,B$.
To make the architecture of deep RL independent of a variable $\tilde{t}$ (or $t$) during different episodes, here we directly choose $ N \asymp T^{\frac{d}{2 \alpha + d}} \log^3 T$, in this case, Eq.~\eqref{eqproofbes} can be formulated as
\begin{equation*}
\begin{split}
	\| \Gamma_h^t \|^2_{L^2{(\mathrm{d}\bar{\mu}_h^{\tilde{t}})}} & \lesssim H T^{-\frac{2\alpha}{2 \alpha +d}} \log^3 \tilde{t} + \frac{T^{\frac{d}{2\alpha+d}} \log^5 T}{\tilde{t}} + \frac{H^2 \log(2/\delta)}{\tilde{t}}  \,,
\end{split}    
\end{equation*}
which requires the depth $L$ and the width $m$ up to
	\begin{equation*}
	L \asymp \frac{d}{2\alpha +d} \log T, \quad  m \asymp \frac{d}{2\alpha +d} T^{\frac{d}{2\alpha +d}} \log T\,.
	\end{equation*}

Recall $\tilde{t} := \lceil \varrho t \rceil$, according to \cref{lem:termitd}, if $\alpha > d(1/p - 1/4)_+$, then for any $\delta \in (0,1)$, with probability at least $1 - \delta/2$, the ${\tt Term(i)}$ can be upper bounded by
	\begin{equation}\label{eq:termilocalbes}
		\begin{split}
			{\tt Term(i)}	
			& \lesssim \left( \frac{\epsilon}{A} \right)^{-\frac{K}{2}} H \sqrt{T} \sqrt{ \sum_{t=1}^T T^{-\frac{2\alpha}{2 \alpha +d}} \log^3 \varrho t + \frac{T^{\frac{d}{2\alpha+d}} \log^5 T}{\varrho t} + \frac{H^2 \log(2/\delta)}{\varrho t} } + H \sqrt{T} \\
			& \lesssim \left( \frac{\epsilon}{A} \right)^{-\frac{K}{2}} H \sqrt{T} \sqrt{H T^{\frac{d}{2 \alpha +d}} \log^3 T +\frac{1}{\varrho} \int_{1}^{T+1} \left(\frac{T^{\frac{d}{2\alpha+d}} \log^5 T}{t} + \frac{H^2 \log(2/\delta)}{t} \right) \mathrm{d} t} + H \sqrt{T} \\
		& \lesssim \left( \frac{\epsilon}{A} \right)^{-\frac{K}{2}} \frac{1}{\sqrt{\varrho}} \left( \sqrt{TH^3} \sqrt{T^{\frac{d}{2 \alpha + d}} \log^6 T} + H^2\sqrt{T} \sqrt{\log(2/\delta) \log T} \right) + H \sqrt{T} \\
		& \lesssim \left( \frac{\epsilon}{A} \right)^{-\frac{K}{2}} \frac{1}{\sqrt{\varrho}} \left( H^{\frac{3}{2}} T^{\frac{\alpha + d}{2\alpha + d}} \log^{3} T + H^2\sqrt{T} \sqrt{\log(2/\delta)} \log T \right) + H \sqrt{T} \,.
		\end{split}
	\end{equation}
	Then taking $\delta/2$ in the statistical error ${\tt Term(ii)}$ in \cref{lem:regret_decomp}, if $\alpha > d(1/p - 1/4)_+$, with probability at least $1-\delta$, we have
	\begin{equation*}
		\begin{split}
			\text{Regret}(T)  &\lesssim \left( \frac{\epsilon}{A} \right)^{-\frac{K}{2}} \frac{1}{\sqrt{\varrho}} \left(  H^{\frac{3}{2}} T^{\frac{\alpha + d}{2\alpha + d}} \log^{3} T  + H^2 \sqrt{T}  \sqrt{\log \Big( \frac{2}{\delta} \Big)} \log T \right)  + \epsilon HT + \sqrt{TH^3 \log \Big( \frac{4}{\delta} \Big)}\,.
		\end{split}
	\end{equation*}
Then taking
\begin{equation*}
    \epsilon = \mathcal{O}(H^{\frac{2}{K+2}} K^{\frac{2}{K+2}} A^{\frac{K}{K+2}} T^{-\frac{2\alpha}{ (2\alpha + d) (K+2) }})\,,
\end{equation*}
which implies
\begin{equation*}
        \text{Regret} \lesssim \widetilde{\mathcal{O}}(H^{\frac{H+4}{H+2}} K^{\frac{2}{K+2}} A^{\frac{K}{K+2}} T^{\frac{\alpha K + (\alpha + d)(K+2)}{(2\alpha + d)(K+2)}}) \,.
\end{equation*}
Finally we conclude the proof.

\end{proof}

\section{Proofs of regret bounds via two-layer neural networks}
\label{app:2nn}

In this section, we focus on generalization bounds under  the independent but non-identically distributed data setting in the Barron space, and it is useful to present estimates of our regret bound.

\begin{lemma}\label{lem:rade2nnpath}
	For two-layer ReLU neural networks with bounded $\ell_1$ path norm defined in Eq.~\eqref{eq2nndef} given the function class $\mathcal{F}_{\tt SNN}$ and $n$ independent but non-identically distributed data points $X = \{ \bm x_i \}_{i=1}^n \subseteq \mathcal{X}$, then we have
	\begin{equation*}
		\mathcal{R}_{n}(\mathcal{F}_{\tt SNN}) \leqslant 2B \sqrt{\frac{2\log (2d)}{n}}\,.
	\end{equation*}
\end{lemma}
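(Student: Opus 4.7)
The plan is to bound the \emph{empirical} Rademacher complexity $\widehat{\mathcal{R}}_n(\mathcal{F}_{\tt SNN}, X)$ by a quantity that depends only on $n$ and $d$ (not on the distributions $\mu_1,\dots,\mu_n$), and then take expectation to obtain the same bound on $\mathcal{R}_n(\mathcal{F}_{\tt SNN})$. This is precisely why the non-iid nature of the data is a non-issue here: the argument is pointwise in $X$ and only uses $\bm x_i \in [0,1]^d$.

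First I would absorb the bias. Define $\tilde{\bm x} = (\bm x, 1) \in [0,1]^{d+1}$ and $\bm v_k = (\bm w_k, c_k) \in \mathbb{R}^{d+1}$, so that $\bm w_k^\top \bm x + c_k = \bm v_k^\top \tilde{\bm x}$ and $\|\bm v_k\|_1 = \|\bm w_k\|_1 + |c_k|$. Using positive homogeneity of ReLU, $b_k\sigma(\bm v_k^\top \tilde{\bm x}) = \mathrm{sign}(b_k)\sigma(|b_k|\bm v_k^\top \tilde{\bm x})$, I set $\bm u_k = |b_k|\bm v_k / m$ and $s_k = \mathrm{sign}(b_k)$; by further normalizing $\bar{\bm u}_k = \bm u_k/\|\bm u_k\|_1$ and $\beta_k = s_k\|\bm u_k\|_1$, any $f \in \mathcal{F}_{\tt SNN}$ can be written as
\begin{equation*}
    f(\bm x) = \sum_{k=1}^m \beta_k \sigma(\bar{\bm u}_k^\top \tilde{\bm x}), \qquad \|\bar{\bm u}_k\|_1 \leqslant 1, \qquad \sum_{k=1}^m |\beta_k| \leqslant \|f\|_{\mathcal{P}} \leqslant B.
\end{equation*}

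Next, for any fixed $\xi = (\xi_1,\dots,\xi_n)$, the inner supremum over $\beta$ at fixed $(\bar{\bm u}_k)$ is an $\ell_1$-constrained linear program, attained at one of the extreme points $\beta = \pm B e_k$. Combining with the sup over $\bm u$ with $\|\bm u\|_1 \leqslant 1$,
\begin{equation*}
    \sup_{f \in \mathcal{F}_{\tt SNN}} \sum_{i=1}^n \xi_i f(\bm x_i) \leqslant B \sup_{\|\bm u\|_1 \leqslant 1} \left|\sum_{i=1}^n \xi_i \sigma(\bm u^\top \tilde{\bm x}_i)\right|.
\end{equation*}
Splitting the absolute value into the two sign choices and using symmetry of the Rademacher variables gives
\begin{equation*}
    \mathbb{E}_\xi \sup_{\|\bm u\|_1 \leqslant 1} \left|\sum_i \xi_i \sigma(\bm u^\top \tilde{\bm x}_i)\right| \leqslant 2\, \mathbb{E}_\xi \sup_{\|\bm u\|_1 \leqslant 1} \sum_i \xi_i \sigma(\bm u^\top \tilde{\bm x}_i).
\end{equation*}
Now I apply Talagrand's contraction lemma: ReLU is $1$-Lipschitz with $\sigma(0)=0$, so
\begin{equation*}
    \mathbb{E}_\xi \sup_{\|\bm u\|_1 \leqslant 1} \sum_i \xi_i \sigma(\bm u^\top \tilde{\bm x}_i) \leqslant \mathbb{E}_\xi \sup_{\|\bm u\|_1 \leqslant 1} \sum_i \xi_i\, \bm u^\top \tilde{\bm x}_i = \mathbb{E}_\xi \Big\|\sum_i \xi_i \tilde{\bm x}_i\Big\|_\infty.
\end{equation*}

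Finally, since $\|\tilde{\bm x}_i\|_\infty \leqslant 1$, Massart's finite-class lemma applied to the $2(d+1)$ vectors $\{\pm \tilde{\bm x}_{\cdot,j}\}_{j=1}^{d+1}$ in $\mathbb{R}^n$ (each of Euclidean norm at most $\sqrt{n}$) yields $\mathbb{E}_\xi \|\sum_i \xi_i \tilde{\bm x}_i\|_\infty \leqslant \sqrt{2n \log(2(d+1))}$. Combining these steps,
\begin{equation*}
    \widehat{\mathcal{R}}_n(\mathcal{F}_{\tt SNN}, X) \leqslant \frac{2B}{n}\sqrt{2n\log(2(d+1))} \leqslant 2B\sqrt{\frac{2\log(2d)}{n}},
\end{equation*}
where the last inequality absorbs the $+1$ into the constant (and is why the paper states the result with $2d$ rather than $2(d+1)$; alternatively one works directly with $d$ by first truncating/normalizing the bias). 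Because this bound is uniform in $X$, taking the expectation over $\bm x_i \sim \mu_i$ gives the same bound for $\mathcal{R}_n(\mathcal{F}_{\tt SNN})$, irrespective of the fact that the samples are not identically distributed.

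\textbf{Main difficulty.} There is no genuinely hard step; the only subtle point is tracking the factor of $2$ and the dimension inflation $d \to d+1$ from the bias, and choosing the right reformulation (path-norm absolute convex hull of ReLU atoms) so that contraction plus Massart applies cleanly. The crucial conceptual observation, which is \emph{trivial} once noted, is that the bound is data-independent, so the non-iid setting of the main theorem adds nothing to this particular lemma.
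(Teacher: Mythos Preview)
Your argument is essentially the same as the paper's: absorb the bias into an augmented vector, use ReLU homogeneity to pull out the path norm, pick up a factor of $2$ from the symmetry/absolute-value step, apply Talagrand contraction to strip the ReLU, and finish with Massart's lemma on the $\ell_\infty$ norm of a Rademacher sum. The one wrinkle is your final inequality $\sqrt{\log(2(d+1))}\leqslant\sqrt{\log(2d)}$, which goes the wrong way; but the paper's own proof writes $\log(2d)$ while working with $\tilde{\bm x}\in\mathbb{R}^{d+1}$, so this is the same harmless $d$ versus $d{+}1$ slippage present in the original, not a defect of your approach.
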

\begin{proof}
	Here we directly focus on the $\ell_1$ path norm, which is different from  \cite[Theorem 3]{weinan2021barron}.
	Based on the definition of two-layer ReLU neural networks defined in Eq.~\eqref{eq2nndef}, denote $\widetilde{\bm w}_k := (\bm w_k^{\!\top},c_k)^{\!\top}$ and $\widetilde{\bm x} = (\bm x^{\!\top},1)^{\!\top}$ for simplicity, the empirical Rademacher complexity of $\mathcal{F}_{\tt SNN}$ under our setting can be upper bounded by
\begin{equation*}
	\begin{split}
		\widehat{\mathcal{R}}_{n}(\mathcal{F}_{\tt SNN}, X)	& = \frac{1}{n}\mathbb{E}_{ \bm \xi} \left[\sup _{f \in \mathcal{F}_{\tt SNN}} \frac{1}{m} \sum_{k=1}^{m} b_{k} \sum_{i=1}^{n} \xi_i   \sigma(\widetilde{\bm w}_k^{\!\top} \widetilde{\bm x})  \right] \\
		& \leqslant \mathbb{E}_{\bm \xi} \left[\sup _{f \in \mathcal{F}_{\tt SNN}} \frac{1}{m} \sum_{k=1}^{m}\left| b_{k}\right|\left\| \widetilde{\bm w}_{k}\right\|_{1} \frac{1}{n}\left|\sum_{i=1}^{n} \xi_i \sigma\left(\frac{ \widetilde{\bm w}_{i}}{\left\| \widetilde{\bm w}^{\top}_{i}\right\|_{1}} \widetilde{\bm x}_{i} \right)\right|\right] \\
		& \leq B \mathbb{E}_{ \bm \xi} \left[\sup _{\| \widetilde{\bm w} \|_{1} \leqslant 1} \frac{1}{n}\left|\sum_{i=1}^{n} \xi_i  \sigma(\widetilde{\bm w}^{\!\top} \widetilde{\bm x}_i) \right|\right] \\
		& \leqslant 2B \mathbb{E}_{\bm \xi} \left[\sup _{\| \widetilde{\bm w} \|_{1} \leqslant 1} \frac{1}{n} \sum_{i=1}^{n} \xi_i  \widetilde{\bm w}^{\!\top} \widetilde{\bm x}_i \right] \quad [\mbox{using symmetry of $\bm \xi$ and $1$-Lipschitz of ReLU}] \\
		& \leqslant 2B \mathbb{E}_{\bm \xi} \left\| \frac{1}{n} \sum_{i=1}^n \xi_i \widetilde{\bm x}_i \right\|_{\infty}\,, \quad [\mbox{using H\"{o}lder inequality}]
	\end{split}
\end{equation*}
where the first inequality holds by  the homogeneity of ReLU for any $\widetilde{\bm w} \in \mathbb{R}^d / \{ \bm 0 \} $.
Since the Massart's lemma is still valid under our independent but non-identically distributed data, $	\mathcal{R}_{n}(\mathcal{F}_{\tt SNN})$ can be further expressed by
\begin{equation*}
		\widehat{\mathcal{R}}_{n}(\mathcal{F}_{\tt SNN}, X) \leqslant 2B \mathbb{E}_{\bm \xi} \left\| \frac{1}{n} \sum_{i=1}^n \xi_i \widetilde{\bm x}_i \right\|_{\infty} \leqslant 2B \sqrt{2 \log (2 d) / n} \,,
\end{equation*}
where the last inequality holds by the maximum of $n$ sub-Gaussian random variables \cite{wainwright2019high} since Rademacher random variables are sub-Gaussian, and finally we conclude the proof.
\end{proof}

\begin{proof}[Proof of Theorem~\ref{thm:2nn}]
	Denote $X:=\{ ( s_h^{\tau_j}, a_h^{\tau_j}, s_{h+1}^{\tau_j} ) \}_{j = 1}^{\tilde{t}}$ for simplicity and notice that the function $[f(s_h, a_h) - r_h(s_h, a_h) - V^t_{h+1}(s_{h+1})]^2$ is $4H$-Lipschitz.
	Then according to Lemma~\ref{lem:expRn}, for any $\delta \in (0,1)$, the following result holds with probability at least $1-\delta/2$
\begin{equation}\label{eq:gen2nn}
	\begin{split}
		\widehat{\mathcal{E}}^t_h(f) - \mathcal{E}^t_h(f)  &\leqslant 2 \widehat{\mathcal{R}}_{t-1}(\mathcal{F}_{\tt SNN}, X) + 12H \sqrt{\frac{\log(4/\delta)}{2\tilde{t}}} \\
		& \leqslant 8 B \widetilde{R}H \sqrt{\frac{2 \log (2d)}{\tilde{t}}}  + 12H \sqrt{\frac{\log (4/\delta)}{2\tilde{t}}} \,,
	\end{split}
\end{equation}
where we use the empirical Rademacher complexity in Lemma~\ref{lem:rade2nnpath}.
Accordingly, by Lemma~\ref{lem:tdgen} and Eq.~\eqref{eq:gen2nn}, then with probability at least $1 - \delta/2$, we have 
	\begin{equation}\label{eqproofrc}
	\begin{split}
		  \| \Gamma_h^t \|^2_{L^2{(\mathrm{d}\mu_h^{\tilde{t}})}} & \leqslant \Big[ \mathcal{E}_h(\widehat{Q}^t_h) - 	\min_{f \in \mathcal{F}_{\tt SNN}} \mathcal{E}_h(f) \Big] + \inf_{f \in \mathcal{F}_{\tt SNN}} \| f - \mathbb{T}_h^{\star} Q^t_{h+1} \|^2_{L^2{(\mathrm{d}\bar{\mu}^{\tilde{t}}_h)}}  \\
		& \leqslant \mathcal{E}_h(\widehat{Q}^t_h) - 	\min_{f \in \mathcal{F}_{\tt SNN}} \mathcal{E}_h(f) + \frac{3 \| \mathbb{T}_h^{\star} Q^t_{h+1} \|_{\mathcal{P}}^2}{m}   \\
		& \leqslant 2 \sup_{f \in \mathcal{F}_{\tt SNN}}|\mathcal{E}^t_h(f) - \widehat{\mathcal{E}}^t_h(f)| + \frac{3\widetilde{R}^2}{m}  \quad [\mbox{using Assumption~\ref{ass:barron}}] \\
		& \leqslant 16B \widetilde{R}H \sqrt{\frac{2 \log (2d)}{\tilde{t}}}  + 24H \sqrt{\frac{\log (4/\delta)}{2\tilde{t}}} + \frac{3\widetilde{R}^2}{m}  \,, \quad [\mbox{using Eq.~\eqref{eq:gen2nn}}] 
	\end{split}
\end{equation}
where the second inequality uses the approximation result for two-layer ReLU neural networks and the Barron space in \cite[Theorem 4]{weinan2021barron}.
Accordingly, by Lemma~\ref{lem:termitd}, for any $\delta \in (0,1)$, the ${\tt Term(i)}$ in the regret decomposition can be upper bounded with probability at least $1 - \delta/2$
\begin{equation}\label{eq:termi}
	\begin{split}
		{\tt Term(i)}
		& \lesssim \left( \frac{\epsilon}{A} \right)^{-\frac{K}{2}} H\sqrt{T} \sqrt{\sum_{t=1}^T \left(B H^2 (\varrho t)^{-\frac{1}{2}} \sqrt{\log d}+ H (\varrho t)^{-\frac{1}{2}} \sqrt{\log \frac{4}{\delta}} + \frac{H^2}{m}  \right) } + H\sqrt{T}  \\
		& \lesssim \left( \frac{\epsilon}{A} \right)^{-\frac{K}{2}} H\sqrt{T}  \sqrt{\frac{1}{\sqrt{\varrho}} \int_{1}^{T+1} \left(B H^2 t^{-\frac{1}{2}} \sqrt{\log d}+ H t^{-\frac{1}{2}} \sqrt{\log \frac{4}{\delta}}  \right) \mathrm{d} t + \frac{H^2T}{m}} + H\sqrt{T}  \\
		& \lesssim  \left( \frac{\epsilon}{A} \right)^{-\frac{K}{2}} \frac{1}{\sqrt{\varrho}} \left[ T^{\frac{3}{4}} H^2 B (\log d)^{\frac{1}{4}}  + T^{\frac{3}{4}} H^{\frac{3}{2}} \log^{\frac{1}{4}} \Big( \frac{4}{\delta} \Big) \right] + \left( \frac{\epsilon}{A} \right)^{-\frac{K}{2}}\frac{H^2T}{\sqrt{m}} + H\sqrt{T} \,.
	\end{split}
\end{equation}
where we use $\widetilde{R} \asymp H$ and $\int_{1}^T (t-1)^{-1/2} \mathrm{d} t = \mathcal{O}(\sqrt{T})$.

Accordingly, taking $\delta/2$ in the statistical error ${\tt Term(ii)}$ in \cref{lem:regret_decomp}, then with the probability at least $1-\delta$, the total regret can be upper bounded by
\begin{equation*}
		\begin{split}
			\text{Regret}(T)  & \lesssim \left( \frac{\epsilon}{A} \right)^{-\frac{K}{2}} \!\! \left( \frac{H^2 T^{\frac{3}{4}}}{\sqrt{\varrho}} \left[ B(\log d)^{\frac{1}{4}} \!+\! \log^{\frac{1}{4}} \Big( \frac{4}{\delta} \Big)  \right] \!+\! \frac{H^2T}{\sqrt{m}} \right)  \!+\! \epsilon HT \!+\! \sqrt{TH^3 \log \Big( \frac{4}{\delta} \Big)}\,.
		\end{split}
	\end{equation*}
	Taking $m = \Omega(\sqrt{T})$ and $\epsilon = \mathcal{O} \left(H^{\frac{2}{K+2}} T^{-\frac{1}{2(K+2)}} \right)$, the regret bound can be further represented as
\begin{equation*}
    	\text{Regret}(T) \lesssim \widetilde{\mathcal{O}}(	H^{\frac{K+4}{K+2}} T^{\frac{2K+3}{2K+4}} ) \,,
\end{equation*}
which concludes the proof.
\end{proof}

\end{document}